\DeclareMathOperator{\Trace}{tr} 
\newcommand{\emc}[1]{{\textbf{\textit{\color[rgb]{0,.3,.6}#1}}}}
\newcommand{\qed}{\hfill \ensuremath{\Box}} 
\newtheorem{theorem}{Theorem}
\newtheorem{proposition}[theorem]{Proposition}
\newtheorem{theoremappendix}{Theorem}
\newtheorem{propositionappendix}[theoremappendix]{Proposition}
\newenvironment{proof}[1][Proof]{\begin{trivlist}
\item[\hskip \labelsep {\bfseries #1}]}{\end{trivlist}}
\definecolor{identifiercolor}{rgb}{.4,.6,.56}
\definecolor{stringcolor}{gray}{0.5}
\definecolor{inactivecolor}{rgb}{0.15,0.15,0.5}
\begin{document}
\title{Self-Distillation Amplifies Regularization in Hilbert Space\footnote{This article is a more detailed version of a paper with the same title in \emc{Neural and Information Processing Systems (NeurIPS) 2020} conference.}}

\author{\vspace{0.2in}Hossein Mobahi$^\clubsuit$ \quad Mehrdad Farajtabar$^{\mathsection}$ \quad Peter L.~Bartlett$^{\clubsuit\,\ddag}$\\
\vspace{-0.1in}
{\color{magenta}\texttt{hmobahi@google.com} \quad \texttt{farajtabar@google.com} \quad \texttt{bartlett@eecs.berkeley.edu}} \\
\vspace{0.2in}
$^\clubsuit$Google Research, Mountain View, CA, USA\\
$^{\mathsection}$ DeepMind, Mountain View, CA, USA\\
$^\ddag$EECS Dept., University of California at Berkeley, Berkeley, CA, USA
}
\date{}
\maketitle

\begin{abstract}
Knowledge distillation introduced in the deep learning context is a method to transfer knowledge from one architecture to another. In particular, when the architectures are identical, this is called self-distillation. The idea is to feed in predictions of the trained model as new target values for retraining (and iterate this loop possibly a few times). It has been empirically observed that the self-distilled model often achieves higher accuracy on held out data. Why this happens, however, has been a mystery: the self-distillation dynamics does not receive any new information about the task and solely evolves by looping over training. To the best of our knowledge, there is no rigorous understanding of this phenomenon. This work provides the first theoretical analysis of self-distillation. We focus on fitting a nonlinear function to training data, where the model space is Hilbert space and fitting is subject to $\ell_2$ regularization in this function space. We show that self-distillation iterations modify regularization by progressively limiting the number of basis functions that can be used to represent the solution. This implies (as we also verify empirically) that while a few rounds of self-distillation may reduce over-fitting, further rounds may lead to under-fitting and thus worse performance.
\end{abstract}

\section{Introduction}

\paragraph{\bf Knowledge Distillation.} Knowledge distillation was introduced in the deep learning setting \cite{KnowledgeDistillation} as a method for transferring knowledge from one architecture (teacher) to another (student), with the student model often being smaller (see also \cite{compression} for earlier ideas). This is achieved by training the student model using the output probability distribution of the teacher model in addition to original labels. The student model benefits from this ``dark knowledge'' (extra information in soft predictions) and often performs better than if it was trained on the actual labels. 

\paragraph{\bf Extensions.} Various extensions of this approach have been recently proposed, where instead of output predictions, the student tries to match other statistics from the teacher model such as intermediate feature representations \cite{Romero2014FitNetsHF}, Jacobian matrices \cite{DBLP:journals/corr/abs-1803-00443}, distributions \cite{DBLP:journals/corr/HuangW17a}, Gram matrices \cite{Yim2017AGF}. Additional developments on knowledge distillation include its extensions to Bayesian settings \cite{NIPS2015_5965, vadera2020generalized}, uncertainty preservation \cite{tran2020hydra}, reinforcement learning \cite{hong2020collaborative, NIPS2017_7036, ghosh2018divideandconquer}, online distillation \cite{NIPS2018_7980}, zero-shot learning \cite{pmlr-v97-nayak19a}, multi-step knowledge distillation~\cite{takd}, tackling catastrophic forgetting \cite{Li2016LearningWF}, transfer of relational knowledge~\cite{Park2019RelationalKD}, adversarial distillation \cite{NIPS2018_7358}. Recently \cite{Phuong19} analyzed why the student model is able to mimic teacher model in knowledge distillation and \cite{Krishna2020} presented a statistical perspective on distillation.

\paragraph{\bf Self-Distillation.} The special case when the teacher and student architectures are identical is called\footnote{The term self-distillation has been used to refer a range of related ideas in the recent literature. We adopt the formulation of \cite{born-again-2018}, which is explained in our Section \ref{sec:self-distillation}. Self-distillation, which is defined in a supervised setting, can be considered an extension of self-training that is used unsupervised or semi-supervised learning.} \emc{self-distillation}. The idea is to feed in predictions of the trained model as new target values for retraining (and iterate this loop possibly a few times). It has been consistently observed that the self-distilled often achieves higher accuracy on held out data \cite{born-again-2018, YangXQY19, Ahn2019VariationalID}. Why this happens, however, has been a mystery: the self-distillation dynamics does not receive any new information about the task and solely evolves by looping over training. There have been some recent attempts to understand the mysteries around distillation. \cite{gotmare2018a} have empirically observed that the dark knowledge transferred by the teacher is localized mainly in higher layers and does not affect early (feature extraction) layers much. \cite{born-again-2018} interprets dark knowledge as importance weighting. \cite{Dong2019DistillationE} shows that early-stopping is crucial for reaching dark-knowledge of self-distillation. \cite{Abnar2020} empirically study how inductive biases are transferred through distillation. Ideas similar to self-distillation have been used in areas besides modern machine learning but with different names such diffusion and boosting in both the statistics and image processing communities \cite{Peyman13}.

\paragraph{\bf Contributions.} Despite interesting developments, why distillation can improve generalization remains elusive. To the best of our knowledge, there is no rigorous understanding of this phenomenon. This work provides a \emc{theoretical analysis of self-distillation}.  While originally observed in deep learning, we argue that this is a {\bf fundamental phenomenon} that can occur even in classical regression settings, where we fit a nonlinear function to training data with models belonging to a Hilbert space and using $\ell_2$ regularization in this function space. In this setting we show that the {\bf self-distillation iterations progressively limit the number of basis functions used to represent the solution}. This implies (as we also verify empirically) that while {\bf a few rounds of self-distillation may reduce over-fitting, further rounds may lead to under-fitting and thus worse performance}. To prove our results, we show that self-distillation leads to a non-conventional power iteration where the linear operation changes dynamically; each step depends intricately on the results of earlier linear operations via a nonlinear recurrence. While this recurrence has no closed form solution, we provide bounds that allow us to prove our sparsity guarantees. We also prove that using {\bf lower training error across distillation steps generally improves the sparsity effect}, and specifically we provide a closed form bound on the sparsity level as the training error goes to zero. Finally, we discuss how our regularization results can be translated into {\bf generalization bounds}.

\paragraph{\bf Organization.} In Section~\ref{sec:setup} we setup a variational formulation for nonlinear regression and discuss the existence of non-trivial solutions for it. Section~\ref{sec:self-distillation} presents our main technical results. It begins by formalizing the self-distillation process in our regression setup. It then shows that self-distillation iterations cannot continue indefinitely; at some point the solution collapses. After that, it provides a lower bound on the number of distillation iterations before the solution collapses. In addition, it shows that the basis functions initially used to represent the solution gradually change to a more sparse representation. It then demonstrates how operating in the near-interpolation regime throughout self-distillation ultimately helps with achieving higher sparsity. At the end of the section, we discuss how our regularization results can be translated into generalization bounds. In Section~\ref{sec:illustrative}, we walk through a toy example where we can express its solution as well as sparsity of its basis coefficients exactly and analytically over the course of self-distillation; empirically verifying the theoretical results. Section~\ref{sec:experiments} draws connections between our setting and the NTK regime of neural networks. This motivates subsequent experiments on deep neural networks in that section, where the observed behavior is consistent with the regularization viewpoint the paper provides.

\paragraph{\bf Supplemental.} To facilitate the presentation of analyses in Sections~\ref{sec:setup} and~\ref{sec:self-distillation}, we present our results in small steps as propositions and theorems. Their full {\bf proofs} are provided in the supplementary appendix. In addition, {\bf code} to generate the illustrative example of Sections~\ref{sec:illustrative} and~\ref{sec:experiments} are available in the supplementary material. 

\section{Problem Setup}
\label{sec:setup}

We first introduce some notation. We denote a set by $\mathcal{A}$, a matrix by $\boldsymbol{A}$, a vector by $\boldsymbol{a}$, and a scalar by $a$ or $A$. The $(i,j)$'th component of a matrix is denoted by $\boldsymbol{A}[i,j]$ and the $i$'th component of a vector by $\boldsymbol{a}[i]$. Also $\|\,.\,\|$ refers to $\ell_2$ norm of a vector. We use $\triangleq$ to indicate equal by definition. A linear operator $L$ applied to a function $f$ is shown by $[L f]$, and when evaluated at point $x$ by $[L f](x)$. For a positive definite matrix $\boldsymbol{A}$, we use $\kappa$ to refer to its condition number $\kappa \triangleq \frac{d_{\max}}{d_{\min}}$, where $d$'s are eigenvalues of $\boldsymbol{A}$.

Consider a finite training set $\mathcal{D} \triangleq \cup_{k=1}^K \{(\boldsymbol{x}_k, y_k)\}$, 
where $\boldsymbol{x}_k \in \mathcal{X} \subseteq \mathbb{R}^d$ and $y_k \in \mathcal{Y} \subseteq \mathbb{R}$. Consider a space of all admissible functions (as we define later in this section) $\mathcal{F}:\mathcal{X} \rightarrow \mathcal{Y}$. The goal is to use this training data to find a function $f^*: \mathcal{X} \rightarrow \mathcal{Y}$ that approximates the underlying mapping from $\mathcal{X}$ to $\mathcal{Y}$. We assume the function space $\mathcal{F}$ is rich enough to contain multiple functions that can fit finite training data. Thus, the presence of an adequate inductive bias is essential to guide the training process towards solutions that generalize. We infuse such bias in training via regularization. Specifically, we study regression problems of form\footnote{
Our choice of setting up learning as a constrained optimization rather than unconstrained form  $\frac{1}{K} \sum_k \big( f(\boldsymbol{x}_k) - y_k \big)^2 \,+\, c \, R(f)$ is motivated by the fact that we often have control over $\epsilon$ as a user-specified stopping criterion. In fact, in the era of overparameterized models, we can often fit training data to a desired $\epsilon$-optimal loss value \cite{zhang2016understanding}. However, if we adopt the unconstrained setting, it is unclear what value of $c$ would correspond to a particular stopping criterion.},
\begin{equation}
\label{eq:goal_regress}
f^* \triangleq \arg\min_{f \in \mathcal{F}} R(f) \quad \mbox{s.t.} \quad \frac{1}{K} \sum_k \big( f(\boldsymbol{x}_k) - y_k \big)^2 \leq \epsilon  \,,
\end{equation}
where $R: \mathcal{F} \rightarrow \mathbb{R}$ is a regularization functional, and $\epsilon > 0$ is a desired loss tolerance. We study regularizers with the following form,
\begin{equation}
\label{eq:regularizer}
R(f) = \int_{\mathcal{X}} \int_{\mathcal{X}} u(\boldsymbol{x},\boldsymbol{x}^\dag) f(\boldsymbol{x}) f(\boldsymbol{x}^\dag) \, d \boldsymbol{x} \, d \boldsymbol{x}^\dag\,,
\end{equation}
with $u$ being such that $\forall f \in \mathcal{F} \,;\,  R(f) \geq 0$ with {\it equality} only when $f(\boldsymbol{x})=0$. Without loss of generality\footnote{If $u$ is not symmetric, we define a new function $u^\diamond \triangleq \frac{1}{2} \big( u(\boldsymbol{x},\boldsymbol{x}^\dag)+u(\boldsymbol{x}^\dag,\boldsymbol{x}) \big)$ and work with that instead. Note that $u^\diamond$ is symmetric and satisfies $R_u(f)=R_{u^\diamond}(f)$.}, we assume $u$ is symmetric $u(\boldsymbol{x},\boldsymbol{x}^\dag)=u(\boldsymbol{x}^\dag,\boldsymbol{x})$. For a given $u$, the space $\mathcal{F}$ of admissible functions are $f$'s for which the double integral in (\ref{eq:regularizer}) is bounded.

The conditions we imposed on $R(f)$ implies that the operator $L$ defined as $[L f] \triangleq \int_{\mathcal{X}} u(\boldsymbol{x},\,.\,) f(\boldsymbol{x}) \, d \boldsymbol{x}$ has an empty null space\footnote{This a technical assumption for simplifying the exposition. If the null space is non-empty, one can still utilize it using \cite{Girosi95}.}. The symmetry and non-negativity conditions together are called \emc{Mercer's condition} and $u$ is called a kernel. Constructing $R$ via kernel $u$ can cover a wide range of regularization forms including\footnote{To see that, let's rewrite $\int_{\mathcal{X}} \sum_j w_j \big( P_j f(\boldsymbol{x}) \big)^2 \, d \boldsymbol{x}$ by a more compact form $\sum_j w_j \, \langle\, P_j f \,,\, P_j f \, \rangle$. Observe that $\sum_j w_j \, \langle\, P_j f \,,\, P_j f \, \rangle = \sum_j w_j \, \langle\, f \,,\, P_j^* P_j f \, \rangle =  \, \langle\, f \,,\, (\sum_j w_j P_j^* P_j) f \, \rangle = \, \langle\, f \,,\, U f \, \rangle$, where $P^*_j$ denotes the adjoint operator of $P_j$, and $U \triangleq \sum_j w_j P_j^* P_j$. Notice that $P_j^* P_j$ is a positive definite operator. Scaling it by the non-negative scalar $w_j$ still keeps the resulted operator positive definite. Finally, a sum of positive-definite operators is positive definite. Thus $U$ is a positive definite operator. Switching back to the integral notation, this gives exactly the requirement we had on choosing $u$,
\begin{equation}
\forall f \in \mathcal{F} \,;\, \int_{\mathcal{X}} u(\boldsymbol{x},\boldsymbol{x}^\dag) f(\boldsymbol{x}) f(\boldsymbol{x}^\dag) \, d \boldsymbol{x} \, d \boldsymbol{x}^\dag \geq 0 \,. \nonumber
\end{equation}
},
\begin{equation}
R(f) = \int_{\mathcal{X}} \sum_{j=1}^J w_j \big( [P_j f](\boldsymbol{x}) \big)^2 \, d \boldsymbol{x} \,,
\end{equation}
where $P_j$ is some linear operator (e.g. a differential operator to penalize non-smooth functions as we will see in Section \ref{sec:illustrative}), and $w_j \geq 0$ is some weight, for $j=1,\dots,J$ operators.

Plugging $R(f)$ into the objective functional leads to the following variational problem,
\begin{eqnarray}
\label{eq:var_objective}
f^* \triangleq \arg\min_{f \in \mathcal{F}} & \int_{\mathcal{X}} \int_{\mathcal{X}} u(\boldsymbol{x},\boldsymbol{x}^\dag) f(\boldsymbol{x}) f(\boldsymbol{x}^\dag)  d \boldsymbol{x}  d \boldsymbol{x}^\dag \nonumber \\
\label{eq:var_constraint}
& \mbox{s.t.} \frac{1}{K} \sum_k \big( f(\boldsymbol{x}_k) - y_k \big)^2 \leq \epsilon \,.
\end{eqnarray}
The Karush-Kuhn-Tucker (KKT) condition for this problem yields,
\begin{eqnarray}
\label{eq:var_problem_kkt_objective}
& & f^*_\lambda \triangleq \arg\min_{f \in \mathcal{F}} \, \frac{\lambda}{K} \sum_k \big( f(\boldsymbol{x}_k) - y_k \big)^2 \\
& & \,+\, \int_{\mathcal{X}} \int_{\mathcal{X}} u(\boldsymbol{x},\boldsymbol{x}^\dag) f(\boldsymbol{x}) f(\boldsymbol{x}^\dag) \, d \boldsymbol{x} \, d \boldsymbol{x}^\dag \\
\label{eq:var_problem_kkt_constraint}
\mbox{s.t.}& & \lambda \geq 0  \quad,\quad \frac{1}{K} \sum_k \big( f(\boldsymbol{x}_k) - y_k \big)^2 \leq \epsilon\\
& & \lambda \big(\frac{1}{K} \sum_k \big( f(\boldsymbol{x}_k) - y_k \big)^2-\epsilon \big)=0  \,.
\end{eqnarray}

\subsection{Existence of Non-Trivial Solutions}
\label{sec:non-trivial}
Stack training labels into a vector,
\begin{equation}
\label{eq:stacked_y}
\boldsymbol{y}_{\color{red}K \times 1} \triangleq [ \, y_1 \,|\, y_2 \,|\, \dots \,|\, y_K]\,.
\end{equation}
It is obvious that when $\frac{1}{K} \| \boldsymbol{y}\|^2 \leq \epsilon$, then $f^*$ has trivial solution $f^*(\boldsymbol{x})=0$, which we refer to this case as \emc{collapse} regime. In the sequel, we focus on the more interesting case of $\frac{1}{K} \| \boldsymbol{y}\|^2 > \epsilon$. It is also easy to verify that collapsed solution is tied to $\lambda=0$,
\begin{equation}
\label{eq:lambda_collapse}
\| \boldsymbol{y} \|^2 > K \, \epsilon \quad \Leftrightarrow \quad \lambda > 0 \,.
\end{equation}
Thus by taking any $\lambda>0$ that satisfies $\frac{1}{K} \sum_k \big( f^*_\lambda(\boldsymbol{x}_k) - y_k \big)^2-\epsilon =0 $, the following form $f^*_\lambda$ is an optimal solution to the problem (\ref{eq:var_objective}), i.e. $f^*=f^*_\lambda$.
\begin{eqnarray}
\label{eq:f_star_again}
f^*_\lambda&=& \arg\min_{f \in \mathcal{F}} \, \frac{\lambda}{K} \sum_k \big( f(\boldsymbol{x}_k) - y_k \big)^2 \\
& & \,+\, \int_{\mathcal{X}} \int_{\mathcal{X}} u(\boldsymbol{x},\boldsymbol{x}^\dag) f(\boldsymbol{x}) f(\boldsymbol{x}^\dag) \, d \boldsymbol{x} \, d \boldsymbol{x}^\dag \,.
\end{eqnarray}

\subsection{Closed Form of $f^*$}

In this section we present a closed form expression for (\ref{eq:f_star_again}). Since we are considering $\lambda>0$, without loss of generality, we can divide the objective function in (\ref{eq:f_star_again}) by $\lambda$ and let $c \triangleq 1/\lambda$; obviously $c>0$. 
\begin{eqnarray}
\label{eq:f_star_in_c}
f^* &=& \arg\min_{f \in \mathcal{F}} \, \frac{1}{K} \sum_k \big( f(\boldsymbol{x}_k) - y_k \big)^2 \nonumber \\
& &\,+\,c\, \int_{\mathcal{X}} \int_{\mathcal{X}} u(\boldsymbol{x},\boldsymbol{x}^\dag) f(\boldsymbol{x}) f(\boldsymbol{x}^\dag) \, d \boldsymbol{x} \, d \boldsymbol{x}^\dag \,.
\end{eqnarray}

The variational problem (\ref{eq:f_star_in_c}) has appeared in machine learning context extensively \cite{Girosi95}. It has a known solution form, due to representer theorem \cite{representer}, which we will present here in a proposition. However, we first need to introduce some definitions.  Let $g(\boldsymbol{x},\boldsymbol{t})$ be a function such that,
\begin{equation}
\int_{\mathcal{X}} u(\boldsymbol{x},\boldsymbol{x}^\dag) \, g(\boldsymbol{x}^\dag,\boldsymbol{t}) \, d \boldsymbol{x}^\dag = \delta(\boldsymbol{x} - \boldsymbol{t}) \,,
\end{equation}
where $\delta(\boldsymbol{x})$ is Dirac delta. Such $g$ is called the \emc{Green's function}\footnote{We assume that the Green's function exists and is continuous. Detailed treatment of existence conditions is beyond the scope of this work and can be found in text books such as \cite{duffy2001green}.} of the linear operator $L$, with $L$ being $[L f](\boldsymbol{x}) \triangleq \int_{\mathcal{X}} u(\boldsymbol{x},\boldsymbol{x}^\dag) \, f(\boldsymbol{x}^\dag) \, d \boldsymbol{x}^\dag$. Let the matrix $\boldsymbol{G}_{\color{red}K \times K}$ and the vector ${\boldsymbol{g}_{\boldsymbol{x}}}_{\color{red}K \times 1}$ be defined as,
\begin{eqnarray}
\boldsymbol{G}[j,k] &\triangleq& \frac{1}{K} \, g(\boldsymbol{x}_j, \boldsymbol{x}_k) \\
\boldsymbol{g}_{\boldsymbol{x}} [k] &\triangleq& \frac{1}{K} \, g (\boldsymbol{x}, \boldsymbol{x}_k) \,.
\end{eqnarray}

\begin{tcolorbox}[colback=green!8!white,colframe=black]
\begin{proposition}
\label{prop:f_star}
The variational problem (\ref{eq:f_star_in_c}) has a solution of the form,
\begin{equation}
\label{eq:sol_f_star}
f^*(\boldsymbol{x}) \,=\, \boldsymbol{g}_{\boldsymbol{x}}^T (c \boldsymbol{I} + \boldsymbol{G})^{-1} \boldsymbol{y} \,.
\end{equation}
\end{proposition}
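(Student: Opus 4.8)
The plan is to treat (\ref{eq:f_star_in_c}) as an unconstrained convex variational problem and to derive its minimizer from a first-order stationarity (Euler--Lagrange) condition. Convexity is what legitimizes this route: the data term is a convex quadratic in the values $f(\boldsymbol{x}_k)$, and by Mercer's condition the regularizer $\langle f, Lf\rangle$ is a nonnegative quadratic form, so the objective is convex and any stationary point is a global minimizer. Since $L$ has empty null space the regularizer is in fact strictly convex, which will simultaneously hand us uniqueness.

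First I would compute the first variation. Perturbing $f \mapsto f + \eta h$ for an arbitrary admissible test function $h$ and differentiating the objective at $\eta = 0$, the data term contributes $\frac{2}{K}\sum_k \big(f(\boldsymbol{x}_k) - y_k\big)\, h(\boldsymbol{x}_k)$ and, using the symmetry of $u$, the regularizer contributes $2c\int_{\mathcal{X}} [Lf](\boldsymbol{x})\, h(\boldsymbol{x})\, d\boldsymbol{x}$. To place both terms on equal footing I would rewrite the pointwise evaluations through the Dirac delta, $h(\boldsymbol{x}_k) = \int_{\mathcal{X}} \delta(\boldsymbol{x}-\boldsymbol{x}_k)\, h(\boldsymbol{x})\, d\boldsymbol{x}$. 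Demanding that the variation vanish for every $h$ forces the bracketed integrand to vanish, giving the stationarity condition
\[
c\,[Lf](\boldsymbol{x}) = \frac{1}{K}\sum_k \big( y_k - f(\boldsymbol{x}_k)\big)\, \delta(\boldsymbol{x}-\boldsymbol{x}_k)\,.
\]

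Next I would invert $L$ using its Green's function. Convolving both sides against $g$ and invoking the defining identity $\int_{\mathcal{X}} u(\boldsymbol{x},\boldsymbol{x}^\dag)\, g(\boldsymbol{x}^\dag,\boldsymbol{t})\, d\boldsymbol{x}^\dag = \delta(\boldsymbol{x}-\boldsymbol{t})$ collapses the deltas and produces the representer form $f(\boldsymbol{x}) = \frac{1}{cK}\sum_k \big(y_k - f(\boldsymbol{x}_k)\big)\, g(\boldsymbol{x},\boldsymbol{x}_k)$, that is $f(\boldsymbol{x}) = \frac{1}{c}\, \boldsymbol{g}_{\boldsymbol{x}}^T(\boldsymbol{y}-\boldsymbol{f})$ where $\boldsymbol{f}\triangleq[\,f(\boldsymbol{x}_1)\,|\,\dots\,|\,f(\boldsymbol{x}_K)\,]$. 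The only unknowns are now the $K$ values in $\boldsymbol{f}$. Evaluating the last identity at each $\boldsymbol{x}_j$ and stacking over $j$ gives the finite linear system $\boldsymbol{f} = \frac{1}{c}\boldsymbol{G}(\boldsymbol{y}-\boldsymbol{f})$, whose solution is $\boldsymbol{f} = (c\boldsymbol{I}+\boldsymbol{G})^{-1}\boldsymbol{G}\boldsymbol{y}$; here invertibility holds because $c>0$ and $\boldsymbol{G}$ is positive semidefinite, the Green's function of a positive-definite operator being itself a positive-definite kernel. Substituting $\boldsymbol{y}-\boldsymbol{f} = c\,(c\boldsymbol{I}+\boldsymbol{G})^{-1}\boldsymbol{y}$ back into $f(\boldsymbol{x}) = \frac{1}{c}\boldsymbol{g}_{\boldsymbol{x}}^T(\boldsymbol{y}-\boldsymbol{f})$ cancels the factor $c$ and yields exactly (\ref{eq:sol_f_star}).

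I expect the main obstacle to be making the variational step rigorous rather than the algebra that follows it. The delicate points are justifying differentiation under the integral sign and the interchange of integration orders on the infinite-dimensional space $\mathcal{F}$, and assigning precise distributional meaning to the stationarity condition and to ``convolving against the Green's function,'' since $\delta$ is not a classical function. Existence and continuity of $g$ are already assumed in the footnote, so I would lean on that and carry out the delta manipulations formally; the convexity argument then guarantees that the single candidate we produce is the genuine global minimizer, and strict convexity makes it unique, which closes the proof.
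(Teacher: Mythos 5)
Your proposal is correct and follows essentially the same route as the paper's proof: a first-variation (Euler--Lagrange) computation yielding the stationarity condition, inversion of $L$ via the Green's function identity, and reduction to a $K\times K$ linear system solved by matrix inversion. The only differences are cosmetic---you solve the system in the fitted values $\boldsymbol{f}$ where the paper works with the residuals $r_k = f^*(\boldsymbol{x}_k)-y_k$---and your convexity remark, which the paper omits, cleanly upgrades stationarity from a necessary condition to global optimality (with uniqueness).
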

\end{tcolorbox}

Notice that the matrix $\boldsymbol{G}$ is \emc{positive definite}\footnote{This property of $\boldsymbol{G}$ comes from the fact that $u$ is a positive definite kernel (definite instead of semi-definite, due to empty null space assumption on the operator $L$), thus so is its inverse (i.e. $g$). Since $g$ is a kernel, its associated Gram matrix is positive definite.}. Since by definition $c > 0$, the inverse of the matrix $c \boldsymbol{I} + \boldsymbol{G}$ is well-defined. Also, because $\boldsymbol{G}$ is positive definite, it can be diagonalized as,
\begin{equation}
\boldsymbol{G} = \boldsymbol{V}^T \boldsymbol{D} \boldsymbol{V} \,,
\end{equation} 
where the diagonal matrix $\boldsymbol{D}$ contains the eigenvalues of $\boldsymbol{G}$, denoted as $d_1,\dots,d_K$ that are strictly greater than zero, and the matrix $\boldsymbol{V}$ contains the corresponding eigenvectors.

\subsection{Bounds on Multiplier $c$}

Earlier we showed that any $c>0$ that is a root of $\frac{1}{K} \sum_k \big( f^*_c(\boldsymbol{x}_k) - y_k \big)^2-\epsilon =0$ produces an optimal solution $f^*$ via (\ref{eq:f_star_in_c}). However, in the settings that we are interested in, we do not know the underlying $c$ or $f^*$ beforehand; we have to relate them to the given training data instead. As we will see later in Proposition~\ref{prop:lower_bound_ut}, knowledge of $c$ allows us to resolve the recurrence on $\boldsymbol{y}$ created by self-distillation loop and obtain an explicit bound $\|\boldsymbol{y}\|$ at each distillation round. Unfortunately finding $c$ in closed form by seeking roots of $\frac{1}{K} \sum_k \big( f^*_c(\boldsymbol{x}_k) - y_k \big)^2-\epsilon =0$ w.r.t. $c$ is impossible, due to the nonlinear dependency of $f$ on $c$ caused by matrix inversion; see (\ref{eq:sol_f_star}). However, we can still provide bounds on the value of $c$ as shown in this section.

Throughout the analysis, it is sometimes easier to work with rotated labels $\boldsymbol{V} \boldsymbol{y}$. Thus we define,
\begin{equation}
\boldsymbol{z} \triangleq \boldsymbol{V} \boldsymbol{y} \,.
\end{equation}
Note that any result on $\boldsymbol{z}$ can be easily converted back via $\boldsymbol{y} = \boldsymbol{V}^T \boldsymbol{z}$, as $\boldsymbol{V}$ is an orthogonal matrix. Trivially $\|\boldsymbol{z}\|=\|\boldsymbol{y}\|$.
Our first step is to present a simplified form for the error term $\frac{1}{K} \sum_k \big( f^*(\boldsymbol{x}_k) - y_k \big)^2$ using the following proposition.

\begin{tcolorbox}[colback=green!8!white,colframe=black]
\begin{proposition}
\label{prop:simple_error}
The following identity holds,
\begin{equation}
\frac{1}{K} \sum_k \big( f^*(\boldsymbol{x}_k) - y_k \big)^2 \,=\, \frac{1}{K} \, \sum_k (z_k \, \frac{c}{c+d_k} )^2 \,.
\end{equation}
\end{proposition}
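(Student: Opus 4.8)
The plan is to evaluate $f^*$ at the training points, assemble these values into a vector, reduce the residual to a single matrix expression, and then diagonalize. First I would observe that, by the definitions $\boldsymbol{g}_{\boldsymbol{x}}[k] = \frac{1}{K} g(\boldsymbol{x}, \boldsymbol{x}_k)$ and $\boldsymbol{G}[j,k] = \frac{1}{K} g(\boldsymbol{x}_j, \boldsymbol{x}_k)$, the vector $\boldsymbol{g}_{\boldsymbol{x}_j}$ obtained by evaluating $\boldsymbol{g}_{\boldsymbol{x}}$ at $\boldsymbol{x}=\boldsymbol{x}_j$ is exactly the $j$-th row of $\boldsymbol{G}$. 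Hence the pointwise expression $f^*(\boldsymbol{x}_j) = \boldsymbol{g}_{\boldsymbol{x}_j}^T (c\boldsymbol{I}+\boldsymbol{G})^{-1}\boldsymbol{y}$ from Proposition~\ref{prop:f_star}, stacked over $j=1,\dots,K$, collapses into the single matrix--vector identity $[f^*(\boldsymbol{x}_1),\dots,f^*(\boldsymbol{x}_K)]^T = \boldsymbol{G}(c\boldsymbol{I}+\boldsymbol{G})^{-1}\boldsymbol{y}$.

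Next I would form the residual vector. Writing $\boldsymbol{r} \triangleq \boldsymbol{G}(c\boldsymbol{I}+\boldsymbol{G})^{-1}\boldsymbol{y} - \boldsymbol{y}$ and factoring $(c\boldsymbol{I}+\boldsymbol{G})^{-1}$ to the right, the key algebraic simplification is $\boldsymbol{G}(c\boldsymbol{I}+\boldsymbol{G})^{-1} - \boldsymbol{I} = \big(\boldsymbol{G}-(c\boldsymbol{I}+\boldsymbol{G})\big)(c\boldsymbol{I}+\boldsymbol{G})^{-1} = -c\,(c\boldsymbol{I}+\boldsymbol{G})^{-1}$, so that $\boldsymbol{r} = -c\,(c\boldsymbol{I}+\boldsymbol{G})^{-1}\boldsymbol{y}$. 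This is valid because $c>0$ makes $c\boldsymbol{I}+\boldsymbol{G}$ invertible (as already noted after Proposition~\ref{prop:f_star}) and because $\boldsymbol{G}$ commutes with $c\boldsymbol{I}+\boldsymbol{G}$.

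Then I would compute the squared error and diagonalize. We have $\frac{1}{K}\|\boldsymbol{r}\|^2 = \frac{c^2}{K}\,\boldsymbol{y}^T (c\boldsymbol{I}+\boldsymbol{G})^{-2}\boldsymbol{y}$. Substituting the eigendecomposition $\boldsymbol{G}=\boldsymbol{V}^T\boldsymbol{D}\boldsymbol{V}$ and using $\boldsymbol{V}^T\boldsymbol{V}=\boldsymbol{I}$ gives $c\boldsymbol{I}+\boldsymbol{G}=\boldsymbol{V}^T(c\boldsymbol{I}+\boldsymbol{D})\boldsymbol{V}$, hence $(c\boldsymbol{I}+\boldsymbol{G})^{-2}=\boldsymbol{V}^T(c\boldsymbol{I}+\boldsymbol{D})^{-2}\boldsymbol{V}$. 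Inserting the definition $\boldsymbol{z}=\boldsymbol{V}\boldsymbol{y}$ yields $\frac{c^2}{K}\,\boldsymbol{z}^T(c\boldsymbol{I}+\boldsymbol{D})^{-2}\boldsymbol{z}$, and since $c\boldsymbol{I}+\boldsymbol{D}$ is diagonal with entries $c+d_k$, reading off the diagonal sum produces $\frac{1}{K}\sum_k \big(z_k\,\frac{c}{c+d_k}\big)^2$, which is the claimed identity.

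I do not expect a genuine obstacle, since the statement is an exact identity that follows from the representer-theorem form together with the spectral decomposition of $\boldsymbol{G}$. The only step requiring care is the first one: recognizing that evaluating the Green's-function vector $\boldsymbol{g}_{\boldsymbol{x}}$ at a training point reproduces a row of $\boldsymbol{G}$, which is precisely what lets the collection of pointwise evaluations be packaged as the clean matrix product $\boldsymbol{G}(c\boldsymbol{I}+\boldsymbol{G})^{-1}\boldsymbol{y}$. Once that observation is in place, the remaining manipulations are routine linear algebra.
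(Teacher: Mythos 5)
Your proposal is correct and takes essentially the same route as the paper's proof: both substitute the representer-theorem form of $f^*$ from Proposition~\ref{prop:f_star}, stack the training-point evaluations into the residual vector $\boldsymbol{G}(c\boldsymbol{I}+\boldsymbol{G})^{-1}\boldsymbol{y}-\boldsymbol{y}$, diagonalize via $\boldsymbol{G}=\boldsymbol{V}^T\boldsymbol{D}\boldsymbol{V}$ together with orthogonality of $\boldsymbol{V}$, and read off the diagonal entries. The only cosmetic difference is that you simplify the residual to $-c\,(c\boldsymbol{I}+\boldsymbol{G})^{-1}\boldsymbol{y}$ at the matrix level before diagonalizing, whereas the paper carries the form $\boldsymbol{D}(c\boldsymbol{I}+\boldsymbol{D})^{-1}\boldsymbol{z}-\boldsymbol{z}$ and performs the equivalent scalar simplification $\frac{d_k}{c+d_k}-1=-\frac{c}{c+d_k}$ entrywise at the end.
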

\end{tcolorbox}

We now proceed to bound the roots of $h(c) \triangleq \frac{1}{K} \, \sum_k (z_k \, \frac{c}{c+d_k} )^2 - \epsilon$. Since we are considering $\| \boldsymbol{y} \|^2 > K \, \epsilon$, and thus by (\ref{eq:lambda_collapse}) $c>0$, it is easy to construct the following lower and upper bounds on $h$,
\begin{eqnarray}
\underline{h}(c) &\triangleq& \frac{1}{K} \, \sum_k (z_k \, \frac{c}{c+d_{\max}} )^2 - \epsilon \\
\overline{h}(c) &\triangleq& \frac{1}{K} \, \sum_k (z_k \, \frac{c}{c+d_{\min}} )^2 - \epsilon \,.
\end{eqnarray}
The roots of $\underline{h}$ and $\overline{h}$, namely $c_1$ and $c_2$, can be easily derived,
\begin{equation}
c_1 = \frac{ d_{\max} \sqrt{K \, \epsilon}}{\| \boldsymbol{z} \| - \sqrt{K \, \epsilon}} \quad , \quad c_2 = \frac{ d_{\min} \sqrt{K \, \epsilon}}{\| \boldsymbol{z} \| - \sqrt{K \, \epsilon}} \,.
\end{equation}
Since $\underline{h}(c) \leq h(c)$, the condition $\underline{h}(c_1) = 0$ implies that $h(c_1) \geq 0$. Similarly, since $h(x) \leq \overline{h}(c)$, the condition $\overline{h}(c_2) = 0$ implies that $h(c_2) \leq 0$. By the intermediate value theorem, due to continuity of $f$ and the fact that $\|\boldsymbol{z}\|=\|\boldsymbol{y}\|> \sqrt{K \, \epsilon}$ (non-collapse condition), there is a point $c$ between $c_1$ and $c_2$ at which $h(c)=0$,
\begin{equation}
\label{eq:root_bounds}
\frac{ d_{\min} \sqrt{K \, \epsilon}}{\| \boldsymbol{z} \| - \sqrt{K \, \epsilon}}\leq c \leq \frac{ d_{\max} \sqrt{K \, \epsilon}}{\| \boldsymbol{z} \| - \sqrt{K \, \epsilon}}.
\end{equation}

\section{Self-Distillation}
\label{sec:self-distillation}

Denote the prediction vector over the training data $\boldsymbol{x}_1,\dots \boldsymbol{x}_K$ as,
\begin{eqnarray}
{\boldsymbol{f}}_{\color{red} K \times 1} &\triangleq& \big[ f^*(\boldsymbol{x}_1) \,|\, \dots \,|\, f^*(\boldsymbol{x}_K) \big]^T \\
&=& \boldsymbol{V}^T \boldsymbol{D} (c \boldsymbol{I} + \boldsymbol{D})^{-1} \boldsymbol{V} \boldsymbol{y} \,.
\end{eqnarray}

\begin{figure*}[t]
\centering
\includegraphics[width=0.9\textwidth]{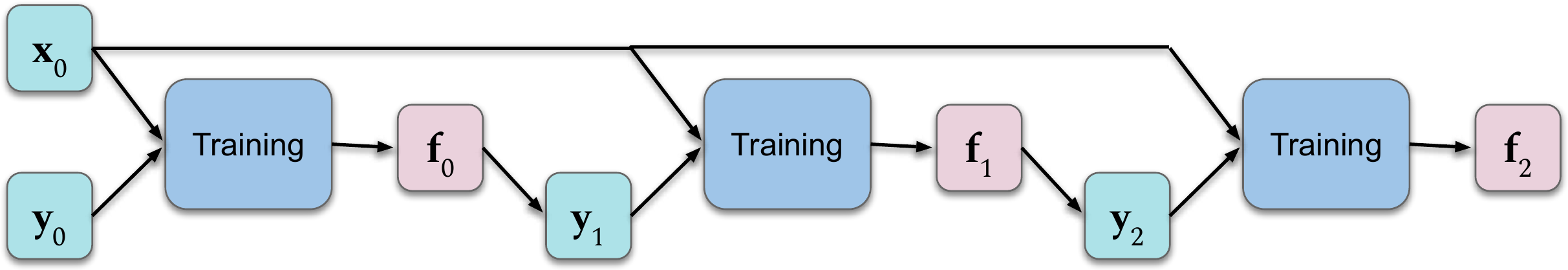}
\caption{Schematic illustration of the self-distillation process for two iterations.}
\label{fig:schematic}
\end{figure*}

Self-distillation treats this prediction as target labels for a new round of training, and repeats this process as shown in Figure~\ref{fig:schematic}. With abuse of notation, denote the $t$'th round of distillation by subscript $t$. We refer to the standard training (no self-distillation yet) by the step $t=0$. Thus the standard training step has the form,
\begin{equation}
{\boldsymbol{f}_0} = \boldsymbol{V}^T \boldsymbol{D} (c_0 \boldsymbol{I} + \boldsymbol{D})^{-1} \boldsymbol{V} \boldsymbol{y}_0 \,,
\end{equation}
where $\boldsymbol{y}_0$ is the vector of ground truth labels as defined in (\ref{eq:stacked_y}). Letting $\boldsymbol{y}_1 \triangleq \boldsymbol{f}_0$, we obtain the next model by applying the first round of self-distillation, whose prediction vector has the form,
\begin{equation}
{\boldsymbol{f}_1} = \boldsymbol{V}^T \boldsymbol{D} (c_1 \boldsymbol{I} + \boldsymbol{D})^{-1} \boldsymbol{V} \boldsymbol{y}_1 \,,
\end{equation}
In general, for any $t \geq 1$ we have the following recurrence,
\begin{equation}
\label{eq:rec_y}
\boldsymbol{y}_t = \boldsymbol{V}^T \boldsymbol{A}_{t-1} \boldsymbol{V} \boldsymbol{y}_{t-1} \,,
\end{equation}
where we define for any $t \geq 0$,
\begin{equation}
\label{eq:def_A}
{\boldsymbol{A}_t}_{\color{red}K \times K} \triangleq \boldsymbol{D} (c_t \boldsymbol{I} + \boldsymbol{D})^{-1} \,.
\end{equation}
Note that $\boldsymbol{A}_t$ is also a diagonal matrix. Furthermore, since at the $t$'th distillation step, everything is the same as the initial step except the training labels, we can use Proposition~\ref{prop:f_star} to express $f_t(\boldsymbol{x})$ as,
\begin{equation}
\label{eq:sol_f_star_t}
f^*_t(\boldsymbol{x}) \,=\, \boldsymbol{g}_{\boldsymbol{x}}^T (c_t \boldsymbol{I} + \boldsymbol{G})^{-1} \boldsymbol{y}_t =  \boldsymbol{g}_{\boldsymbol{x}}^T \boldsymbol{V}^T \boldsymbol{D}^{-1} ( \Pi_{i=0}^t \boldsymbol{A}_t ) \boldsymbol{V} \boldsymbol{y}_0 \,.
\end{equation}
Observe that the only dynamic component in the expression of $f_t^*$ is $\Pi_{i=0}^t \boldsymbol{A}_i$. 
In the following, we show how $\Pi_{i=0}^t \boldsymbol{A}_i$ evolves over time. Specifically, we show that self-distillation progressively sparsifies\footnote{Here \emc{sparsity} is in a \emc{relative} sense, meaning some elements being so smaller than others that they could be considered as negligible.} the matrix $\Pi_{i=0}^t \boldsymbol{A}_i$ at a provided rate.

Also recall from Section \ref{sec:non-trivial} that {\em in each step of self-distillation} we require $\|\boldsymbol{y}_t\| > \sqrt{ K \, \epsilon}$. If this condition breaks at any point, the solution {\em collapses} to the zero function, and subsequent rounds of self-distillation keep producing $f^*(\boldsymbol{x})=0$. In this section we present a lower bound on number of iterations $t$ that guarantees all intermediate problems satisfy $\|\boldsymbol{y}_t\| > \sqrt{ K \, \epsilon}$.
\subsection{Unfolding the Recurrence}
\label{sec:unfold_recur}

Our goal here is to understand how $\|\boldsymbol{y}_t\|$ evolves in $t$. 
By combining (\ref{eq:rec_y}) and (\ref{eq:def_A}) we obtain,
\begin{equation}
\boldsymbol{y}_t = \boldsymbol{V}^T \boldsymbol{D} (c_{t-1} \boldsymbol{I} + \boldsymbol{D})^{-1} \boldsymbol{V} \boldsymbol{y}_{t-1} \,.
\end{equation}
By multiplying both sides from the left by $\boldsymbol{V}$ we get,
\begin{eqnarray}
& & \boldsymbol{V} \boldsymbol{y}_t = \boldsymbol{V} \boldsymbol{V}^T \boldsymbol{D} (c_{t-1} \boldsymbol{I} + \boldsymbol{D})^{-1} \boldsymbol{V} \boldsymbol{y}_{t-1} \\
\label{eq:rec_z_c}
&\equiv& \boldsymbol{z}_t = \boldsymbol{D} (c_{t-1} \boldsymbol{I} + \boldsymbol{D})^{-1}  \boldsymbol{z}_{t-1} \\
\label{eq:rec_z_c_2}
&\equiv& \frac{1}{\sqrt{K \, \epsilon}}\boldsymbol{z}_t = \boldsymbol{D} (c_{t-1} \boldsymbol{I} + \boldsymbol{D})^{-1}  \, \frac{1}{\sqrt{K \, \epsilon}} \boldsymbol{z}_{t-1} \,.
\end{eqnarray}
Also we can use the bounds on $c$ from (\ref{eq:root_bounds}) at any arbitrary $t \geq 0$ and thus write,
\begin{equation}
\label{eq:root_bounds_t}
\forall \, t \geq 0 \,;\, \| \boldsymbol{z}_t\| > \sqrt{K \, \epsilon} \,\Rightarrow\, \frac{ d_{\min} \sqrt{K \, \epsilon}}{\| \boldsymbol{z}_t \| - \sqrt{K \, \epsilon}}\leq c_t \leq \frac{ d_{\max} \sqrt{K \, \epsilon}}{\| \boldsymbol{z}_t \| - \sqrt{K \, \epsilon}}
\end{equation}
By combining (\ref{eq:rec_z_c_2}) and (\ref{eq:root_bounds_t}) we obtain a recurrence solely in $\boldsymbol{z}$,
\begin{equation}
\label{eq:z_t_recur}
\boldsymbol{z}_t = \boldsymbol{D} (\frac{ \alpha_t \sqrt{K \, \epsilon}}{\| \boldsymbol{z}_{t-1} \| - \sqrt{K \, \epsilon}} \boldsymbol{I} + \boldsymbol{D})^{-1} \, \boldsymbol{z}_{t-1} \,,
\end{equation}
where,
\begin{equation}
\label{eq:range_alpha}
d_{\min} \leq \alpha_t \leq d_{\max} \,.
\end{equation}

We now establish a lower bound on the value of $\|\boldsymbol{z}_t\|$.

\begin{tcolorbox}[colback=green!8!white,colframe=black]
\begin{proposition}
\label{prop:lower_bound_ut}
For any $t \geq0$, if $\| \boldsymbol{z}_i\| > \sqrt{K \, \epsilon}$ for $i=0,\dots,t$, then,
\begin{equation}
\| \boldsymbol{z}_t\| \geq a^t(\kappa) \| \boldsymbol{z}_0\| - \sqrt{K \, \epsilon} \, b(\kappa) \frac{a^t(\kappa) - 1}{a(\kappa) - 1} \,,
\end{equation}
where,
\begin{eqnarray}
a(x) &\triangleq& \frac{(r_0 - 1)^2 + x (2 r_0 - 1) }{(r_0 -1 + x)^2} \\
b(x) &\triangleq& \frac{r_0^2 x}{(r_0 -1 + x)^2 } \\
r_0 &\triangleq& \frac{1}{\sqrt{K \, \epsilon}} \,\|\boldsymbol{z}_0\| \quad,\quad
\kappa \triangleq \frac{d_{\max}}{d_{\min}} \,.
\end{eqnarray}
\end{proposition}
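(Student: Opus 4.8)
The plan is to reduce the matrix recurrence (\ref{eq:z_t_recur}) to a scalar recurrence in the normalized norm $r_t \triangleq \|\boldsymbol{z}_t\|/\sqrt{K\,\epsilon}$, and then bound that scalar recurrence from below by a linear one whose closed-form solution is exactly the claimed expression. Since $\boldsymbol{D}$ is diagonal, (\ref{eq:z_t_recur}) acts coordinatewise: writing $\beta_{t-1} \triangleq \frac{\alpha_t \sqrt{K\,\epsilon}}{\|\boldsymbol{z}_{t-1}\| - \sqrt{K\,\epsilon}}$, the $k$'th coordinate is scaled by $m_k = \frac{d_k}{\beta_{t-1}+d_k}$. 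Because $\frac{d}{\beta+d}$ increases in $d$ and decreases in $\beta$, and $\beta_{t-1}$ increases in $\alpha_t \in [d_{\min},d_{\max}]$, every multiplier is bounded below by its simultaneous worst case $d_k = d_{\min}$, $\alpha_t = d_{\max}$. Factoring out $d_{\min}$ and using $\kappa = d_{\max}/d_{\min}$ collapses this worst-case multiplier to $\frac{\|\boldsymbol{z}_{t-1}\| - \sqrt{K\,\epsilon}}{\|\boldsymbol{z}_{t-1}\| + (\kappa-1)\sqrt{K\,\epsilon}}$, independent of the unknown $d_k$'s and $\alpha_t$. Since $\|\boldsymbol{z}_t\|^2 = \sum_k m_k^2\, z_{t-1}[k]^2 \geq m_{\min}^2 \|\boldsymbol{z}_{t-1}\|^2$, this yields the scalar bound $r_t \geq \psi(r_{t-1})$ with $\psi(r) \triangleq \frac{r(r-1)}{r+\kappa-1}$, valid at every step under the standing non-collapse hypothesis $r_i > 1$.

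The key idea is that the claimed estimate is precisely the solution $u_t = a^t u_0 - \sqrt{K\,\epsilon}\,b\,\frac{a^t-1}{a-1}$ of the linear recurrence $u_t = a(\kappa)\,u_{t-1} - \sqrt{K\,\epsilon}\,b(\kappa)$. To descend from the nonlinear $\psi$ to this linear recurrence, I would lower-bound $\psi$ by its tangent line at the initial point $r_0$. Rewriting $\psi(r) = (r-\kappa) + \frac{(\kappa-1)\kappa}{r+\kappa-1}$ shows $\psi''(r) = \frac{2(\kappa-1)\kappa}{(r+\kappa-1)^3} \geq 0$, so $\psi$ is convex on $r>0$ and its tangent at $r_0$ lies below it everywhere. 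A short computation of $\psi(r_0)$ and $\psi'(r_0)$ then verifies that the tangent's slope equals $a(\kappa)$ and its intercept equals $-b(\kappa)$ exactly, giving the normalized bound $\psi(r) \geq a(\kappa)\,r - b(\kappa)$ for all $r>0$.

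Finally I would chain and iterate. Combining the two bounds gives the linear one-step inequality $r_t \geq a(\kappa)\,r_{t-1} - b(\kappa)$. Since $r_0>1$ makes the numerator of $a(\kappa)$ positive, $a(\kappa) > 0$, so multiplying the induction hypothesis by $a(\kappa)$ preserves its direction; a routine induction on $t$ then produces $r_t \geq a^t r_0 - b\,\frac{a^t-1}{a-1}$, and rescaling by $\sqrt{K\,\epsilon}$ recovers the stated bound.

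I expect the main obstacle to be conceptual rather than computational: recognizing that the awkward closed form is the solution of a tangent-line-linearized recurrence, and checking that the tangent coefficients coincide exactly with the stated $a(\kappa)$ and $b(\kappa)$. The step demanding most care is the reduction, where one must justify that taking $\alpha_t = d_{\max}$ and $d_k = d_{\min}$ simultaneously is legitimate for a genuine lower bound, and that the resulting $\psi$ is convex on the region $r>1$ guaranteed by the non-collapse hypothesis, so that the global tangent-line inequality may be applied at every iterate.
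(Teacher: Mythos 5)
Your proposal is correct and follows essentially the same route as the paper's proof: reducing the diagonal recurrence to a scalar one via the simultaneous worst case $\alpha_t = d_{\max}$, $d_k = d_{\min}$ (i.e.\ the ratio $\kappa$), lower-bounding the resulting map $\psi(r) = \frac{r(r-1)}{r+\kappa-1}$ by its tangent line at $r_0$ using convexity, and unrolling the linearized recurrence $r_t \geq a(\kappa)\,r_{t-1} - b(\kappa)$ by induction (valid since $a(\kappa)>0$). Your verification that the tangent coefficients at $r_0$ are exactly $a(\kappa)$ and $-b(\kappa)$ is the same computation the paper performs, just stated more explicitly.
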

\end{tcolorbox}

\subsection{Guaranteed Number of Self-Distillation Rounds}
\label{sec:guaranteed_iters}

By looking at (\ref{eq:rec_z_c}) it is not hard to see the value of $\| \boldsymbol{z}_t\|$ is \emc{decreasing} in $t$. That is because $c_t$\footnote{$c_t>0$ following from the assumption $\|\boldsymbol{z}_t\|> \sqrt{K \, \epsilon}$ and (\ref{eq:lambda_collapse}).} as well as elements of the diagonal matrix $\boldsymbol{D}$ are strictly positive. Hence $\boldsymbol{D}(c_{t-1} \boldsymbol{I} + \boldsymbol{D})^{-1}$ shrinks the magnitude of $\boldsymbol{z}_{t-1}$ in each iteration.

Thus, starting from $\|\boldsymbol{z}_0\| > \sqrt{K \, \epsilon}$, as $\|\boldsymbol{z}_t\|$ decreases, at some point it falls below the value $\sqrt{K \, \epsilon}$ and thus the solution collapses. We now want to find out after how many rounds $t$, the solution collapse happens. Finding the exact such $t$, is difficult, but by setting a lower bound of $\|\boldsymbol{z}_t\|$ to $\sqrt{ K \, \epsilon}$ and solving that in $t$, calling the solution  $\underline{t}$, we can guarantee realization of at least $\underline{t}$ rounds where the value of $\|\boldsymbol{z}_{\underline{t}}\|$ remains above $\sqrt{ K \, \epsilon}$. 

We can use the lower bound we developed in Proposition~\ref{prop:lower_bound_ut} in order to find such $\underline{t}$. This is shown in the following proposition.

\begin{tcolorbox}[colback=green!8!white,colframe=black]
\begin{proposition}
\label{prop:under_t}
Starting from $\|\boldsymbol{y}_0\| > \sqrt{K \, \epsilon}$, meaningful (non-collapsing solution) self-distillation is possible at least for $\underline{t}$ rounds,
\begin{equation}
\label{eq:t_star_final}
\underline{t} \triangleq \frac{\frac{\| \boldsymbol{y}_0\|}{\sqrt{K \, \epsilon}} -1}{\kappa} \,.
\end{equation}
\end{proposition}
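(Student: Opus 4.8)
The plan is to start from the explicit lower bound on $\|\boldsymbol{z}_t\|$ supplied by Proposition~\ref{prop:lower_bound_ut} and to ask for the largest number of rounds $t$ for which that bound is still guaranteed to exceed the collapse threshold $\sqrt{K\,\epsilon}$. Any such $t$ is automatically a valid $\underline{t}$: Section~\ref{sec:guaranteed_iters} already records that $\|\boldsymbol{z}_t\|$ is monotonically decreasing and that the non-collapse condition is exactly $\|\boldsymbol{z}_t\| > \sqrt{K\,\epsilon}$, so it suffices to certify that the lower bound stays above $\sqrt{K\,\epsilon}$ for all rounds up to $\underline{t}$. Writing $a \triangleq a(\kappa)$ and $b \triangleq b(\kappa)$ and dividing the bound by $\sqrt{K\,\epsilon}$, the requirement $\|\boldsymbol{z}_t\| \geq \sqrt{K\,\epsilon}$ becomes $a^t r_0 - b\,\frac{a^t-1}{a-1} \geq 1$. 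I would first regroup this, isolating the single exponential term, as $a^t\big(r_0 + \tfrac{b}{1-a}\big) - \tfrac{b}{1-a} \geq 1$, whose coefficient of $a^t$ is manifestly positive.

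The exponential dependence on $t$ is the crux of the argument, since this inequality has no elementary closed-form root in $t$. The key idea is to replace $a^t$ by a linear-in-$t$ lower bound, so that the inequality can be solved exactly while remaining a conservative (hence valid) guarantee. A short computation shows $1-a = \frac{\kappa(\kappa-1)}{(r_0-1+\kappa)^2}$, so $a \in (0,1]$, and Bernoulli's inequality gives $a^t = \big(1-(1-a)\big)^t \geq 1 - t(1-a)$ for integer $t \geq 0$. Because the coefficient multiplying $a^t$ is positive, substituting this lower bound only weakens the left-hand side; after expanding, it is bounded below by $r_0 - t\big[(1-a)r_0 + b\big]$. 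Demanding that this be at least $1$ produces the clean linear condition $t \leq \frac{r_0-1}{(1-a)r_0+b}$.

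The remaining work is purely algebraic simplification of the denominator. Using the closed forms of $a$ and $b$ at $x=\kappa$, the cross term collapses neatly: $(1-a)r_0 + b = \frac{\kappa(\kappa-1)r_0 + r_0^2\kappa}{(r_0-1+\kappa)^2} = \frac{\kappa r_0}{r_0+\kappa-1}$. Since $\kappa \geq 1$ forces $\frac{r_0}{r_0+\kappa-1} \leq 1$, this quantity is at most $\kappa$, so the admissible range for $t$ only enlarges when the denominator is replaced by $\kappa$, giving the cleaner sufficient condition $t \leq \frac{r_0-1}{\kappa}$. Substituting $r_0 = \|\boldsymbol{y}_0\|/\sqrt{K\,\epsilon}$ recovers exactly the claimed $\underline{t}$. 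The only genuine obstacle is the transcendental inequality in the second step; everything after it is bookkeeping. The main judgement call is choosing Bernoulli's inequality, rather than a tangent-line or exponential relaxation of $a^t$, precisely because it yields a linear inequality whose solution simplifies to the advertised closed form.
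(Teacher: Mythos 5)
Your proof is correct, but it resolves the crux step by a genuinely different (and more self-contained) route than the paper. Both arguments start from the lower bound of Proposition~\ref{prop:lower_bound_ut} and ask when $a^t(\kappa)\, r_0 - b(\kappa)\,\frac{a^t(\kappa)-1}{a(\kappa)-1} \geq 1$. The paper solves this with equality \emph{exactly}, obtaining the critical $t$ as a ratio of logarithms, and then asserts --- with an unproven ``Observe that'' --- that this ratio is at least $\frac{r_0-1}{\kappa}$; that transcendental inequality is the real content of the paper's proof and is left to the reader. You instead linearize $a^t \geq 1 - t(1-a)$ via Bernoulli's inequality \emph{before} solving, which reduces everything to verifiable algebra: your identities $1-a(\kappa) = \frac{\kappa(\kappa-1)}{(r_0-1+\kappa)^2}$ and $(1-a(\kappa))\,r_0 + b(\kappa) = \frac{\kappa r_0}{r_0+\kappa-1} \leq \kappa$ both check out, and they yield the sufficient condition $t \leq \frac{r_0-1}{\kappa}$ directly. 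What you give up is tightness --- Bernoulli discards slack relative to the exact logarithmic root --- but since both routes are ultimately relaxed to the same closed form, nothing is lost in the final statement, and every step of your chain is elementary where the paper's key inequality is merely asserted. Two caveats, shared with the paper rather than specific to you: at $\kappa=1$ your division by $1-a$ (like the paper's division by $\log a(\kappa)$) degenerates and the limiting case $a=b=1$, where the bound reads $r_t \geq r_0 - t$, should be treated directly; and both proofs implicitly bootstrap the hypothesis of Proposition~\ref{prop:lower_bound_ut} (non-collapse at all earlier rounds) by induction on the round index, which your appeal to monotonicity of $\|\boldsymbol{z}_t\|$ gestures at but does not spell out.
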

\end{tcolorbox}

Note that when we are in near-interpolation regime, i.e. $\epsilon \rightarrow 0$, the form of $\underline{t}$ simplifies: $\underline{t} \approx \frac{\| \boldsymbol{y}_0\|}{\kappa \, \sqrt{K \, \epsilon}} $.

\subsection{Evolution of Basis}
\label{sec:evolution_basis}
Recall from (\ref{eq:sol_f_star_t}) that the learned function after $t$ rounds of self-distillation has the form,
\begin{equation}
f^*_t(\boldsymbol{x}) \,=\,  \boldsymbol{g}_{\boldsymbol{x}}^T \boldsymbol{V}^T \boldsymbol{D}^{-1} ( \Pi_{i=0}^t \boldsymbol{A}_t ) \boldsymbol{V} \boldsymbol{y}_0 \,.
\end{equation}
The only time-dependent part is thus the following \emc{diagonal} matrix,
\begin{equation}
\label{eq:def_B}
\boldsymbol{B}_t \triangleq \Pi_{i=0}^t \boldsymbol{A}_t \,.
\end{equation}
In this section we show how $\boldsymbol{B}_t$ evolves over time. Specifically, we claim that the matrix $\boldsymbol{B}_t$ becomes progressively sparser as $t$ increases. 

\begin{tcolorbox}[colback=green!8!white,colframe=black]
\begin{theorem}
\label{prop:expansion_rate}
Suppose $\|\boldsymbol{y}_0\| > \sqrt{K \, \epsilon}$ and $t \leq \frac{\| \boldsymbol{y}_0\|}{\kappa \, \sqrt{K \, \epsilon}} - \frac{1}{\kappa}$. Then for any pair of diagonals of $\boldsymbol{D}$, namely $d_j$ and $d_k$, with the condition that $d_k > d_j$, the following inequality holds.
\begin{equation}
\label{eq:ratio_expansion}
\frac{\boldsymbol{B}_{t-1} [k,k]}{\boldsymbol{B}_{t-1} [j,j]} \geq \left( \frac{\frac{\| \boldsymbol{y}_0 \|}{\sqrt{K \, \epsilon}} - 1 +  \frac{d_{\min}}{d_j} }{\frac{\| \boldsymbol{y}_0 \|}{\sqrt{K \, \epsilon}} - 1 +  \frac{d_{\min}}{d_k} }\right)^{t} \,.
\end{equation}
\end{theorem}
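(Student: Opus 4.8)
The plan is to exploit the fact that $\boldsymbol{B}_{t-1}$ is diagonal, so the ratio of its $(k,k)$ and $(j,j)$ entries factors cleanly over the distillation rounds. Since $\boldsymbol{B}_{t-1} = \prod_{i=0}^{t-1}\boldsymbol{A}_i$ with $\boldsymbol{A}_i[k,k] = d_k/(c_i + d_k)$, I would first write
\begin{equation}
\frac{\boldsymbol{B}_{t-1}[k,k]}{\boldsymbol{B}_{t-1}[j,j]} = \prod_{i=0}^{t-1}\frac{d_k(c_i + d_j)}{d_j(c_i + d_k)} = \prod_{i=0}^{t-1}\frac{1 + c_i/d_j}{1 + c_i/d_k} \,.
\end{equation}
It then suffices to lower bound each of the $t$ factors $\phi(c_i) \triangleq (1 + c_i/d_j)/(1 + c_i/d_k)$ by a single quantity independent of $i$, and raise it to the power $t$.

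The second step is a monotonicity observation. Writing $\phi(c) = (1 + c/d_j)/(1 + c/d_k)$ and differentiating, one finds that $\phi'(c)$ has the sign of $1/d_j - 1/d_k$, which is positive since $d_k > d_j$; hence $\phi$ is strictly increasing in $c$. Consequently each factor is minimized by the smallest admissible value of $c_i$, so I only need a uniform lower bound on the multipliers $c_i$.

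The third step supplies that bound. From (\ref{eq:root_bounds_t}), whenever $\|\boldsymbol{z}_i\| > \sqrt{K\epsilon}$ we have $c_i \geq d_{\min}\sqrt{K\epsilon}/(\|\boldsymbol{z}_i\| - \sqrt{K\epsilon})$. Because $\|\boldsymbol{z}_i\|$ is decreasing in $i$ (Section~\ref{sec:guaranteed_iters}), we have $\|\boldsymbol{z}_i\| \leq \|\boldsymbol{z}_0\|$, so this lower bound is itself bounded below by its value at $i=0$, giving the uniform estimate
\begin{equation}
c_i \geq \frac{d_{\min}\sqrt{K\epsilon}}{\|\boldsymbol{z}_0\| - \sqrt{K\epsilon}} = \frac{d_{\min}}{r_0 - 1} \,,
\end{equation}
using $\|\boldsymbol{z}_0\| = \|\boldsymbol{y}_0\|$ and $r_0 = \|\boldsymbol{z}_0\|/\sqrt{K\epsilon}$. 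Substituting $c = d_{\min}/(r_0 - 1)$ into $\phi$ and clearing the $(r_0-1)$ factor in numerator and denominator produces exactly the base of the claimed bound, $(r_0 - 1 + d_{\min}/d_j)/(r_0 - 1 + d_{\min}/d_k)$; taking the product over the $t$ rounds then yields the theorem.

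The one point requiring care — and the main obstacle — is ensuring the lower bound on $c_i$ is valid at every round $i = 0, \ldots, t-1$, which needs $\|\boldsymbol{z}_i\| > \sqrt{K\epsilon}$ throughout (no collapse). This is precisely where the hypothesis $t \leq \|\boldsymbol{y}_0\|/(\kappa\sqrt{K\epsilon}) - 1/\kappa = \underline{t}$ enters: Proposition~\ref{prop:under_t} guarantees non-collapse for at least $\underline{t}$ rounds, so every $c_i$ in play is positive and (\ref{eq:root_bounds_t}) applies. I would also verify the bookkeeping — the product from $i=0$ to $t-1$ has exactly $t$ terms, matching the exponent — and confirm the inequality points the right way, noting that $d_k > d_j$ makes the base exceed one, consistent with the asserted progressive sparsification.
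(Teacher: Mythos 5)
Your proposal is correct and follows essentially the same route as the paper's proof: factor the diagonal ratio over rounds as $\prod_{i}\frac{1+c_i/d_j}{1+c_i/d_k}$, use monotonicity of this ratio in $c_i$ (valid since $d_k>d_j$), apply the lower bound $c_i \geq d_{\min}\sqrt{K\epsilon}/(\|\boldsymbol{z}_0\|-\sqrt{K\epsilon})$ obtained from (\ref{eq:root_bounds_t}) together with the monotone decrease of $\|\boldsymbol{z}_t\|$, and multiply the $t$ identical bounds. Your explicit invocation of Proposition~\ref{prop:under_t} to justify non-collapse at every round matches the paper's use of the hypothesis $t \leq \underline{t}$, so there is no gap.
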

\end{tcolorbox}

The above theorem suggests that, as $t$ increases, the smaller elements of $\boldsymbol{B}_{t-1}$ shrink faster and at some point become negligible compared to larger ones. That is because in (\ref{eq:ratio_expansion}) we have assumed $d_k > d_j$, and thus the r.h.s. expression in the parentheses is strictly greater than $1$. The latter implies that $\frac{\boldsymbol{B}_{t-1} [k,k]}{\boldsymbol{B}_{t-1} [j,j]}$ is increasing in $t$.

Observe that if one was able to push $t \rightarrow \infty$, then only one entry of $\boldsymbol{B}_t$  (the one corresponding to $d_{\max}$) would remain significant relative to others. Thus, self-distillation process \emc{progressively sparsifies} $\boldsymbol{B}_t$. This sparsification affects the expressiveness of the regression solution $f^*_t(\boldsymbol{x})$. To see that, use the definition of $f^*_t(\boldsymbol{x})$ from (\ref{eq:sol_f_star_t}) to express it in the following form,
\begin{equation}
\label{eq:B_as_basis_coeffs}
f^*_t(\boldsymbol{x}) \,=\, \boldsymbol{g}_{\boldsymbol{x}}^T \boldsymbol{V}^T \boldsymbol{D}^{-1} \, \boldsymbol{B}_t \boldsymbol{V} \boldsymbol{y}_0 = \boldsymbol{p}^T_{\boldsymbol{x}} \boldsymbol{B}_t \boldsymbol{z}_0\,.
\end{equation}
where we gave a name to the rotated and scaled basis $\boldsymbol{p}_{\boldsymbol{x}} \triangleq \boldsymbol{D}^{-1} \boldsymbol{V} \boldsymbol{g}_{\boldsymbol{x}} $ and rotated vector $\boldsymbol{z}_0 \triangleq \boldsymbol{V} \boldsymbol{y}_0$.
The solution $f^*_t$ is essentially represented by a weighted sum of the basis functions (the components of $\boldsymbol{p}_{\boldsymbol{x}}$). Thus, the number of significant diagonal entries of $\boldsymbol{B}_t$ determines the  \emc{effective number of basis functions} used to represent the solution.

\subsection{Self-Distillation versus Early Stopping}
\label{sec:early-stop}

Broadly speaking, early stopping can be interpreted as any procedure that cuts convergence short of the optimal solution. Examples include reducing the number of iterations of the numerical optimizer (e.g. SGD), or increasing the loss tolerance threshold $\epsilon$. The former is not applicable to our setting, as our analysis is independent of function parametrization and its numerical optimization. We consider the second definition.

This form of early stopping also has a regularization effect; by increasing $\epsilon$ in (\ref{eq:goal_regress}) the feasible set expands and thus it is possible to find functions with lower $R(f)$. However, we show here that the induced regularization is not equivalent to that of self-distillation. In fact, one can say that early-stopping does the \emc{opposite} of sparsification, as we show below.

The learned function via loss-based early stopping in our notation can be expressed as $f_0^*$ (single training, no self-distillation) with a larger error tolerance $\epsilon$,
\begin{equation}
f^*_0(\boldsymbol{x}) \,=\,  \boldsymbol{p}^T_{\boldsymbol{x}} \boldsymbol{B}_0 \boldsymbol{z}_0 = \boldsymbol{p}^T_{\boldsymbol{x}} \boldsymbol{D}(c_0 \boldsymbol{I} + \boldsymbol{D})^{-1} \boldsymbol{z}_0 \,.
\end{equation}
The effect of larger $\epsilon$ on the value of $c_0$ is shown in (\ref{eq:root_bounds}). However, since $c_0$ is just a scalar value applied to matrices, it does not provide any lever to increase the sparsity of $\boldsymbol{D}$. We now elaborate on the latter claim a bit more. Observe that, on the one hand, when $c_0$ is large, then $\boldsymbol{D}(c_0 \boldsymbol{I} + \boldsymbol{D})^{-1}\approx \frac{1}{c_0} \boldsymbol{D}$, which essentially uses $\boldsymbol{D}$ and does not sparsify it further. On the other hand, if $c_0$ is small then $\boldsymbol{D}(c_0 \boldsymbol{I} + \boldsymbol{D})^{-1}\approx\boldsymbol{I}$, which is the densest possible diagonal matrix. Thus, at best, early stopping maintains the original sparsity pattern of $\boldsymbol{D}$ and otherwise makes it even denser.

\subsection{Advantage of Near Interpolation Regime}

As discussed in Section (\ref{sec:evolution_basis}), for {\it each pair} of $j$ and $k$ satisfying $d_k> d_j$, the ratio $\frac{\boldsymbol{B}_{t-1} [k,k]}{\boldsymbol{B}_{t-1} [j,j]}$ can be interpreted as a sparsity measure (the larger, the sparser). To obtain a sparsity notion that is easier to interpret, here we try to remove its dependency on the specific choice of $i,j$; thus reflecting the sparsity in $\boldsymbol{B}_{t-1}$ by a single quantity. We still rely on the lower bound we developed for $\frac{\boldsymbol{B}_{t-1} [k,k]}{\boldsymbol{B}_{t-1} [j,j]}$ in Theorem~\ref{prop:expansion_rate}. We denote the \emc{sparsity index} of $\boldsymbol{B}_{t-1}$ by $S_{\boldsymbol{B}_{t-1}}$ and define it as the lowest value of the bound across elements all pairs satisfying $d_k > d_j$,
\begin{equation}
S_{\boldsymbol{B}_{t-1}} \triangleq \min_{j,k} \left( \frac{\frac{\| \boldsymbol{y}_0 \|}{\sqrt{K \, \epsilon}} - 1 +  \frac{d_{\min}}{d_j} }{\frac{\| \boldsymbol{y}_0 \|}{\sqrt{K \, \epsilon}} - 1 +  \frac{d_{\min}}{d_k} }\right)^{t} \quad \mbox{s.t.} \quad d_k > d_j \,.
\end{equation}
Assuming $d$'s are ordered so that $d_1 < d_2 < \dots < d_{K}$ then the above simplifies to,
\begin{equation}
\label{eq:sparsity_index}
S_{\boldsymbol{B}_{t-1}} = \min_{k \in \{1,2,\dots,K-1\}} \left( \frac{\frac{\| \boldsymbol{y}_0 \|}{\sqrt{K \, \epsilon}} - 1 +  \frac{d_{\min}}{d_k} }{\frac{\| \boldsymbol{y}_0 \|}{\sqrt{K \, \epsilon}} - 1 +  \frac{d_{\min}}{d_{k+1}} }\right)^{t}\,.
\end{equation}
We now move on to the next interesting question: what is the highest sparsity $S$ that self-distillation can attain? Since $\| \boldsymbol{y}_0 \| > \sqrt{K \, \epsilon}$ and $d_{k+1}>d_k$, the term inside parentheses in (\ref{eq:sparsity_index}) is strictly greater than $1$ and thus $S$ increases in $t$. However, the largest $t$ we can guarantee before a solution collapse happens (provided in Proposition~\ref{prop:under_t}) is $\underline{t} = \frac{\| \boldsymbol{y}_0\|}{\kappa \, \sqrt{K \, \epsilon}} - \frac{1}{\kappa}$. By plugging this $\underline{t}$ into the definition of $S$ from (\ref{eq:sparsity_index}) we eliminate $t$ and obtain the largest sparisity index,
\begin{equation}
\label{eq:def_S_b_under_t}
S_{\boldsymbol{B}_{\underline{t}-1}} = \min_{k \in \{1,2,\dots,K-1\}} \left( \frac{\frac{\| \boldsymbol{y}_0 \|}{\sqrt{K \, \epsilon}} - 1 +  \frac{d_{\min}}{d_k} }{\frac{\| \boldsymbol{y}_0 \|}{\sqrt{K \, \epsilon}} - 1 +  \frac{d_{\min}}{d_{k+1}} }\right)^{\frac{\| \boldsymbol{y}_0\|}{\kappa \, \sqrt{K \, \epsilon}} - \frac{1}{\kappa}} \,.
\end{equation}
We now further show that $S_{\boldsymbol{B}_{\underline{t}-1}}$ always improves as $\epsilon$ gets smaller.

\begin{tcolorbox}[colback=green!8!white,colframe=black]
\begin{theorem}
Suppose $\|\boldsymbol{y}_0\| > \sqrt{K \, \epsilon}$. Then the sparsity index $S_{\boldsymbol{B}_{\underline{t}-1}}$ (where $\underline{t} = \frac{\| \boldsymbol{y}_0\|}{\kappa \, \sqrt{K \, \epsilon}} - \frac{1}{\kappa}$ is number of guaranteed self-distillation steps before solution collapse) {\color{red}decreases} in $\epsilon$, i.e. lower $\epsilon$ yields higher sparsity.

Furthermore at the limit $\epsilon \rightarrow 0$, the sparsity index has the form,
\begin{equation}
\lim_{\epsilon \rightarrow 0} S_{\boldsymbol{B}_{\underline{t}-1}} = e^{\frac{d_{\min}}{\kappa} \, \min_{k \in \{1,2,\dots,K-1\}}(\frac{1}{d_k} - \frac{1}{d_{k+1}})} \,.
\end{equation}
\end{theorem}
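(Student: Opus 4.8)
The plan is to strip away the notational clutter by introducing the single scalar $r \triangleq \frac{\|\boldsymbol{y}_0\|}{\sqrt{K\,\epsilon}}$, so that the non-collapse hypothesis $\|\boldsymbol{y}_0\| > \sqrt{K\,\epsilon}$ becomes simply $r > 1$, and decreasing $\epsilon$ corresponds exactly to increasing $r$ (with $\epsilon \to 0 \Leftrightarrow r \to \infty$). The guaranteed exponent then reads $\frac{\|\boldsymbol{y}_0\|}{\kappa\sqrt{K\epsilon}} - \frac{1}{\kappa} = \frac{r-1}{\kappa}$, so writing $s \triangleq r - 1 > 0$ and, for each $k$, $a_k \triangleq \frac{d_{\min}}{d_k}$ and $b_k \triangleq \frac{d_{\min}}{d_{k+1}}$ (note $a_k > b_k$ since $d_{k+1} > d_k$), the $k$-th factor becomes $S_k(s) = \big(\frac{s + a_k}{s + b_k}\big)^{s/\kappa}$, and $S_{\boldsymbol{B}_{\underline{t}-1}} = \min_k S_k$. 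Proving the claim that $S_{\boldsymbol{B}_{\underline{t}-1}}$ decreases in $\epsilon$ then reduces to proving each $S_k$ is strictly increasing in $s$, because the minimum of a finite family of strictly increasing functions is itself strictly increasing.

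For the monotonicity I would pass to $\log S_k(s) = \frac{s}{\kappa}\big(\log(s+a_k) - \log(s+b_k)\big)$ and differentiate, obtaining $\kappa\,\frac{d}{ds}\log S_k = \log\frac{s+a_k}{s+b_k} - \frac{s(a_k - b_k)}{(s+a_k)(s+b_k)}$. Everything hinges on showing this quantity is positive. I would set $w \triangleq \frac{a_k - b_k}{s + b_k} > 0$, so that $\frac{s+a_k}{s+b_k} = 1 + w$, and invoke the elementary bound $\log(1+w) \geq \frac{w}{1+w} = \frac{a_k - b_k}{s+a_k}$. Since the subtracted term equals $\frac{a_k - b_k}{s+a_k}\cdot\frac{s}{s+b_k}$ and the factor $\frac{s}{s+b_k} < 1$, the logarithm strictly dominates it, so the derivative is strictly positive. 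This shows each $S_k$, hence the minimum, strictly increases in $r$, i.e. strictly decreases in $\epsilon$.

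For the limit I would compute $\lim_{s\to\infty}\log S_k$ via the expansion $\log\frac{s+a_k}{s+b_k} = \log\!\big(1 + \frac{a_k-b_k}{s+b_k}\big) = \frac{a_k-b_k}{s+b_k} + O(s^{-2})$, so that $\frac{s}{\kappa}\log\frac{s+a_k}{s+b_k} \to \frac{a_k - b_k}{\kappa} = \frac{d_{\min}}{\kappa}\big(\frac{1}{d_k} - \frac{1}{d_{k+1}}\big)$, whence $\lim_{\epsilon\to 0} S_k = \exp\!\big(\frac{d_{\min}}{\kappa}(\frac{1}{d_k} - \frac{1}{d_{k+1}})\big)$. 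Because the minimum runs over the finite index set $\{1,\dots,K-1\}$ and each $S_k$ converges, the limit passes through the minimum; using that $\exp$ is increasing to pull the minimum inside its argument then yields the asserted closed form $e^{\frac{d_{\min}}{\kappa}\,\min_k(\frac{1}{d_k} - \frac{1}{d_{k+1}})}$.

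I expect the single genuinely nontrivial step to be the derivative-positivity inequality in the second paragraph; the remaining arguments are bookkeeping and a routine asymptotic expansion. The clean route through it is exactly the substitution $w = \frac{a_k-b_k}{s+b_k}$ together with $\log(1+w) \ge \frac{w}{1+w}$, and the only subtlety worth flagging is that this standard logarithmic bound produces precisely the term $\frac{a_k-b_k}{s+a_k}$, which exceeds the subtracted term by the strictly-less-than-one factor $\frac{s}{s+b_k}$, so the gap is strict and the monotonicity is strict rather than merely weak.
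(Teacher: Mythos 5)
Your proposal is correct and follows essentially the same route as the paper's proof: both reparametrize by $m = \frac{\|\boldsymbol{y}_0\|}{\sqrt{K\,\epsilon}} - 1$ (your $s$), reduce monotonicity in $\epsilon$ to showing each factor inside the min is increasing in this variable, and establish the very same derivative-positivity inequality $\log\frac{s+a_k}{s+b_k} > \frac{s(a_k-b_k)}{(s+a_k)(s+b_k)}$ --- the paper via monotonicity of $x \mapsto \frac{1}{x} + \log x$ on $x>1$, you via the equivalent elementary bound $\log(1+w) \ge \frac{w}{1+w}$ plus the strict factor $\frac{s}{s+b_k}<1$. The limit computation also coincides: termwise asymptotics giving $e^{(a_k-b_k)/\kappa}$, exchange of the limit with the finite min (which you in fact justify more cleanly than the paper's appeal to continuity), and pulling the min inside the increasing exponential.
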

\end{tcolorbox}

Thus, if high sparsity is a desired goal, one should choose $\epsilon$ as small as possible. One should however note that the value of $\epsilon$ cannot be \emc{identically zero}, i.e. exact interpolation regime, because then  $\boldsymbol{f}_0=\boldsymbol{y}_0$, and since $\boldsymbol{y}_1=\boldsymbol{f}_0$, self-distillation process keeps producing the same model in each round.

\subsection{Multiclass Extension}

We can formulate multiclass classification, by regressing to a one-hot encoding. Specifically, a problem with $Q$ classes can be modeled by $Q$ output functions $f_1,\dots,f_Q$. An easy extension of our analysis to this multiclass setting is to require  the functions $f_1,\dots,f_Q$ be smooth by applying the same regularization $R$ to each and then adding up these regularization terms. This way, the optimal function for each output unit can be solved for each $q=1,\dots,Q$
\begin{equation}
\label{eq:goal_multiregress}
f_q^* \triangleq \arg\min_{f_q \in \mathcal{F}} \frac{1}{K} \sum_k \Big( f_q(\boldsymbol{x}_k) - {y_q}_k \Big)^2 \,+\, c_q \, R(f_q) \,.
\end{equation}

\subsection{Generalization Bounds}
While the goal of this work is to study the implicit regularization effect of self-distillation, our result can be easily translated into generalization guarantees. Recall from (\ref{eq:sol_f_star_t}) that the regression solution after $t$ rounds of self-distillation has the form $f^*_t(\boldsymbol{x}) \,=\, \boldsymbol{g}_{\boldsymbol{x}}^T \boldsymbol{V}^T \boldsymbol{D}^{-1} ( \Pi_{i=0}^t \boldsymbol{A}_t ) \boldsymbol{V} \boldsymbol{y}_0$. We can show that (proof in Appendix~\ref{sec:equiv_kernel}), there exists a positive definite kernel $g^\dag(\,.\,,\,.\,)$ that performing standard kernel ridge regression with it over the same training data $\cup_{k=1}^K \{(\boldsymbol{x}_k,y_k)\}$ yields the function $f^\dag$ such that $f^\dag=f^*_t$. Furthermore, we can show that the spectrum of the Gram matrix $\boldsymbol{G}^\dag[j,k] \triangleq \frac{1}{K} g^\dag(\boldsymbol{x}_j,\boldsymbol{x}_k)$ in the latter kernel regression problem relates to spectrum of $\boldsymbol{G}$ via,
\begin{equation}
\label{eq:g_g_dag}
d_k^\dag =  c_0 \frac{1}{\frac{\Pi_{i=0}^t (d_k+ c_i)}{d_k^{t+1}}-1} \,.
\end{equation}

The identity (\ref{eq:g_g_dag}) enables us to leverage existing generalization bounds for standard kernel ridge regression. These results often only need the spectrum of the Gram matrix. For example, Lemma~22 in~\cite{BartlettM02} shows the Rademacher complexity of the kernel class is proportional to $\sqrt{\Trace(\boldsymbol{G}^\dag)}=\sqrt{\sum_{k=1}^K d_k^\dag}$ and then Theorem~8 of~\cite{BartlettM02} translates that Rademacher complexity into a generalization bound. Note that $\frac{\Pi_{i=0}^t (d_k+ c_i)}{d_k^{t+1}}$ increases in $t$, which implies $d^\dag_k$ and consequently $\sqrt{\Trace(\boldsymbol{G}^\dag)}$ decreases in $t$. 


A more refined bound in terms of the tail behavior of the eigenvalues $d^\dag_k$ (to better exploit the sparsity pattern) is the Corollary~6.7 of~\cite{Bartlett2005} which provides a generalization bound that is affine in the form $\min_{k \in \{0,1,\dots,K\}} \left( \frac{k}{K}+\sqrt{\frac{1}{K} \sum_{j=k+1}^K d_j^\dag} \right)$, where the eigenvalues $d_k^\dag$ for $k=1,\dots,K$, are sorted in non-increasing order .

\section{Illustrative Example}
\label{sec:illustrative}

Let $\mathcal{F}$ be the space of twice differentiable functions that map $[0,1]$ to $\mathbb{R}$,
\begin{equation}
\mathcal{F} \triangleq \{ f \,|\, f:[0,1] \rightarrow \mathbb{R} \}\,.
\end{equation}
Define the linear operator $P: \mathcal{F} \rightarrow \mathcal{F}$ as,
\begin{equation}
\label{eq:operator}
[P f](x) \triangleq \frac{d^2}{dx^2} f(x)\,,
\end{equation}
subject to boundary conditions,
\begin{equation}
\label{eq:boundary_cond}
f(0) = f(1) = f^{\prime\prime}(0) = f^{\prime\prime}(1) \,=\, 0 \,. 
\end{equation}
The associated regularization functional becomes,
\begin{equation}
\label{eq:R_illustrative}
R(f) \triangleq \int_0^1 \big( \frac{d^2}{dx^2} f(x) \big)^2 \, dx\,.
\end{equation}
Observe that this regularizer encourages smoother $f$ by penalizing the second order derivative of the function. The Green's function of the operator associated with the kernel of $R$ subject to the listed boundary conditions is a spline with the following form \cite{Helene} (see Figure~\ref{fig:illustrative}-a),
\begin{equation}
g(x,x^\dag) = \frac{1}{6}\max\big((x-x^\dag)^3,0 \big) - \frac{1}{6}x(1-x^\dag)(x^2-2x^\dag+{x^\dag}^2) \,.
\end{equation}

\begin{figure*}
\begin{centering}
\begin{tabular}{c c c}
\includegraphics[width=2.0in]{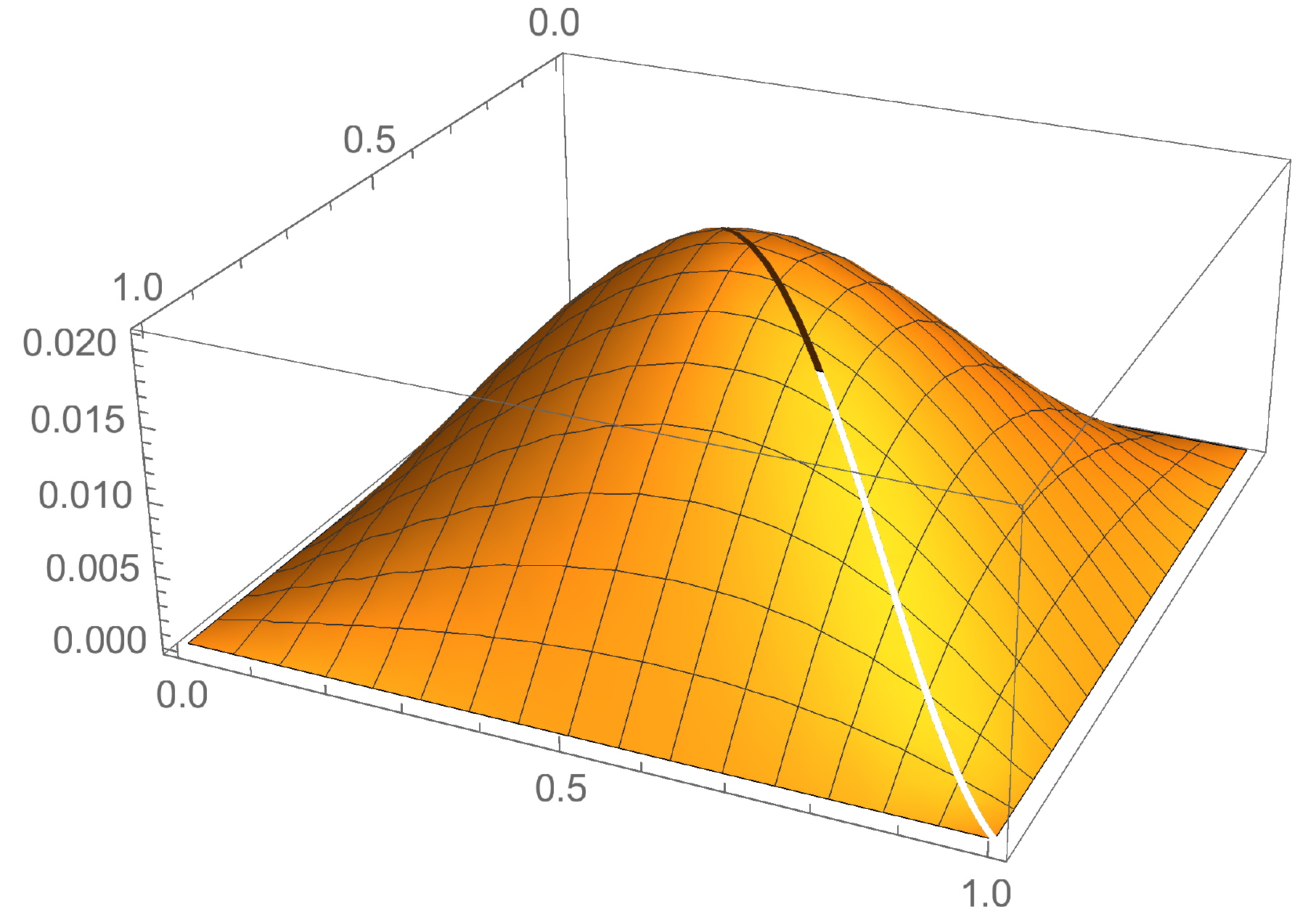} &
\includegraphics[width=2.0in]{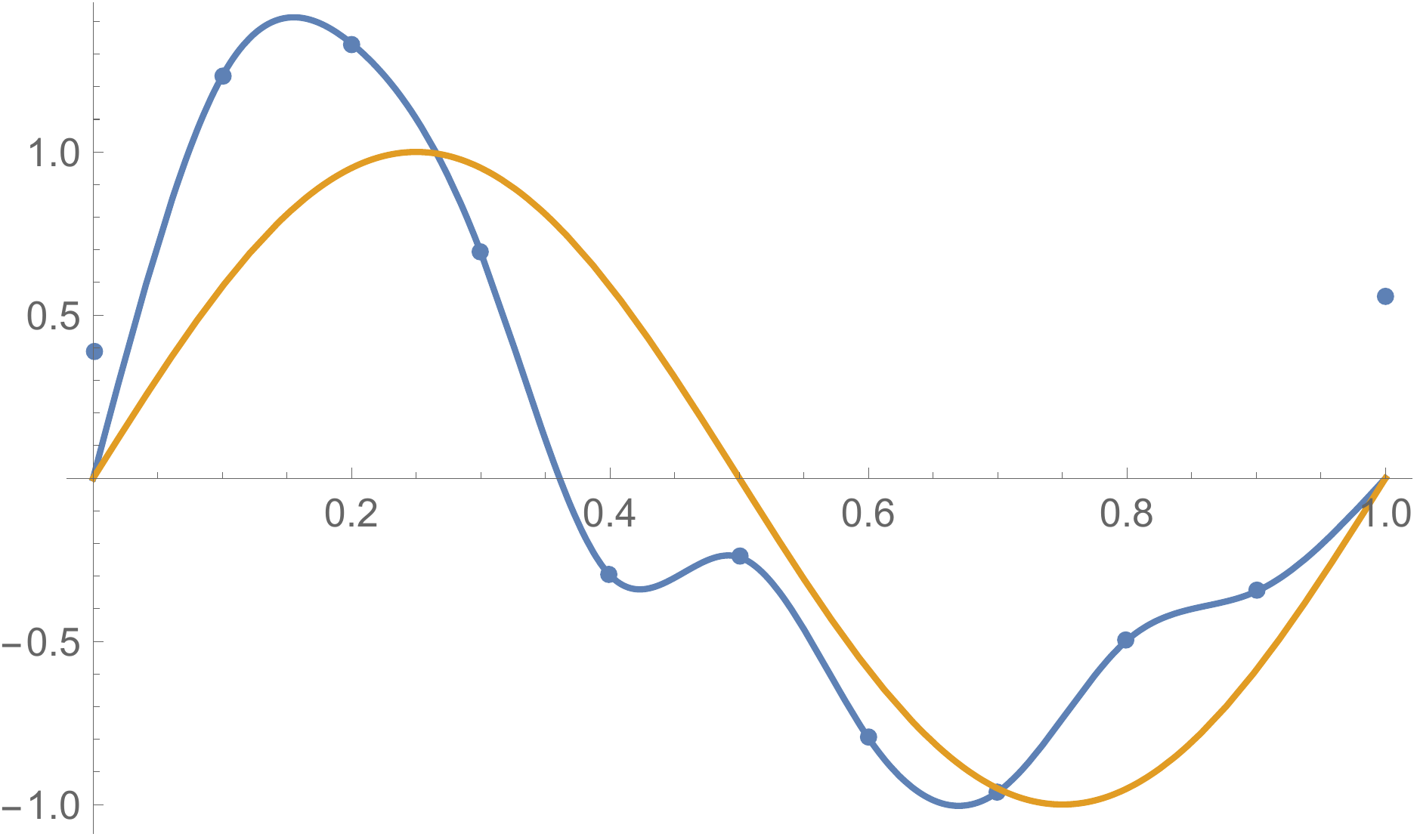} &
\includegraphics[width=2.0in]{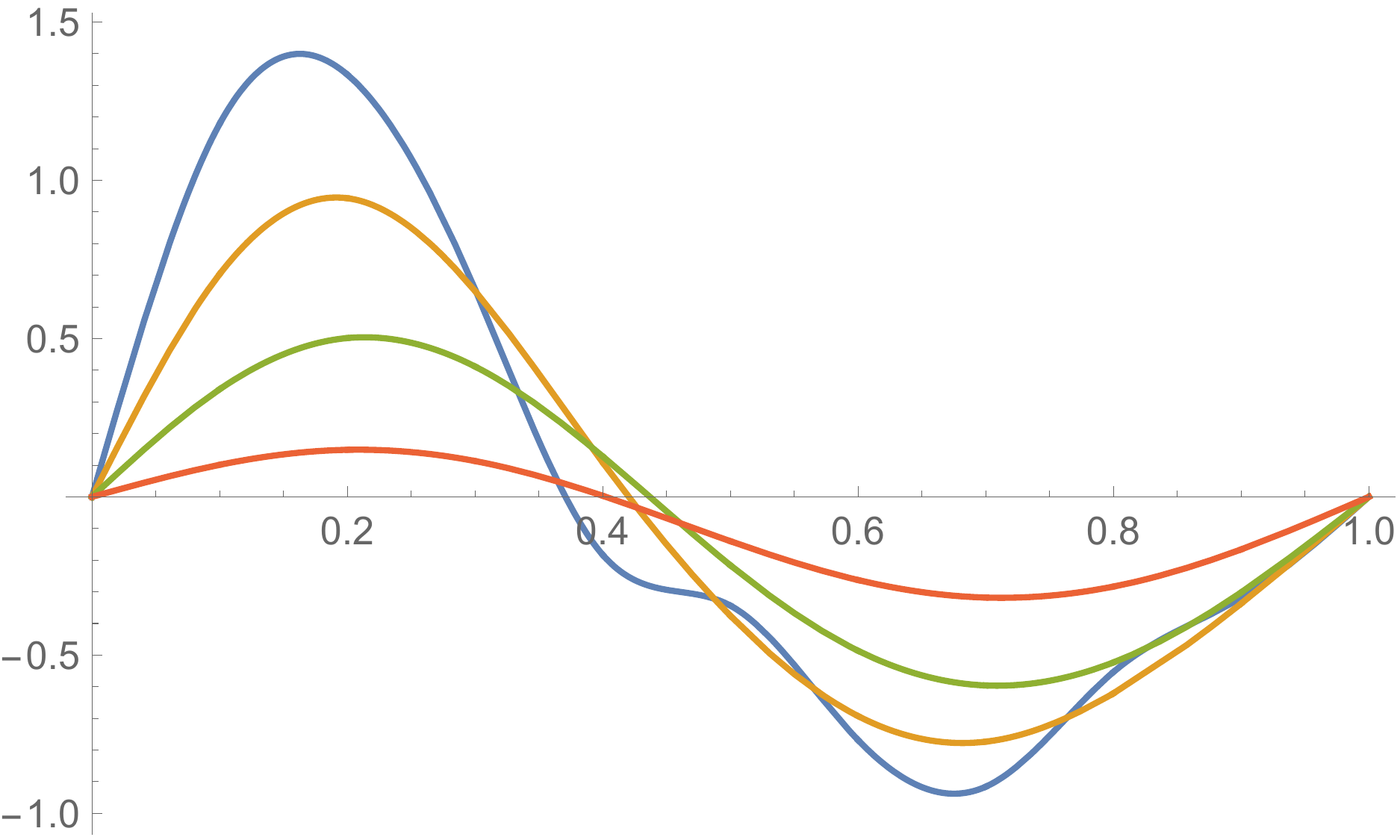} \\
(a) & (b) & (c)
\end{tabular}
\caption{Example with $R(f)(x) \triangleq \int_0^1 \big( \frac{d^2}{dx^2} f(x) \big)^2 \, dx$. 
(a) Green's function associated with the kernel of $R$. (b) Noisy training samples (blue dots) from  underlying function (orange) $y=\sin(2\pi x)$. Fitting without regularization leads to overfitting (blue curve). (c) Four rounds of self-distillation (blue, orange, green, red) with $\epsilon=0.04$.}
\label{fig:illustrative}
\end{centering}
\end{figure*}

\begin{figure*}
\begin{centering}
\includegraphics[width=6.0in]{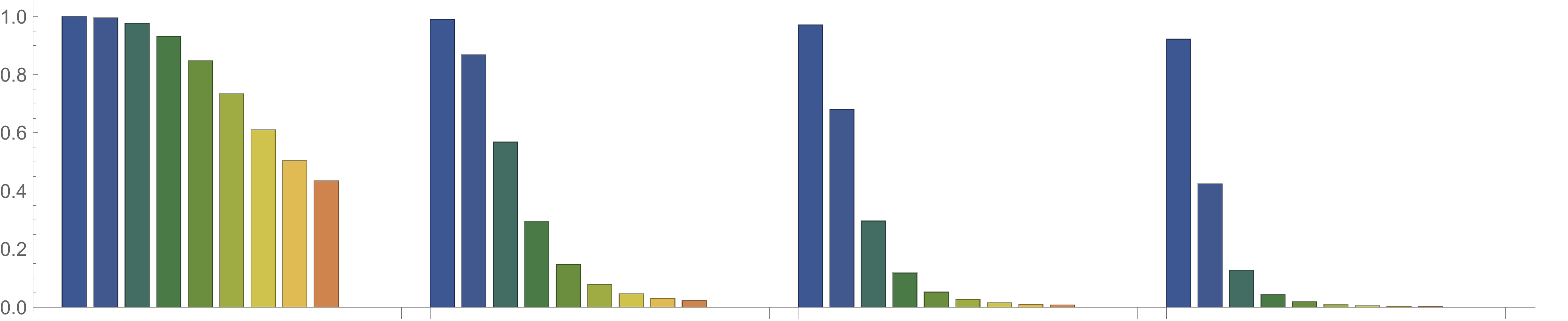}
\caption{Evolution of the diagonal entries of (the diagonal matrix) $\boldsymbol{B}_t$ from (\ref{eq:def_B}) at distillation rounds $t=0$ (left most) to $t=3$ (right most). The number of training points is $K=11$, so $\boldsymbol{B}_t$ which is $K \times K$ diagonal matrix has $11$ entries on its diagonal, each corresponding to one of the bars in the chart.}
\label{fig:charts}
\end{centering}
\end{figure*}

Now consider training points $(x_k,y_k)$ sampled from the function $y=\sin(2 \pi x)$. Let $x_k$ be evenly spaced in the interval $[0,1]$ with steps of $0.1$, and $y_k=x_k+\eta$ where $\eta$ is a zero-mean normal random variable with $\sigma=0.5$ (Figure~\ref{fig:illustrative}-b).

As shown in Figure~\ref{fig:illustrative}-c,  the regularization induced by self-distillation initially improves the quality of the fit, but after that point additional rounds of self-distillation over-regularize and lead to underfitting.

We also computed the diagonal matrix  $\boldsymbol{B}_t$ (see (\ref{eq:def_B}) for definition) at each self-distillation round $t$, for $t=0,\dots,3$ (after that, the solution collapses). Recall from (\ref{eq:B_as_basis_coeffs}) that the entries of this matrix can be thought of as the coefficients of basis functions used to represent the solution. As predicted by our analysis, self-distillation regularizes the solution by sparsifying these coefficients. This is evident in Figure~\ref{fig:charts} where smaller coefficients shrink faster.

\section{Experiments}
\label{sec:experiments}
In our experiments, we aim to empirically evaluate our theoretical analysis in the setting of deep neural networks. Although our theoretical results apply to Hilbert space rather than neural networks, recent findings show that at least very wide neural networks can be viewed as a reproducing kernel Hilbert space, which is equivalent to the setup we study here (with the kernel being the Green's function).
We adopt a clear and simple setup that is easy to reproduce (see the provided code) and also light-weight enough to run more then 10 consecutive rounds of self-distillation. Note that we are not aiming for state-of-the-art performance; readers interested in stronger baselines on studying self-distillation are referred to~\cite{born-again-2018, YangXQY19, Ahn2019VariationalID}. Note that, however, these works are limited to one or two rounds of self-distillation. The ability to run self-distillation for a larger number of rounds allows us to demonstrate the eventual decline of the test performance. To the best of our knowledge, this is the first time that the performance decline regime is observed. The initial improvement and later continuous decline is consistent with our theory, which shows rounds of self-distillation continuously amplify the regularization effect. While initially this may benefit generalization, at some point the excessive regularization leads to underfitting.

\subsection{Motivation}

Recent works on the Neural Tangent Kernel (NTK) \cite{Jacot2018} have shown that training deep models with infinite width and $\ell_2$ loss (and small step size for optimization) is equivalent to performing \emc{kernel regression} with a specific kernel. The kernel function, which is outer product of network's Jacobian matrix, encodes various biases induced by the architectural choices (convolution, pooling, nested representations, etc.) that collectively enable the deep models generalize well despite their high capacity.

The regularization form (\ref{eq:regularizer}) that we studied here also reduces to a kernel regression problem, with the kernel being the Green's function of the regularizer. In fact, regularized regression (\ref{eq:goal_regress}) and kernel ridge regression can be converted to each other \cite{Smola98} and thus, in principle, one can convert an NTK kernel\footnote{While the pure NTK theory is typically introduced as unregularized kernel regression, its practical instantiations often involve regularization (for numerical stability and other reasons) \cite{Lee2020, shankar2020neural}.} into a regularizer of form (\ref{eq:regularizer}). This implies that at least in the NTK regime of neural networks, our analysis can provide a reasonable representation of self-distillation.

Of course, real architectures have finite width and thus the NTK (and consequently our self-distillation analysis) may not always be a faithful approximation. However, the growing literature on NTK is showing scenarios where this regime is still valid under large width \cite{NIPS2019_9063}, particular choices of scaling between the weights and the inputs \cite{NIPS2019_8559}, and for fully connected networks \cite{2019arXiv190608034G}.

We hope our analysis can provide some insight into how self-distillation dynamic affects generalization. For example, the model may benefit from stronger regularizer encoded by the underlying regularizer (or equivalently kernel), and thus improve on test performance initially. However, as we discussed, excessive self-distillation can over regularize the model and thus lead to underfitting. According to this picture, the test accuracy may first go up but then will go down (instead of approaching its best value, for example). Our empirical results on deep models follow this pattern.

\subsection{Results}
\paragraph{Setup.} We use Resnet \cite{He2015DeepRL} and VGG \cite{Simonyan15} neural architectures and train them on CIFAR-10 and CIFAR-100 datasets \cite{Krizhevsky2009LearningML}. Training details and additional results are left to the appendix.
Each curve in the plots corresponds to 10 runs from randomly initialized weights, where each run is a chain of self-distillation steps indicated in the $x$-axis. In the plots, a point represents the average and the envelope around it reflects standard deviation. Any training accuracy reported here is based on assessing the model $f_t$ at the $t$'th self-distillation round on the \emc{original} training labels $\boldsymbol{y}_0$.

\paragraph{$\ell_2$ Loss on Neural Network Predictions.} Here we train the neural network using $\ell_2$ loss. The error is defined as the difference between predictions (softmax over the logits) and the target labels. These results are in concordance with a regularization viewpoint of self-distillation. The theory suggests that self-distillation progressively amplifies the underlying regularization effect. As such, we expect the training accuracy (over $\boldsymbol{y}_0$) to drop in each self-distillation round. Test accuracy may go up if training can benefit from amplified regularization. However, from the theory we expect the test accuracy to go down at some point due to over regularization and thus underfitting. Both of these phenomena are observed in Figure~\ref{fig:L2-pred-resnet50-cifar10}.

\begin{figure}[t]
    \centering
    \begin{tabular}{c c c c}
    \hspace{-4mm} 
            \includegraphics[width=0.245\textwidth]{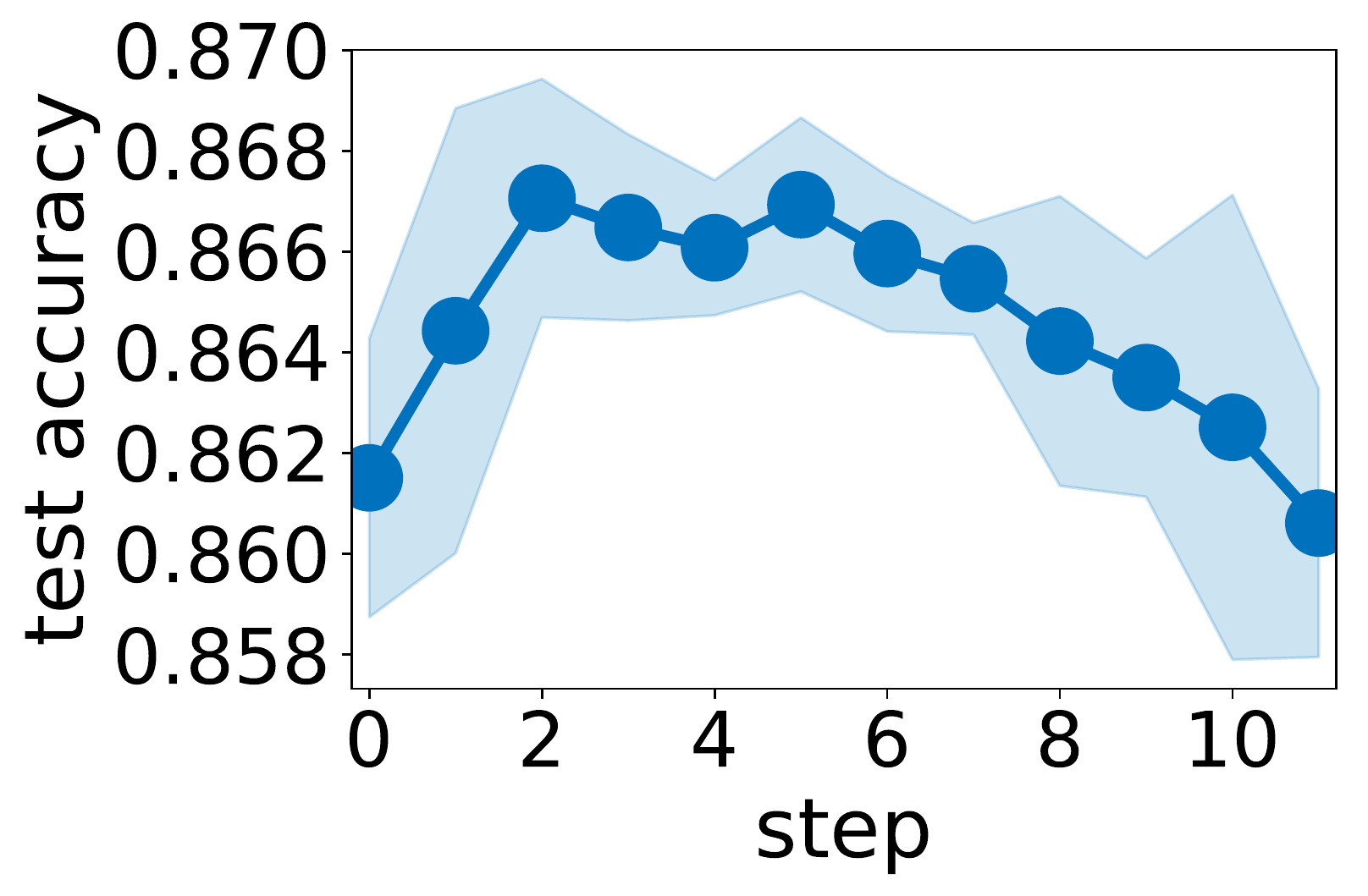} & \hspace{-5.5mm} 
            \includegraphics[width=0.245\textwidth]{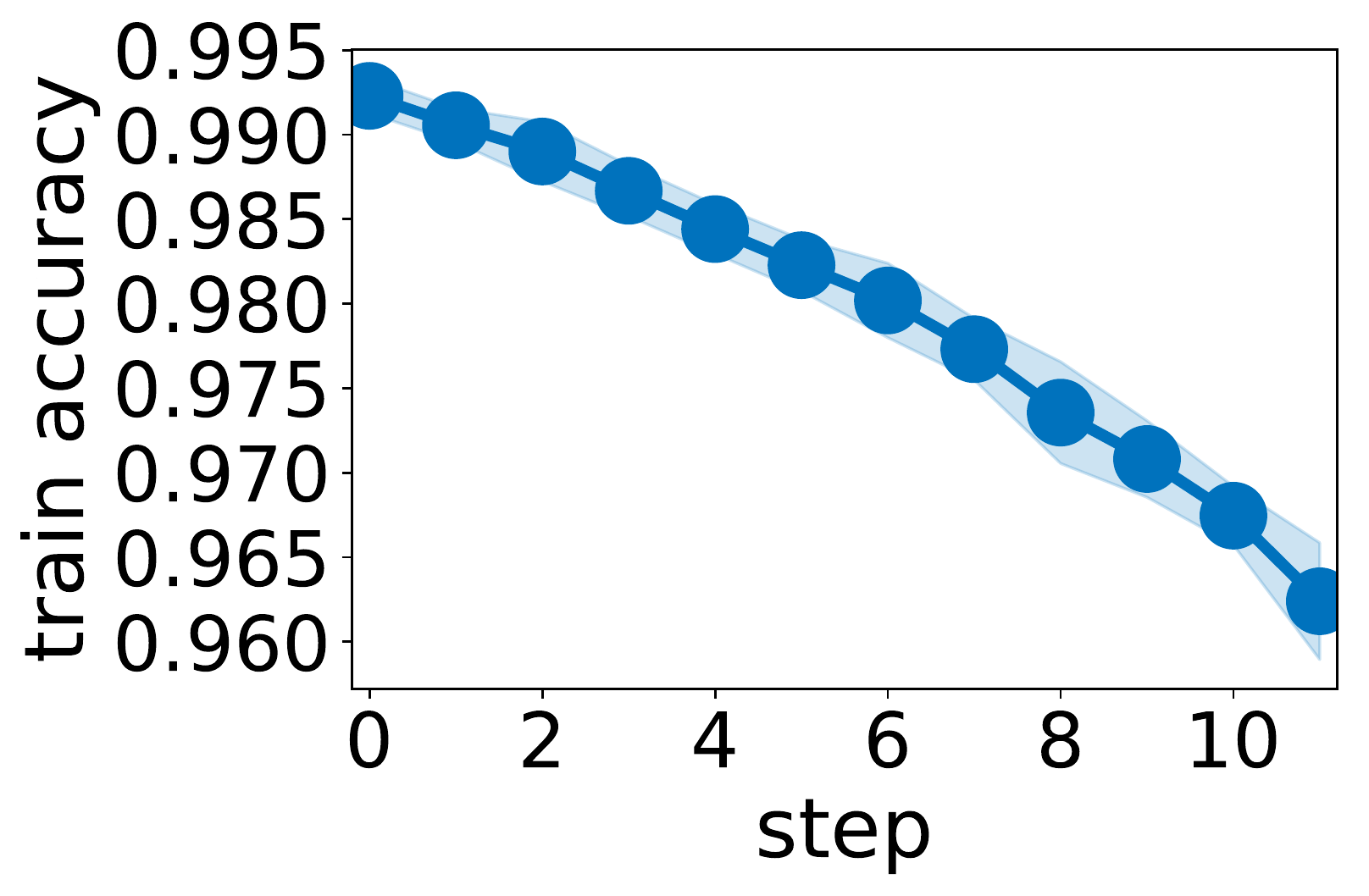} & \hspace{-2mm}
            \includegraphics[width=0.25\textwidth]{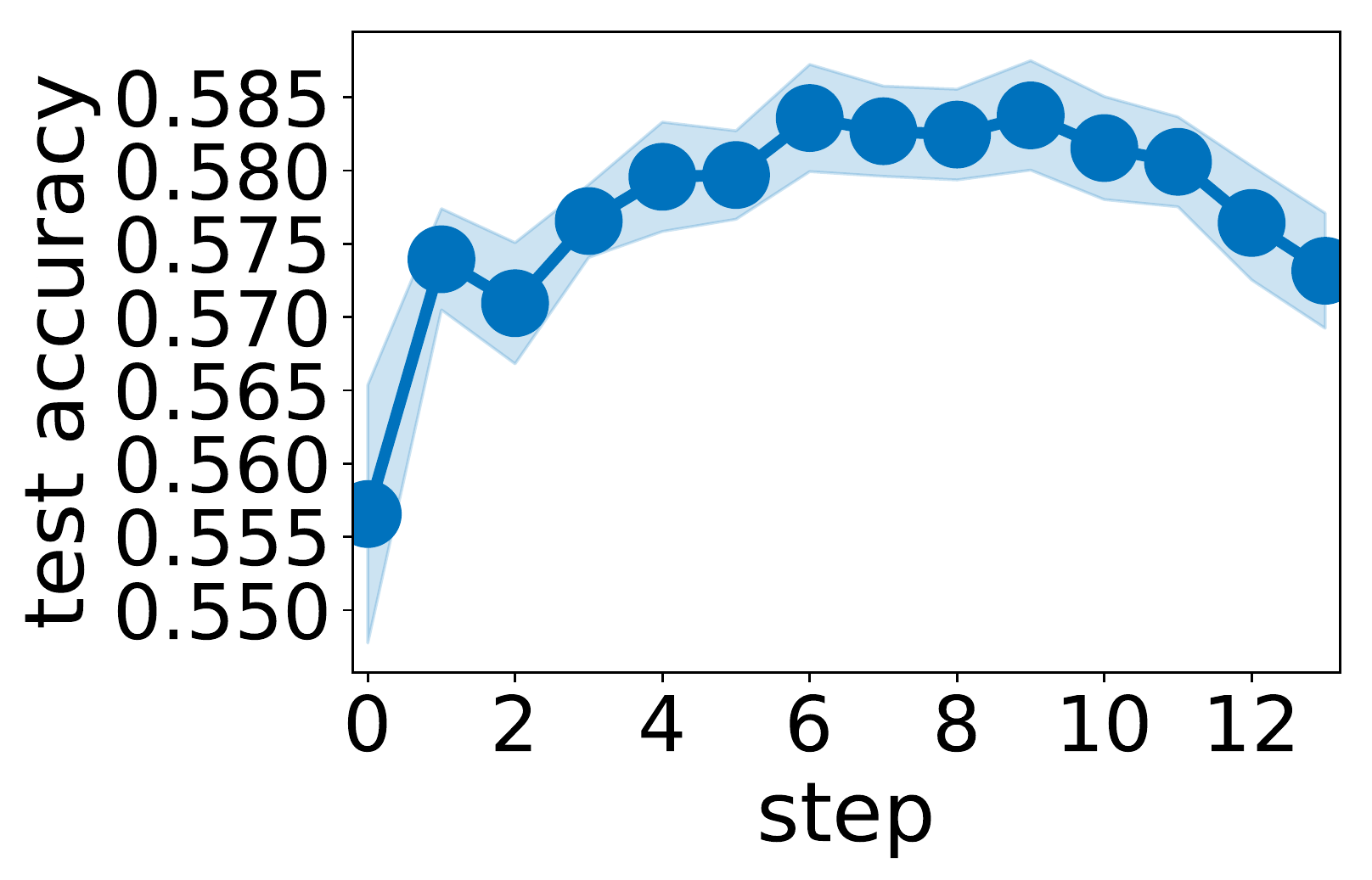} & \hspace{-5.5mm}
            \includegraphics[width=0.25\textwidth]{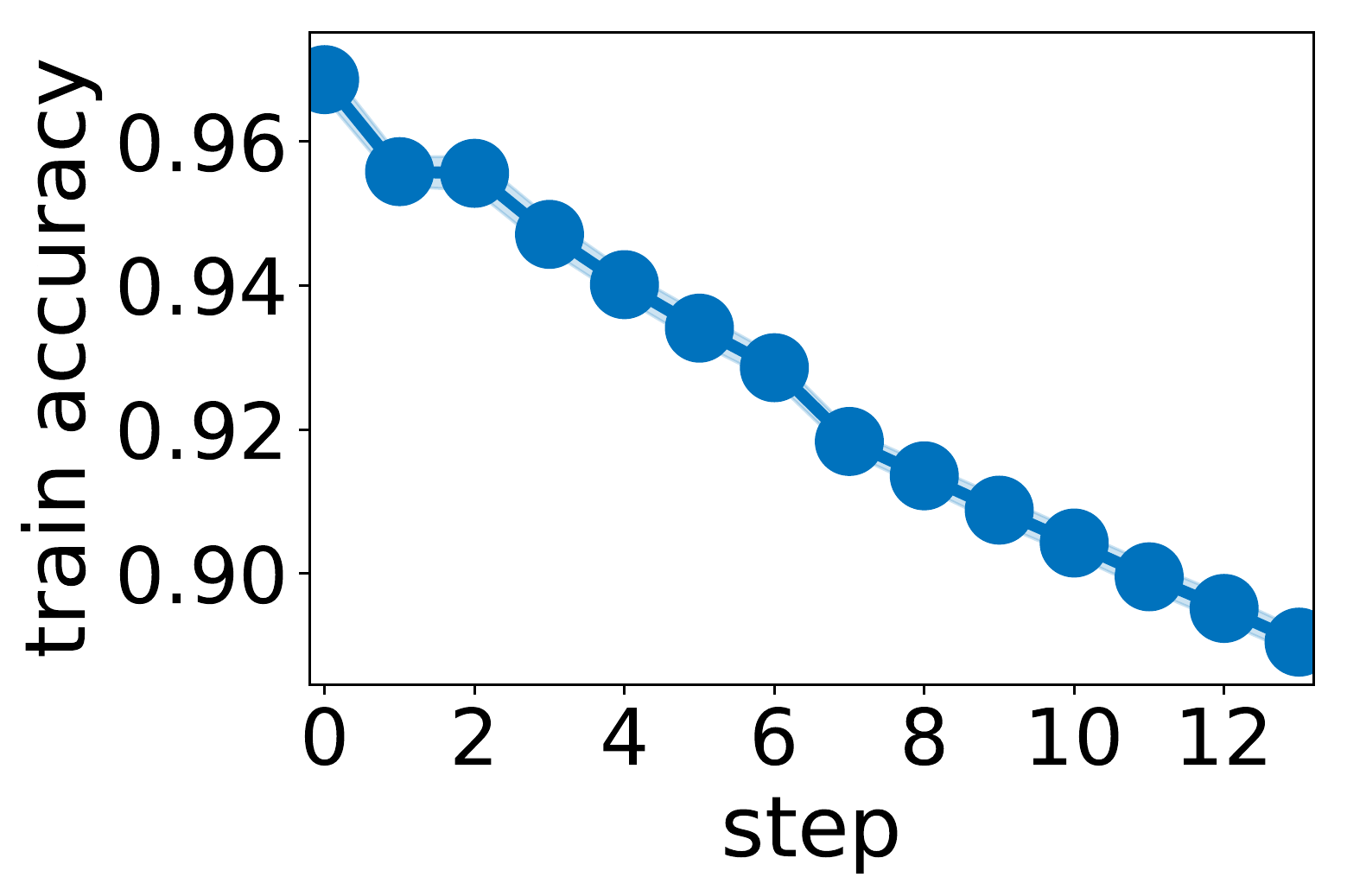} 
    \end{tabular}
    \caption{Accuracy of self-distillation steps using Resnet with $\ell_2$ loss on neural network predictions.  
    (Left 2 plots): test/train accuracy on CIFAR-10.
    (Right 2 plots): test/train accuracy on CIFAR-100.
    }
    \label{fig:L2-pred-resnet50-cifar10}
\end{figure}

\paragraph{Cross-Entropy Loss on Neural Network Predictions.} Although, our theory only applies to $\ell_2$,  loss, we empirically observed similar phenomena for cross-entropy as shown in Figure~\ref{fig:ce-pred-resnet50-cifar10}.

\paragraph{Self-Distillation versus Early Stopping.} By looking at the fall of the training accuracy over self-distillation round, one may wonder if early stopping (in the sense of choosing a larger error tolerance $\epsilon$ for training) would lead to similar test performance. However, in Section \ref{sec:early-stop} we discussed that self-distillation and early stopping have different regularization effects. Here we try to verify that.
Specifically, we record the training loss value at the end of each self-distillation round. We then train a batch of models from scratch until each batch converges to one of the recorded loss values. If the regularization induced by early stopping was the same as self-distillation, then we should have seen similar test performance between a self-distilled model that achieves a specific loss value on the original training labels, and a model that stops training as soon as it reaches the same level of error. However, the left two plots in Figure~\ref{fig:sd-vs-es-resnet50-cifar10} verifies that these two have different regularization effects. 

\paragraph{Self-Distillation on Other Networks.} The right two plots in Figure~\ref{fig:sd-vs-es-resnet50-cifar10} show the performance of $\ell_2$ distillation on CIFAR-100 using VGG network. This experiment aims to show that the theory and empirical findings are not dependent to a specific structure and apply to architectures beyond Resnet.

\begin{figure}[t]
    \centering
    \begin{tabular}{c c c c}
     \hspace{-4mm}
    \includegraphics[width=0.25\textwidth]{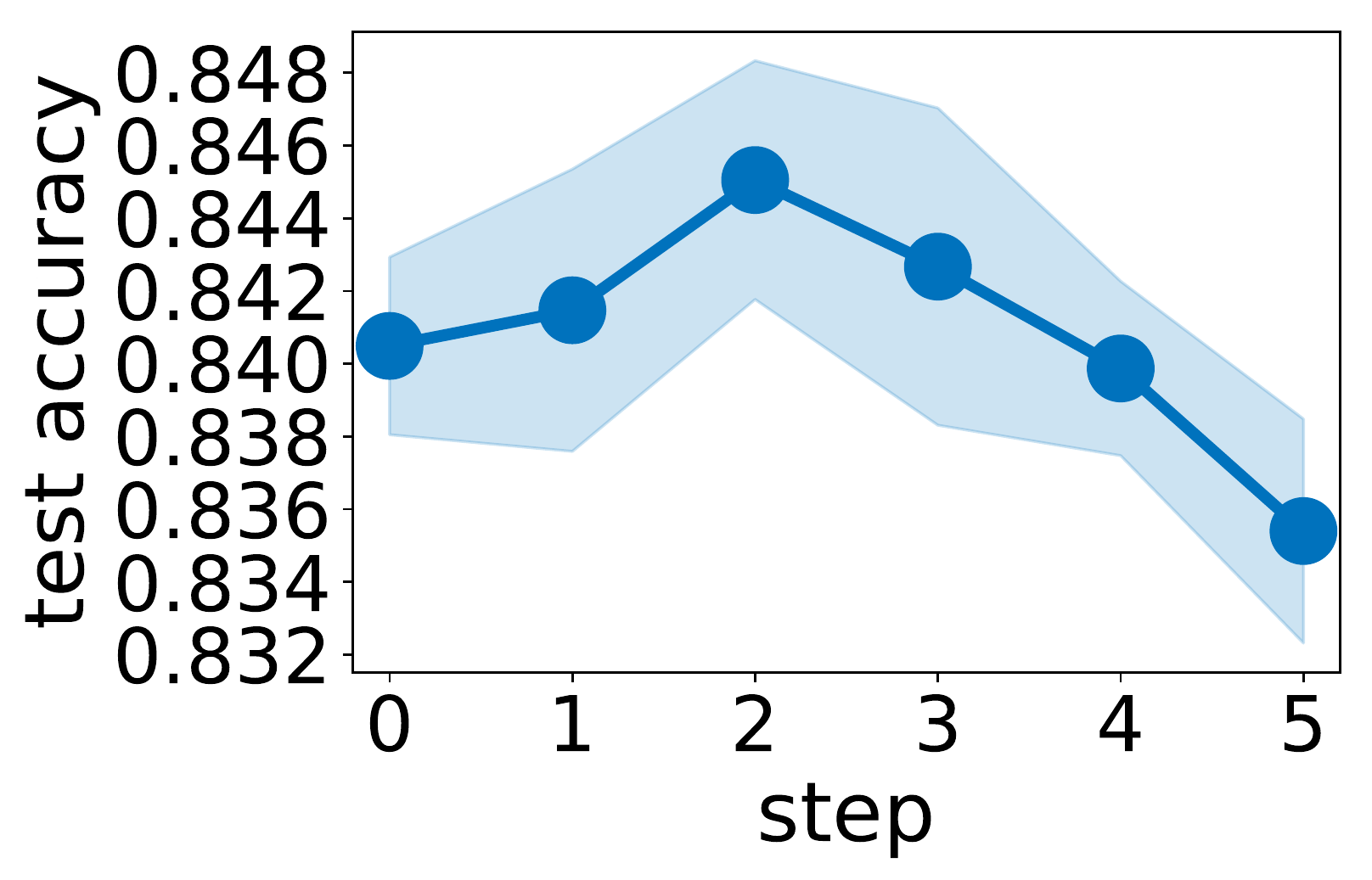} &  \hspace{-5.5mm}
    \includegraphics[width=0.245\textwidth]{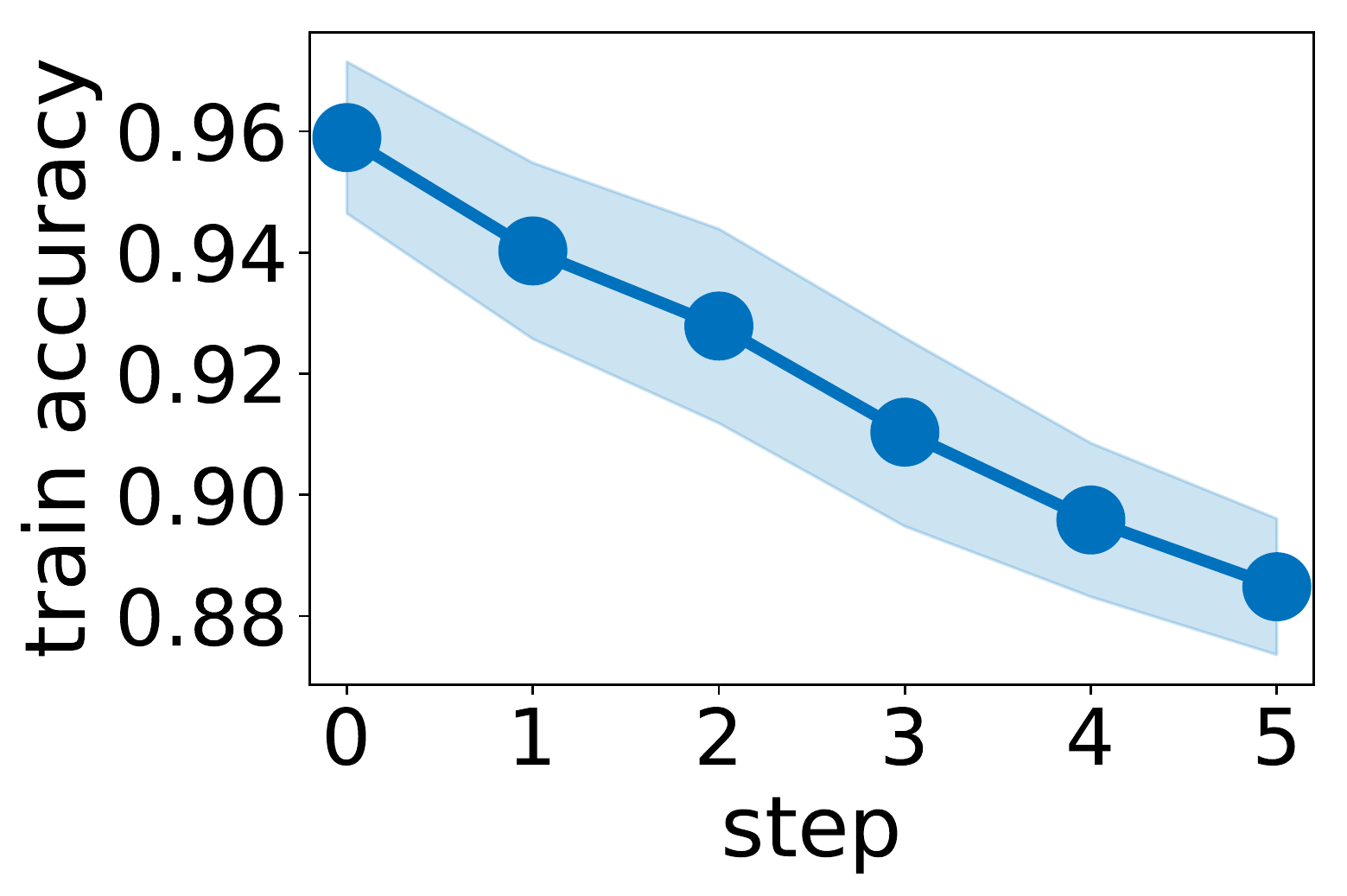} &  \hspace{-2mm}
        \includegraphics[width=0.24\textwidth]{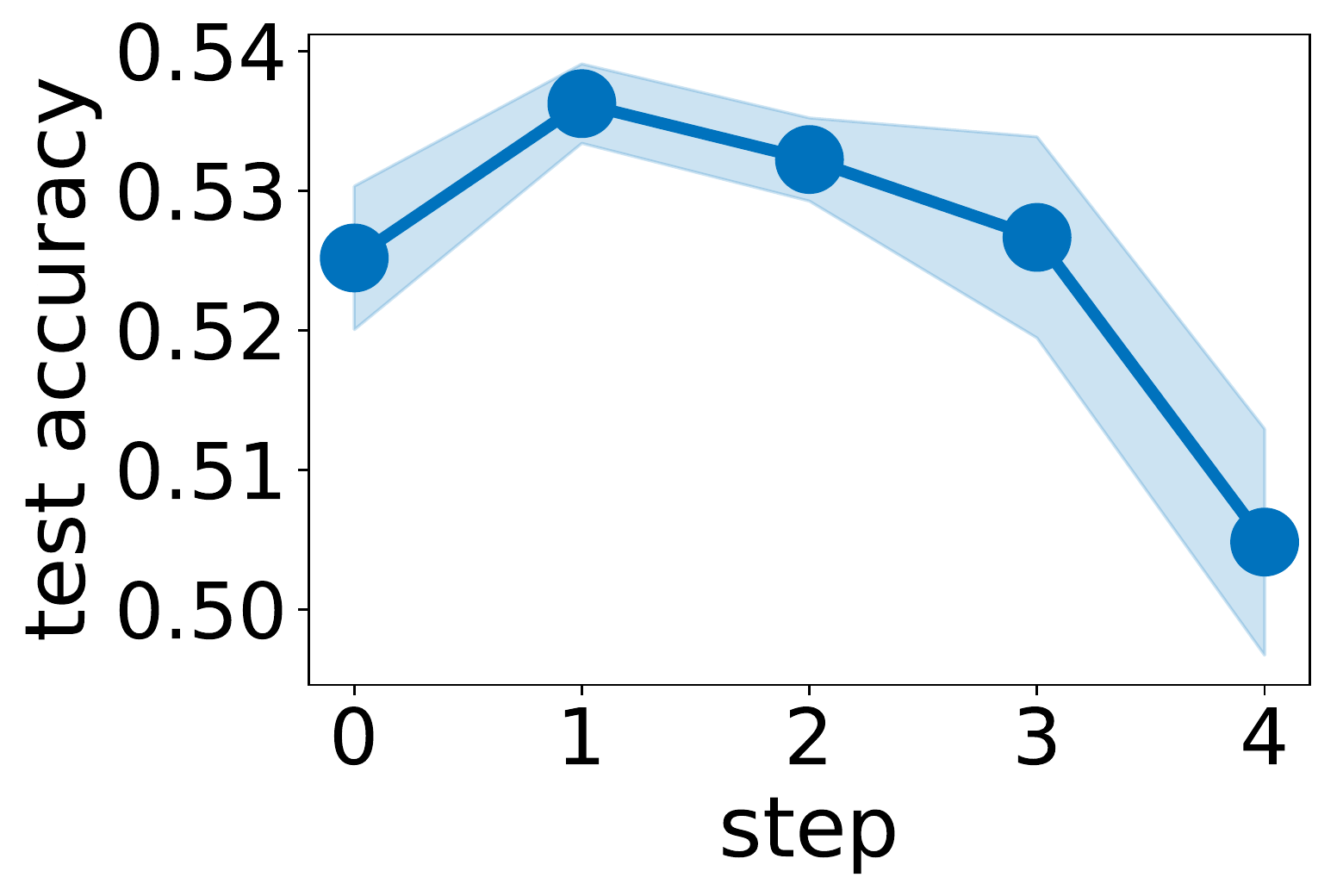} &
    \includegraphics[width=0.24\textwidth]{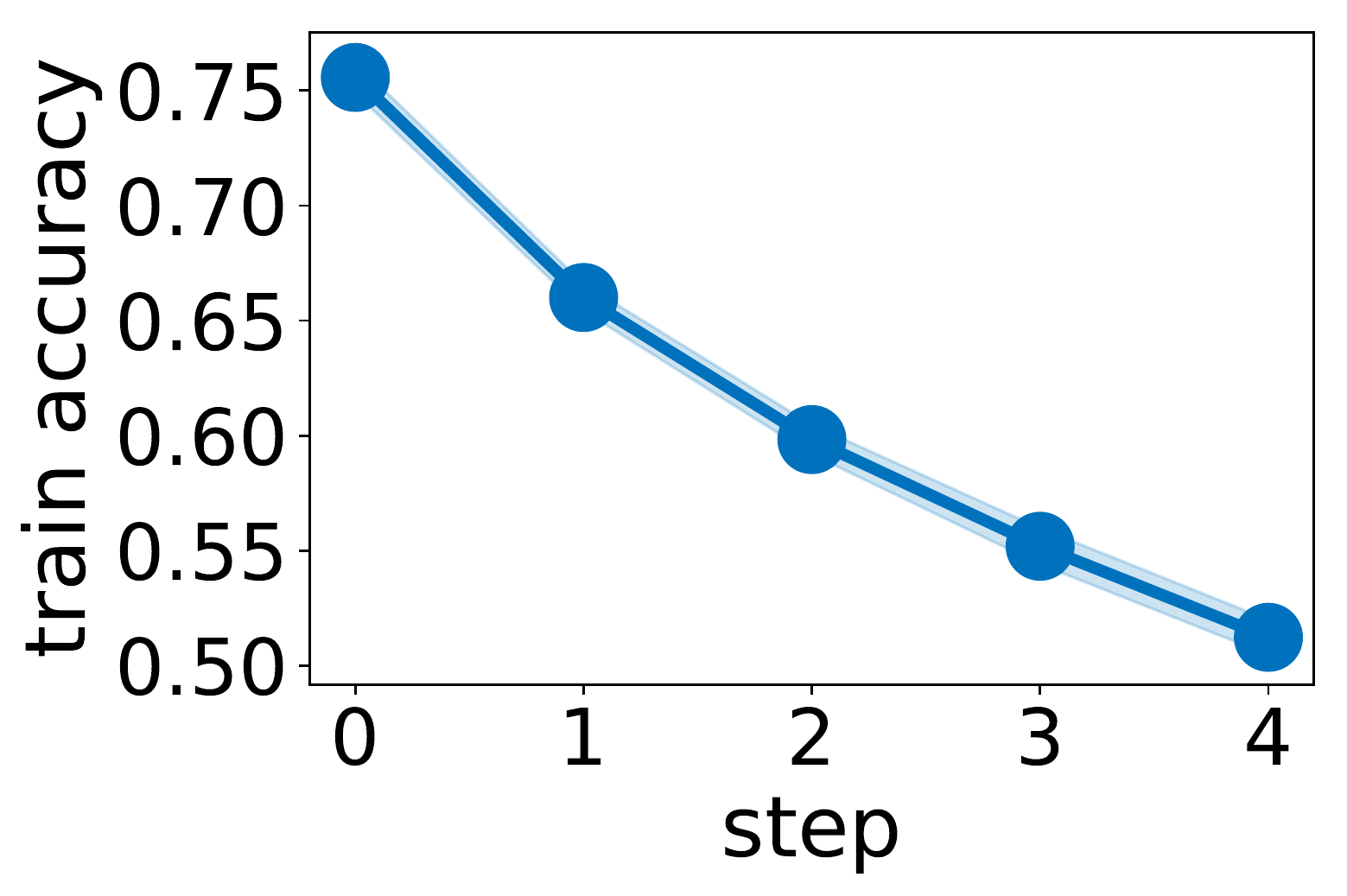}
    \end{tabular}
    \caption{
Self-distillation steps using Resnet with cross-entropy loss on neural network predictions.
    (Left 2 plots): test/train accuracy on CIFAR-10.
    (Right 2 plots): test/train accuracy on CIFAR-100.    
    }
    \label{fig:ce-pred-resnet50-cifar10}
\end{figure}

\begin{figure}[t]
    \centering
    \begin{tabular}{c c c c}
    \hspace{-4mm}
    \includegraphics[width=0.245\textwidth]{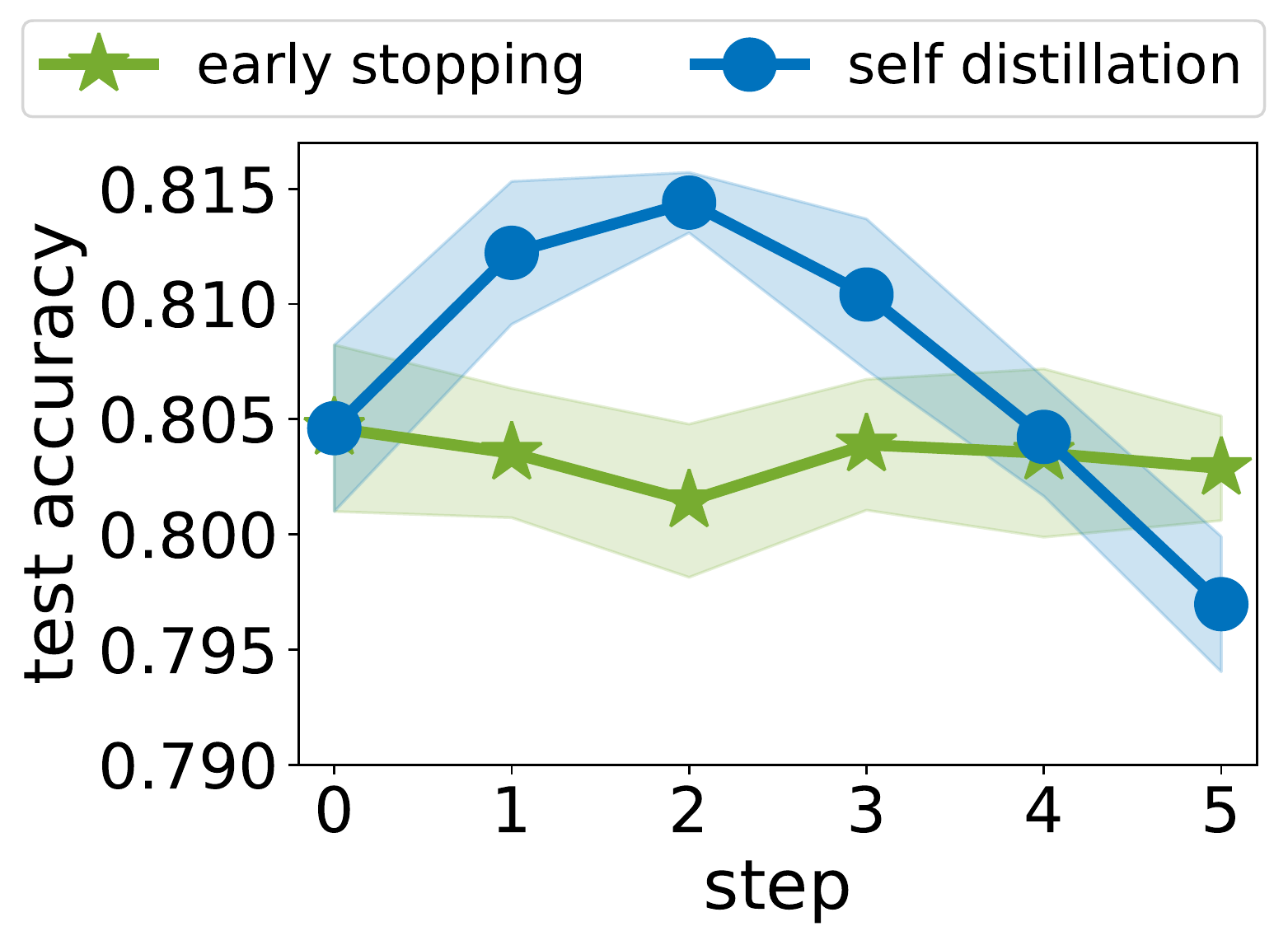} & \hspace{-5.5mm}
    \includegraphics[width=0.245\textwidth]{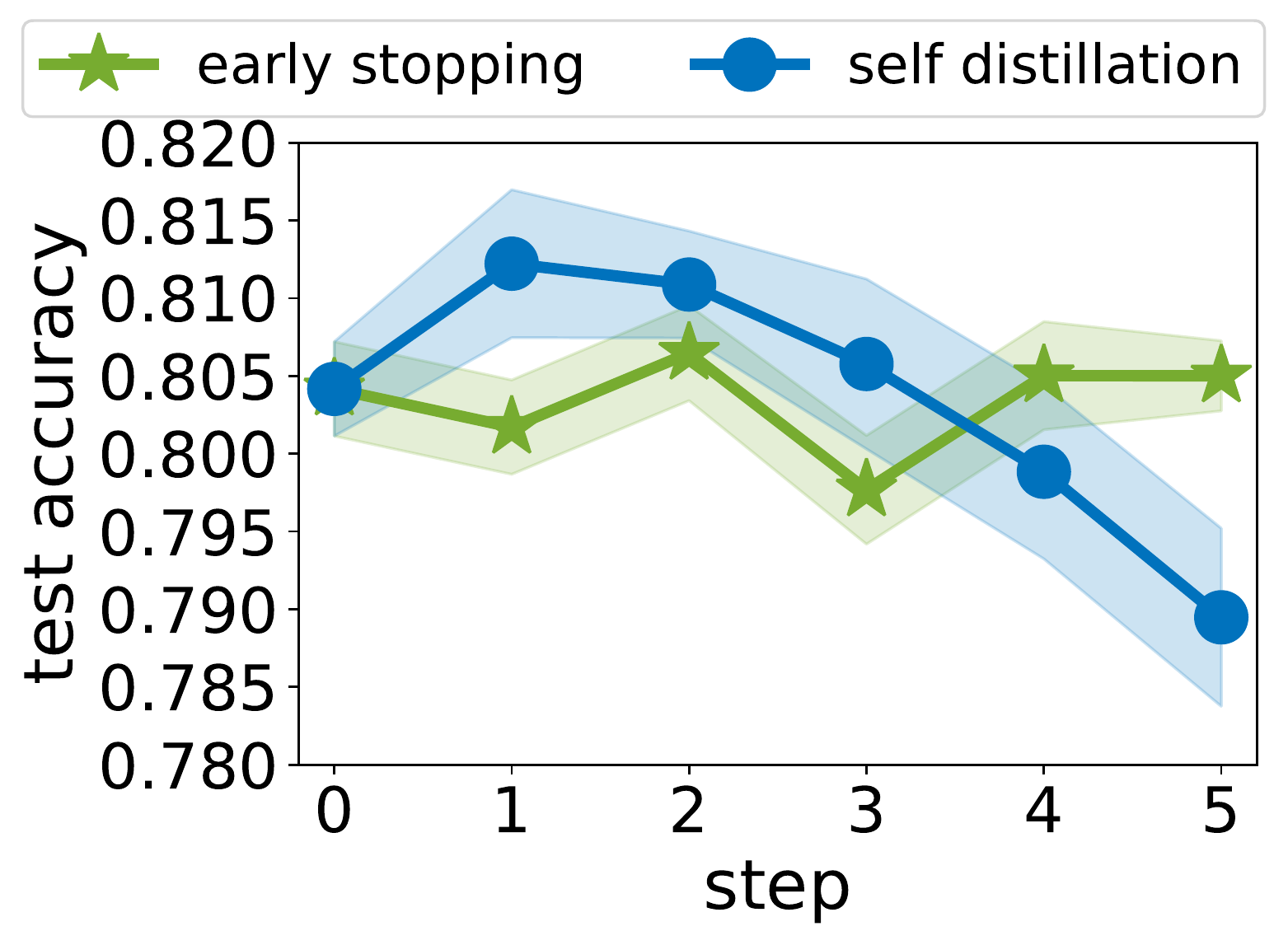} &     \includegraphics[width=0.245\textwidth]{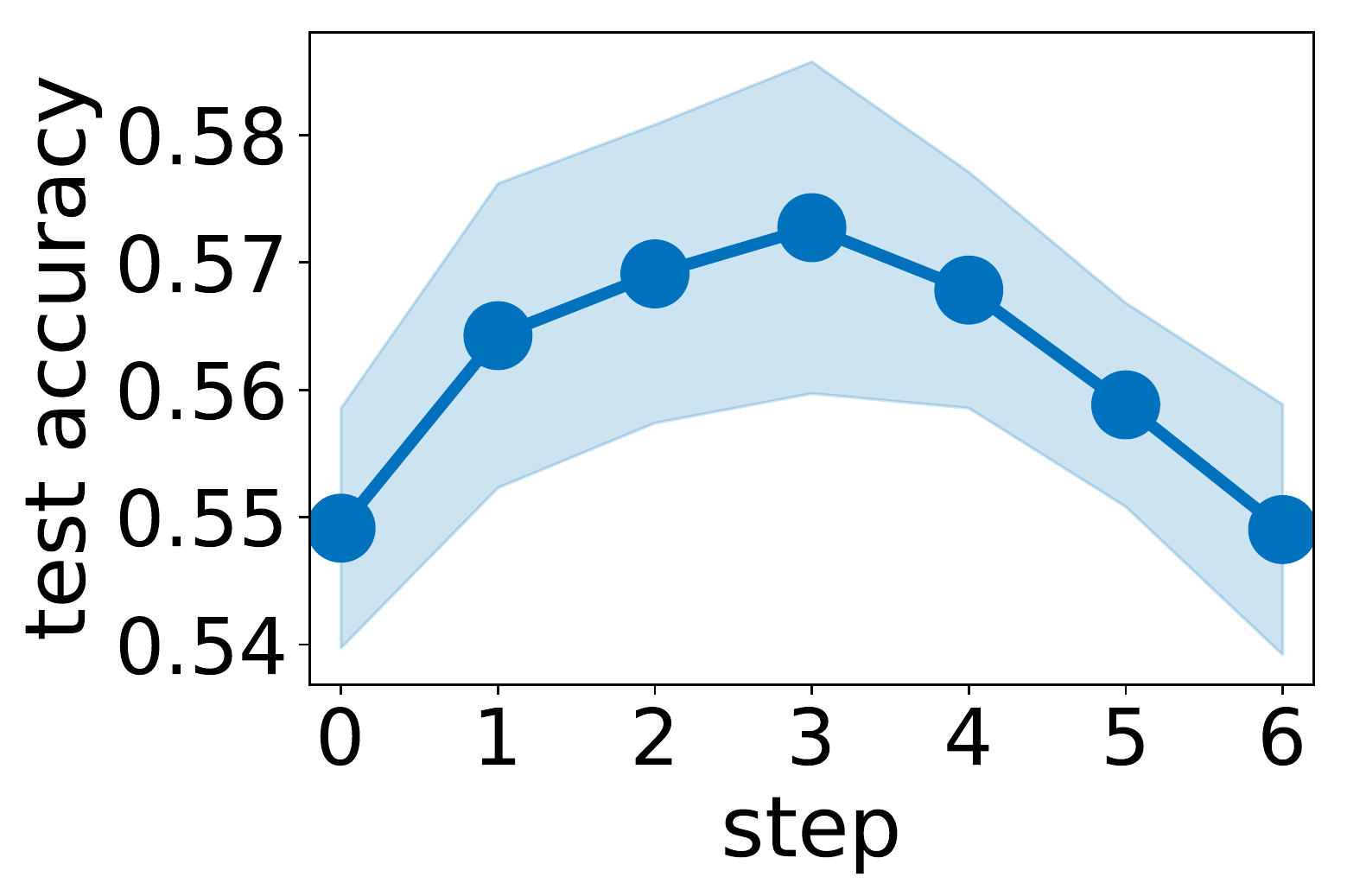} &  \hspace{-5.5mm}
    \includegraphics[width=0.245\textwidth]{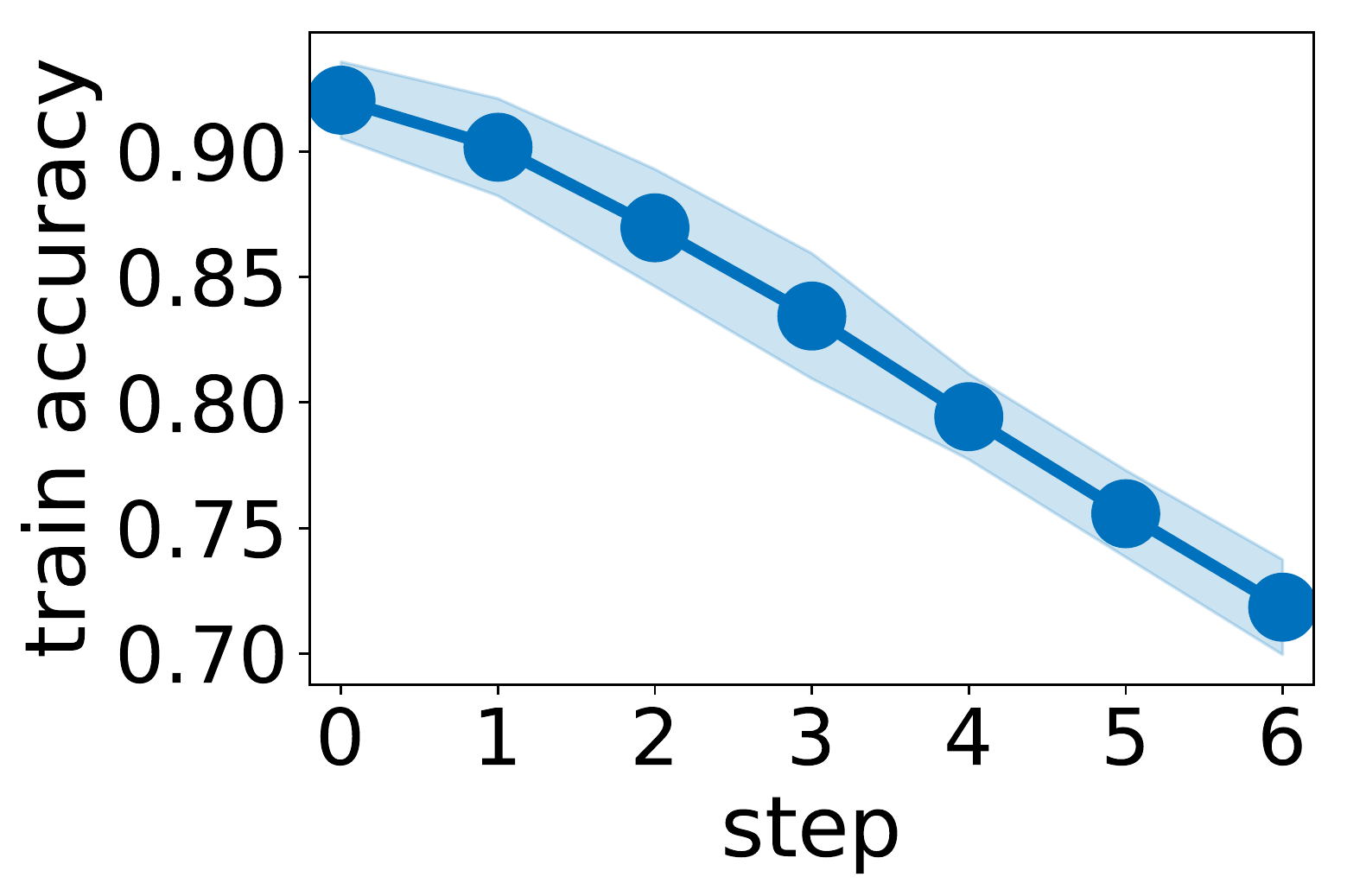} 
    \end{tabular}
    \caption{ (Left 2 plots): self-distillation compared to early stopping for Resnet50 and CIFAR-10 using $\ell_2$ and cross-entropy loss, respectively.  (Right 2 plots): self-distillation with $\ell_2$ loss using VGG16 Network on CIFAR-100.}
    \label{fig:sd-vs-es-resnet50-cifar10}
\end{figure}

\section{Conclusion}
In this work, we presented a rigorous analysis of self-distillation for regularized regression in a Hilbert space of functions. We showed that self-distillation iterations in the setting we studied cannot continue indefinitely; at some point the solution collapses to zero. We provided a lower bound on the number of meaningful (non-collapsed) distillation iterations. In addition, we proved that self-distillation acts as a regularizer that progressively employs fewer basis functions for representing the solution. We discussed the difference in regularization effect induced by self-distillation against early stopping. We also showed that operating in near-interpolation regime facilitates the regularization effect. We discussed how our regression setting resembles the NTK view of wide neural networks, and thus may provide some insight into how self-distillation works in deep learning. 

We hope that our work can be used as a stepping stone to broader settings. In particular, studying cross-entropy loss as well as other forms of regularization are interesting directions for further research.

\section*{Acknowledgement}
We would like to thank colleagues at Google Research for their feedback: Moshe Dubiner, Pierre Foret, Sergey Ioffe, Yiding Jiang, Alan MacKey, Sam Schoenholz, Matt Streeter, and Andrey Zhmoginov. We also thank {\it NeurIPS2020} conference anonymous reviewers for their comments.

\bibliography{main}
\bibliographystyle{apalike}

\newpage
\appendix

\section{Solving the Variational Problem}
\label{sec:app_variational}

In this section we derive the solution to the following variational problem,
\begin{equation}
f^* \triangleq \arg\min_{f \in \mathcal{F}} \frac{1}{K} \sum_k \Big( f(\boldsymbol{x}_k) - y_k \Big)^2 \,+\, c \,  \int_{\mathcal{X}} \int_{\mathcal{X}} u(\boldsymbol{x},\boldsymbol{x}^\dag) f(\boldsymbol{x}) f(\boldsymbol{x}^\dag) \, d \boldsymbol{x} \, d \boldsymbol{x}^\dag \,.
\end{equation}
Using Dirac delta function, we can rewrite the objective function as,
\begin{equation}
f^* = \arg\min_{f \in \mathcal{F}} \frac{1}{K} \sum_k \big( \int_{\mathcal{X}} f(\boldsymbol{x}) \delta(\boldsymbol{x}-\boldsymbol{x}_k) \, d \boldsymbol{x} - y_k \big)^2 \,+\, c \int_{\mathcal{X}} \int_{\mathcal{X}} u(\boldsymbol{x},\boldsymbol{x}^\dag) f(\boldsymbol{x}) f(\boldsymbol{x}^\dag) \, d \boldsymbol{x} \, d \boldsymbol{x}^\dag\,.
\end{equation}
For brevity, name the objective functional $J$,
\begin{equation}
J(f) \triangleq \frac{1}{K} \sum_k \big( \int_{\mathcal{X}} f(\boldsymbol{x}) \delta(\boldsymbol{x}-\boldsymbol{x}_k) \, d \boldsymbol{x} - y_k \big)^2 \,+\, c \int_{\mathcal{X}} \int_{\mathcal{X}} u(\boldsymbol{x},\boldsymbol{x}^\dag) f(\boldsymbol{x}) f(\boldsymbol{x}^\dag) \, d \boldsymbol{x} \, d \boldsymbol{x}^\dag \,.
\end{equation}
If $f^*$ minimizes the $J(f)$, it must be a stationary point of $J$. That is, $J(f+\epsilon \phi)=J(f)$, for any $\phi \in \mathcal{F}$ as $\epsilon \rightarrow 0$. More precisely, it is necessary for $f^*$ to satisfy,
\begin{equation}
\forall \phi \in \mathcal{F} \,;\, \big(\frac{d}{d \epsilon} J(f^*+\epsilon \phi) \big)_{\epsilon=0} \, = \,0 \,.
\end{equation}
We first construct $J(f^*+\epsilon \phi)$,
\begin{eqnarray}
J(f^*+\epsilon \phi) &=& \frac{1}{K} \sum_k \big( \int_{\mathcal{X}} [f^* + \epsilon \phi](\boldsymbol{x}) \delta(\boldsymbol{x}-\boldsymbol{x}_k) \, d \boldsymbol{x} - y_k \big)^2 \\
&+& c \int_{\mathcal{X}} \int_{\mathcal{X}} u(\boldsymbol{x},\boldsymbol{x}^\dag) [f^* + \epsilon \phi](\boldsymbol{x}) [f^* + \epsilon \phi](\boldsymbol{x}^\dag) \, d \boldsymbol{x} \, d \boldsymbol{x}^\dag \,,
\end{eqnarray}
or equivalently,
\begin{eqnarray}
J(f^*+\epsilon \phi) &=& \frac{1}{K} \sum_k \Big( \int_{\mathcal{X}} \big( f^*(\boldsymbol{x}) + \epsilon \phi(\boldsymbol{x}) \big) \delta(\boldsymbol{x}-\boldsymbol{x}_k) \, d \boldsymbol{x} - y_k \Big)^2 \\
&+& c \, \int_{\mathcal{X}} \int_{\mathcal{X}} u(\boldsymbol{x},\boldsymbol{x}^\dag) \big( f^*(\boldsymbol{x}) + \epsilon \phi(\boldsymbol{x}) \big) \, \big( f^*(\boldsymbol{x}^\dag) + \epsilon \phi(\boldsymbol{x}^\dag) \big) \, d \boldsymbol{x}^\dag \Bigg) \, d \boldsymbol{x} \, d \boldsymbol{x}^\dag \,.
\end{eqnarray}
Thus,
\begin{eqnarray}
\frac{d}{d \epsilon} J(f^*+\epsilon \phi) &=& \frac{1}{K} \sum_k 2 \Big( \int_{\mathcal{X}} \big( f^*(\boldsymbol{x^\diamond}) + \epsilon \phi(\boldsymbol{x}^\diamond) \big) \delta(\boldsymbol{x}^\diamond-\boldsymbol{x}_k) \, d \boldsymbol{x}^\diamond - y_k \Big) \big( \int_{\mathcal{X}} \phi(\boldsymbol{x})  \delta(\boldsymbol{x}-\boldsymbol{x}_k) \, d \boldsymbol{x} \big)\\
&+& c \, \int_{\mathcal{X}}  \int_{\mathcal{X}} u(\boldsymbol{x},\boldsymbol{x}^\dag) \Big( \, \phi(\boldsymbol{x}) \, \big( f^*(\boldsymbol{x}^\dag) + \epsilon \phi(\boldsymbol{x}^\dag) \big) \,+\, \phi(\boldsymbol{x}^\dag) \, \big( f^*(\boldsymbol{x}) + \epsilon \phi(\boldsymbol{x}) \big) \, \Big) \, d \boldsymbol{x} \, d \boldsymbol{x}^\dag \,.
\end{eqnarray}
Setting $\epsilon=0$,
\begin{eqnarray}
\big( \frac{d}{d \epsilon} J(f^*+\epsilon \phi) \big)_{\epsilon=0} &=& \frac{1}{K} \sum_k 2 \big( \int_{\mathcal{X}} f^*(\boldsymbol{x}^\diamond)  \delta(\boldsymbol{x}^\diamond-\boldsymbol{x}_k) \, d \boldsymbol{x}^\diamond - y_k \big) \big( \int_{\mathcal{X}} \phi(\boldsymbol{x})  \delta(\boldsymbol{x}-\boldsymbol{x}_k) \, d \boldsymbol{x} \big)\\
&+& c \, \int_{\mathcal{X}}  \int_{\mathcal{X}} u(\boldsymbol{x},\boldsymbol{x}^\dag) \Big( \, \phi(\boldsymbol{x}) \, f^*(\boldsymbol{x}^\dag) \,+\, \phi(\boldsymbol{x}^\dag) \, f^*(\boldsymbol{x})\, \Big) \, d \boldsymbol{x} \, d \boldsymbol{x}^\dag \,.
\end{eqnarray}
By the symmetry of $u$,
\begin{eqnarray}
\big( \frac{d}{d \epsilon} J(f^*+\epsilon \phi) \big)_{\epsilon=0} &=& \frac{1}{K} \sum_k 2 \big( \int_{\mathcal{X}} f^*(\boldsymbol{x}^\diamond)  \delta(\boldsymbol{x}^\diamond-\boldsymbol{x}_k) \, d \boldsymbol{x}^\diamond - y_k \big) \big( \int_{\mathcal{X}} \phi(\boldsymbol{x})  \delta(\boldsymbol{x}-\boldsymbol{x}_k) \, d \boldsymbol{x} \big)\\
&+& 2 c \, \int_{\mathcal{X}}  \int_{\mathcal{X}} u(\boldsymbol{x},\boldsymbol{x}^\dag) \, \phi(\boldsymbol{x}) \, f^*(\boldsymbol{x}^\dag) \, d \boldsymbol{x} \, d \boldsymbol{x}^\dag \,.
\end{eqnarray}
Factoring out $\phi$,
\begin{eqnarray}
\big( \frac{d}{d \epsilon} J(f^*+\epsilon \phi) \big)_{\epsilon=0} \,=\, \int_{\mathcal{X}} 2 \phi(\boldsymbol{x}) \Big( & & \frac{1}{K} \sum_k  \delta(\boldsymbol{x}-\boldsymbol{x}_k) \, \big( \int_{\mathcal{X}} f^*(\boldsymbol{x}^\diamond)  \delta(\boldsymbol{x}^\diamond-\boldsymbol{x}_k) \, d \boldsymbol{x}^\diamond - y_k \big) \\
&+& c \, \int_{\mathcal{X}} u(\boldsymbol{x},\boldsymbol{x}^\dag)  \, f^*(\boldsymbol{x}^\dag) \, d \boldsymbol{x}^\dag \quad \Big) \, d \boldsymbol{x} \,.
\end{eqnarray}
In order for the above to be zero for $\forall \phi \in \mathcal{F}$, it is necessary that,
\begin{equation}
\frac{1}{K} \sum_k \delta(\boldsymbol{x}-\boldsymbol{x}_k) \, \big( \int_{\mathcal{X}} f^*(\boldsymbol{x}^\diamond)  \delta(\boldsymbol{x}^\diamond-\boldsymbol{x}_k) \, d \boldsymbol{x}^\diamond - y_k \big) \,+\, c \, \int_{\mathcal{X}} u(\boldsymbol{x},\boldsymbol{x}^\dag)  \, f^*(\boldsymbol{x}^\dag) \, d \boldsymbol{x}^\dag = 0 \,,
\end{equation}
which further simplifies to,
\begin{equation}
\label{eq:var_condition}
\frac{1}{K} \sum_k \delta(\boldsymbol{x}-\boldsymbol{x}_k) \, \big( f^*(\boldsymbol{x}_k)- y_k \big) \,+\, c \, \int_{\mathcal{X}} u(\boldsymbol{x},\boldsymbol{x}^\dag)  \, f^*(\boldsymbol{x}^\dag) \, d \boldsymbol{x}^\dag = 0 \,.
\end{equation}

We can equivalently express (\ref{eq:var_condition}) by the following system of equations,
\begin{equation}
\label{eq:eq_system}
\begin{cases}
\frac{1}{K} \sum_k \delta(\boldsymbol{x}-\boldsymbol{x}_k) \, r_k \,+\, c \, \int_{\mathcal{X}} u(\boldsymbol{x},\boldsymbol{x}^\dag)  \, f^*(\boldsymbol{x}^\dag) \, d \boldsymbol{x}^\dag = 0 \\
r_1 = f^*(\boldsymbol{x}_1)- y_1 \\
\vdots \\
r_K = f^*(\boldsymbol{x}_K)- y_K
\end{cases}\,.
\end{equation}
We first focus on solving the first equation in $f^*$,
\begin{equation}
\label{eq:f_star}
\frac{1}{K} \sum_k \delta(\boldsymbol{x}-\boldsymbol{x}_k) \, r_k \,+\, c \, \int_{\mathcal{X}} u(\boldsymbol{x},\boldsymbol{x}^\dag)  \, f^*(\boldsymbol{x}^\dag) \, d \boldsymbol{x}^\dag = 0 \,;
\end{equation}
later we can replace the resulted $f^*$ in other equations to obtain $r_k$'s. Let $g(\boldsymbol{x},\boldsymbol{t})$ be a function such that,
\begin{equation}
\label{eq:green_function}
\int_{\mathcal{X}} u(\boldsymbol{x},\boldsymbol{x}^\dag) \, g(\boldsymbol{x}^\dag,\boldsymbol{t}) \, d \boldsymbol{x}^\dag = \delta(\boldsymbol{x} - \boldsymbol{t}) \,.
\end{equation}
Such $g$ is called the \emc{Green's function} of the linear operator $L$ satisfying $[L f](\boldsymbol{x}) = \int_{\mathcal{X}} u(\boldsymbol{x},\boldsymbol{x}^\dag) \, f(\boldsymbol{x}^\dag) \, d \boldsymbol{x}^\dag$.
If we multiply both sides of (\ref{eq:green_function}) by $\frac{1}{K} \sum_k \delta(\boldsymbol{t} - \boldsymbol{x}_k) r_k $ and then integrate w.r.t. $\boldsymbol{t}$, we obtain,
\begin{eqnarray}
& & \int_{\mathcal{X}}\Big( \frac{1}{K} \sum_k r_k \delta(\boldsymbol{t} - \boldsymbol{x}_k)  \int_{\mathcal{X}} u(\boldsymbol{x},\boldsymbol{x}^\dag) \, g(\boldsymbol{x}^\dag,\boldsymbol{t}) \, d \boldsymbol{x}^\dag \Big) \, d \boldsymbol{t} \\
&=& \int_{\mathcal{X}}\Big( \frac{1}{K} \sum_k r_k \delta(\boldsymbol{t} - \boldsymbol{x}_k)  \delta(\boldsymbol{x} - \boldsymbol{t}) \Big) \, d \boldsymbol{t} \,.
\end{eqnarray}
Rearranging the left hand side leads to,
\begin{eqnarray}
& & \int_{\mathcal{X}} u(\boldsymbol{x},\boldsymbol{x}^\dag) \Big( \frac{1}{K} \sum_k \int_{\mathcal{X}} r_k \delta(\boldsymbol{t} - \boldsymbol{x}_k)  g(\boldsymbol{x}^\dag,\boldsymbol{t}) \, d \boldsymbol{t} \Big) \, d \boldsymbol{x}^\dag \\
&=& \int_{\mathcal{X}}\Big( \frac{1}{K} \sum_k r_k \delta(\boldsymbol{t} - \boldsymbol{x}_k)  \delta(\boldsymbol{x} - \boldsymbol{t}) \Big) \, d \boldsymbol{t} \,.
\end{eqnarray}
Using the sifting property of the delta function this simplifies to,
\begin{equation}
\int_{\mathcal{X}} u(\boldsymbol{x},\boldsymbol{x}^\dag) \big( \frac{1}{K} \sum_k r_k  g(\boldsymbol{x}^\dag,\boldsymbol{x}_k)\big) \, d \boldsymbol{x}^\dag = \frac{1}{K} \sum_k r_k \delta(\boldsymbol{x} - \boldsymbol{x}_k)  \,.
\end{equation}
We can now use the above identity to eliminate $\frac{1}{K} \sum_k r_k \delta(\boldsymbol{x} - \boldsymbol{x}_k)$ in (\ref{eq:f_star}) and thus obtain,
\begin{equation}
\int_{\mathcal{X}} u(\boldsymbol{x},\boldsymbol{x}^\dag) \big( \frac{1}{K} \sum_k r_k  g(\boldsymbol{x}^\dag,\boldsymbol{x}_k)\big) \, d \boldsymbol{x}^\dag \,+\, c \, \int_{\mathcal{X}} u(\boldsymbol{x},\boldsymbol{x}^\dag)  \, f^*(\boldsymbol{x}^\dag) \, d \boldsymbol{x}^\dag = 0 \,,
\end{equation}
or equivalently
\begin{equation}
\int_{\mathcal{X}} u(\boldsymbol{x},\boldsymbol{x}^\dag) \big( \frac{1}{K} \sum_k r_k  g(\boldsymbol{x}^\dag,\boldsymbol{x}_k) \,+\, c \,   \, f^*(\boldsymbol{x}^\dag) \big) \, d \boldsymbol{x}^\dag = 0 \,.
\end{equation}
A sufficient (and also necessary, as $u$ is assumed to have empty null space) for the above to hold is that,
\begin{equation}
\label{eq:f_star_suff_cond}
f^*(\boldsymbol{x}) = -\frac{1}{c K} \sum_k r_k  g(\boldsymbol{x},\boldsymbol{x}_k) \,.
\end{equation}
We can now eliminate $f^*$ in the system of equations (\ref{eq:eq_system}) and obtain a system that only depends on $r_k$'s,
\begin{equation}
\begin{cases}
r_1 = -\frac{1}{c K} \sum_k r_k  g(\boldsymbol{x}_1,\boldsymbol{x}_k) - y_1 \\
\vdots \\
r_K = -\frac{1}{c K} \sum_k r_k  g(\boldsymbol{x}_K,\boldsymbol{x}_k) - y_K
\end{cases} \,.
\end{equation}
This is a linear system in $r_k$ and can be expressed in vector/matrix form,
\begin{equation}
(c \boldsymbol{I} + \boldsymbol{G}) \boldsymbol{r} = - c \, \boldsymbol{y} \,. 
\end{equation}
Thus,
\begin{equation}
\boldsymbol{r} = - c \, (c \boldsymbol{I} + \boldsymbol{G})^{-1} \boldsymbol{y} \,,
\end{equation}
and finally using the definition of $f^*$ in (\ref{eq:f_star_suff_cond}) we obtain,
\begin{equation}
f^*(\boldsymbol{x}) \,=\, -\frac{1}{c}\, \boldsymbol{g}^T_{\boldsymbol{x}}\, \boldsymbol{r}  \,=\, \boldsymbol{g}^T_{\boldsymbol{x}} \, (c \boldsymbol{I} + \boldsymbol{G})^{-1} \boldsymbol{y}  \,.
\end{equation}

\newpage
\section{Equivalent Kernel Regression Problem}
\label{sec:equiv_kernel}
Given a positive definite kernel function $g(\,.\,,\,.\,)$. Recall that the solution of regularized kernel regression after $t$ rounds of self-distillation has the form,
\begin{equation}
\label{eq:opt_f_t}
f^*_t(\boldsymbol{x})=\boldsymbol{g}^T_{\boldsymbol{x}} \boldsymbol{G}^t \Pi_{i=0}^t (\boldsymbol{G}+ c_i \boldsymbol{I})^{-1} \boldsymbol{y}_0 \,.
\end{equation}
On the other hand, the solution to a standard kernel ridge regression on the same training data with a positive definite kernel $g^\dag$ has the form,
\begin{equation}
\label{eq:f_dag}
f^\dag(\boldsymbol{x})={\boldsymbol{g}^\dag_{\boldsymbol{x}}}^T (\boldsymbol{G}^\dag + c_0 \boldsymbol{I})^{-1} \boldsymbol{y}_0 \,,
\end{equation}
for which there are standard generalization bounds. We claim $f_t^*$ can be equivalently written in this standard form by a proper choice of $g^\dag$ (as a function of $g$). As a result of that, we show the spectrum of the Gram matrix $\boldsymbol{G}^\dag$ relates to that of $\boldsymbol{G}$ via,
\begin{equation}
\lambda_k^\dag =  c_0 \frac{1}{\frac{\Pi_{i=0}^t (\lambda_k+ c_i)}{\lambda_k^{t+1}}-1} \,.
\end{equation}

Our strategy for tackling this problem is inspired by the proof technique in Corollary~6.7 of~\cite{Bartlett2005}. Let $P$ be the data-dependent linear operator defined as,
\begin{equation}
[P h](\boldsymbol{x}) \triangleq \frac{1}{K} \sum_{k=1}^K h(\boldsymbol{x}_k) g(\boldsymbol{x},\boldsymbol{x}_k) \,.
\end{equation}
Let $\mathcal{H}$ denote the Reproducing Kernel Hilbert Space associated with $g$ and $\langle \,.\, , \,.\, \rangle_{\mathcal{H}}$ be the dot product in $\mathcal{H}$.
It is easy to verify that $P$ is a \emc{positive definite operator} in this space, i.e. it satisfies $\langle h \,,\, P h \rangle > 0$ for any $h \in \mathcal{H}$ due to,
\begin{eqnarray}
\langle h \,,\, P h \rangle_{\mathcal{H}} &=& \langle h \,,\, \frac{1}{K} \sum_{k=1}^K h(\boldsymbol{x}_k) g(.,\boldsymbol{x}_k) \rangle \\
&=& \frac{1}{K} \sum_{k=1}^K h(\boldsymbol{x}_k) \underbrace{\langle h \,,\, g(.,\boldsymbol{x}_k) \rangle}_{h(\boldsymbol{x}_k)} \\
&=& \frac{1}{K} \sum_{k=1}^K h^2(\boldsymbol{x}_k)  >0 \,,
\end{eqnarray}
where we used $\langle h \,,\, g(.,\boldsymbol{x}) \rangle = h(\boldsymbol{x})$ due to the reproducing property of $\mathcal{H}$.
Since $P$ is positive definite, there exist eigenfunctions $\phi_j$ and eigenvalues $\lambda_j \geq 0$ that satisfy $[P \phi_j](\boldsymbol{x}) = \lambda_j \phi_j(\boldsymbol{x})$. Plugging the definition of $P$ into this identity yields,
\begin{equation}
\label{eq:eq_0}
\frac{1}{K} \sum_{k=1}^K \phi_j(\boldsymbol{x}_k) g(\boldsymbol{x},\boldsymbol{x}_k) = \lambda_j \phi_j(\boldsymbol{x}) \,.
\end{equation}
In particular, evaluating the latter identity at the points $\boldsymbol{x} \in \cup_{p=1}^K \{ \boldsymbol{x}_p \}$ gives $\frac{1}{K} \sum_{k=1}^K \phi_j(\boldsymbol{x}_k) g(\boldsymbol{x}_p,\boldsymbol{x}_k) = \lambda_j \phi_j(\boldsymbol{x}_p)$ for $p=1,\dots,K$. Recalling that $\boldsymbol{G}$ is evaluation of $\frac{1}{K} g(\,.\,,\,.\,)$ at pairs of points across $\cup_{k=1}^K \{\boldsymbol{x}_k\}$, this identity be expressed equivalently as,
\begin{equation}
\label{eq:eq_1}
\boldsymbol{G} \boldsymbol{\phi}_j=\lambda_j \boldsymbol{\phi}_j \,.
\end{equation}
This implies $\boldsymbol{\phi}_j$ is an eigenvector of $\boldsymbol{G}$ with corresponding eigenvalue of $\lambda_j$ for any $j$ that $\boldsymbol{G} \boldsymbol{\phi}_j\neq 0$. Thus, by sorting $\boldsymbol{\phi}_j$ in non-increasing order of $\lambda_j$, and placing them for $j=1,\dots,K$ into the matrix $\boldsymbol{\Phi}$ and the diagonal matrix $\boldsymbol{\Lambda}$ respectively, we obtain,
\begin{equation}
\boldsymbol{\Phi}=\boldsymbol{V} \quad,\quad \boldsymbol{\Lambda}=\boldsymbol{D}\,.
\end{equation}
Since the eigenvectors of $\boldsymbol{G}^t \Pi_{i=0}^t (\boldsymbol{G}+ c_i \boldsymbol{I})^{-1}$ are the same as those of $\boldsymbol{G}$ (adding a multiple of $\boldsymbol{I}$ or applying matrix inversion do not change eigenvectors), and the eigenvectors of $\boldsymbol{G}$ as showed in (\ref{eq:eq_1}) are $\boldsymbol{\Phi}$, we can write,
\begin{equation}
\label{eq:ker_equiv}
\boldsymbol{G}^t \Pi_{i=0}^t (\boldsymbol{G}+ c_i \boldsymbol{I})^{-1} = \boldsymbol{\Phi}^T \boldsymbol{\Lambda}^t \Pi_{i=0}^t (\boldsymbol{\Lambda}+ c_i \boldsymbol{I})^{-1} \boldsymbol{\Phi} \,. 
\end{equation}
On the other hand, using the same vector notation and recalling that $\boldsymbol{g}$ is the evaluation of $\frac{1}{K} g(\,.\,,\,\boldsymbol{x}_k)$ at $k=1,\dots,K$, we can express (\ref{eq:eq_0}) as $\boldsymbol{\phi}_j^T \boldsymbol{g}_{\boldsymbol{x}}  = \lambda_j \phi_j(\boldsymbol{x})$. Expressing this simultaneously for $j=1,\dots,K$ yields $\boldsymbol{\Phi} \boldsymbol{g}_{\boldsymbol{x}}  = \boldsymbol{\Lambda} \boldsymbol{\phi}_{\boldsymbol{x}}$, or equivalently
\begin{equation}
\label{eq:vec_equiv}
\boldsymbol{g}_{\boldsymbol{x}}  = \boldsymbol{\Phi} ^T \boldsymbol{\Lambda} \boldsymbol{\phi}_{\boldsymbol{x}} \,,
\end{equation}
where $\boldsymbol{\phi}_{\boldsymbol{x}} \triangleq[\phi_1(\boldsymbol{x}) ,\dots, \phi_K(\boldsymbol{x})]$.
Plugging (\ref{eq:ker_equiv}) and (\ref{eq:vec_equiv}) with into (\ref{eq:opt_f_t}) gives,
\begin{eqnarray}
f^*_t(\boldsymbol{x})&=&\boldsymbol{g}^T_{\boldsymbol{x}} \boldsymbol{G}^t \Pi_{i=0}^t (\boldsymbol{G}+ c_i \boldsymbol{I})^{-1} \boldsymbol{y}_0 \\
&=& \boldsymbol{\phi}^T_{\boldsymbol{x}}\boldsymbol{\Lambda} \boldsymbol{\Phi} \boldsymbol{\Phi}^T \boldsymbol{\Lambda}^t \Pi_{i=0}^t (\boldsymbol{\Lambda}+ c_i \boldsymbol{I})^{-1} \boldsymbol{\Phi} \boldsymbol{y}_0 \\
&=& \boldsymbol{\phi}^T_{\boldsymbol{x}}  \boldsymbol{\Lambda}^{t+1} \Pi_{i=0}^t (\boldsymbol{\Lambda}+ c_i \boldsymbol{I})^{-1} \boldsymbol{\Phi} \boldsymbol{y}_0 \,.
\end{eqnarray}

Suppose $g^\dag$ is a positive definite kernel and let $[P^\dag h](x) \triangleq \frac{1}{K} \sum_{k=1}^K h(\boldsymbol{x}_k) g^\dag(\boldsymbol{x},\boldsymbol{x}_i)$. We assume the operator $P^\dag$ shares the same eigenfunction as those of $P$, but varies in its eigenvalues $\lambda_j^\dag \geq 0$, i.e. $[P^\dag \phi_j](\boldsymbol{x}) = \lambda_j^\dag \phi_j(\boldsymbol{x})$. Thus, by a similar argument, the solution of (\ref{eq:f_dag}) can be written as,
\begin{equation}
f^\dag(\boldsymbol{x})=\boldsymbol{\phi}^T_{\boldsymbol{x}}  \boldsymbol{\Lambda}^\dag (\boldsymbol{\Lambda}^\dag+ c_0 \boldsymbol{I})^{-1} \boldsymbol{\Phi} \boldsymbol{y}_0 \,,
\end{equation}
Thus in order to have $f^\dag=f^*_t$, it is sufficient to have,
\begin{equation}
\boldsymbol{\Lambda}^{t+1} \Pi_{i=0}^t (\boldsymbol{\Lambda}+ c_i \boldsymbol{I})^{-1} = \boldsymbol{\Lambda}^\dag (\boldsymbol{\Lambda}^\dag+ c_0 \boldsymbol{I})^{-1} \,.
\end{equation}
Since the matrices above are all diagonal, this can be expressed equivalently as,
\begin{equation}
\frac{\lambda_k^{t+1}}{\Pi_{i=0}^t (\lambda_k+ c_i)} = \frac{\lambda_k^\dag}{\lambda_k^\dag+ c_0} \,.
\end{equation}
Solving in $\lambda_k^\dag$ yields,
\begin{equation}
\lambda_k^\dag =  c_0 \frac{1}{\frac{\Pi_{i=0}^t (\lambda_k+ c_i)}{\lambda_k^{t+1}}-1} \,. 
\end{equation}
Note that this is a valid solution for $\lambda_k^\dag$, i.e. it satisfies the requirement $\lambda_k^\dag \geq 0$. This is because $\omega_k \triangleq \frac{\lambda_k^{t+1}}{\Pi_{i=0}^t (\lambda_k+ c_i)}$ always satisfies\footnote{This is due to the conditions $\lambda_k > 0$ (recall we assume $\boldsymbol{G}$ is full-rank) and $c_i>0$.} $0 < \omega_k <1$ and that the function $\lambda^\dag_k(\omega_k) \triangleq c_0 \frac{1}{\frac{1}{\omega_k}-1}$ is well-defined ($\omega_k \neq 0$) and is increasing when $0 < \omega_k <1$.

\newpage
\section{Proofs}
\begin{propositionappendix}
The variational problem (\ref{eq:f_star_in_c}) has a solution of the form,
\begin{equation}
f^*(\boldsymbol{x}) \,=\, \boldsymbol{g}_{\boldsymbol{x}}^T (c \boldsymbol{I} + \boldsymbol{G})^{-1} \boldsymbol{y} \,.
\end{equation}
\end{propositionappendix}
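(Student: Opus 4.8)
The plan is to treat (\ref{eq:f_star_in_c}) as a variational problem, characterize its minimizer through a stationarity (Euler--Lagrange) condition, and then invert the resulting integral equation using the Green's function. First I would rewrite the data-fitting term by expressing each point evaluation $f(\boldsymbol{x}_k)$ as $\int_{\mathcal{X}} f(\boldsymbol{x})\,\delta(\boldsymbol{x}-\boldsymbol{x}_k)\,d\boldsymbol{x}$, so that the entire objective $J(f)$ becomes a single integral functional amenable to the calculus of variations. Because the objective is convex (a quadratic data term plus the positive-definite regularizer $R$), any stationary point is automatically the global minimizer, so it suffices to impose $\big(\tfrac{d}{d\epsilon}J(f^*+\epsilon\phi)\big)_{\epsilon=0}=0$ for every admissible perturbation $\phi\in\mathcal{F}$.

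Second, I would expand $J(f^*+\epsilon\phi)$, differentiate in $\epsilon$, and set $\epsilon=0$. The regularizer contributes two cross terms which collapse into the single term $2c\int_{\mathcal{X}}\int_{\mathcal{X}} u(\boldsymbol{x},\boldsymbol{x}^\dag)\,\phi(\boldsymbol{x})\,f^*(\boldsymbol{x}^\dag)\,d\boldsymbol{x}\,d\boldsymbol{x}^\dag$, using the assumed symmetry $u(\boldsymbol{x},\boldsymbol{x}^\dag)=u(\boldsymbol{x}^\dag,\boldsymbol{x})$. Factoring $\phi(\boldsymbol{x})$ out of the resulting expression and invoking the fundamental lemma of the calculus of variations (the integral vanishes for all $\phi$ only if its bracketed coefficient vanishes pointwise) yields the Euler--Lagrange equation $\frac{1}{K}\sum_k \delta(\boldsymbol{x}-\boldsymbol{x}_k)\big(f^*(\boldsymbol{x}_k)-y_k\big)+c\int_{\mathcal{X}} u(\boldsymbol{x},\boldsymbol{x}^\dag)f^*(\boldsymbol{x}^\dag)\,d\boldsymbol{x}^\dag=0$.

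The main obstacle is inverting this equation for $f^*$, since its left side mixes a sum of Dirac masses with the operator $L$ applied to $f^*$. The key idea is to rewrite the delta term itself as $L$ applied to a combination of Green's functions: using the defining property $\int_{\mathcal{X}} u(\boldsymbol{x},\boldsymbol{x}^\dag)g(\boldsymbol{x}^\dag,\boldsymbol{t})\,d\boldsymbol{x}^\dag=\delta(\boldsymbol{x}-\boldsymbol{t})$, the term $\frac{1}{K}\sum_k r_k\,\delta(\boldsymbol{x}-\boldsymbol{x}_k)$ (with $r_k\triangleq f^*(\boldsymbol{x}_k)-y_k$) equals $L$ applied to $\frac{1}{K}\sum_k r_k\,g(\,\cdot\,,\boldsymbol{x}_k)$. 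Both terms of the Euler--Lagrange equation then sit inside a single application of $L$, and since $L$ has empty null space, its argument must vanish identically, giving the representer form $f^*(\boldsymbol{x})=-\frac{1}{cK}\sum_k r_k\,g(\boldsymbol{x},\boldsymbol{x}_k)$.

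Finally I would close the loop by pinning down the coefficients $r_k$. Evaluating the representer form at each training point $\boldsymbol{x}_j$ and using $r_j=f^*(\boldsymbol{x}_j)-y_j$ produces a linear system which, in matrix notation with $\boldsymbol{G}[j,k]=\frac{1}{K}g(\boldsymbol{x}_j,\boldsymbol{x}_k)$, reads $(c\boldsymbol{I}+\boldsymbol{G})\boldsymbol{r}=-c\,\boldsymbol{y}$. Solving gives $\boldsymbol{r}=-c(c\boldsymbol{I}+\boldsymbol{G})^{-1}\boldsymbol{y}$, and substituting back into $f^*(\boldsymbol{x})=-\frac{1}{c}\boldsymbol{g}_{\boldsymbol{x}}^T\boldsymbol{r}$ cancels the $-c$ and yields the claimed closed form $f^*(\boldsymbol{x})=\boldsymbol{g}_{\boldsymbol{x}}^T(c\boldsymbol{I}+\boldsymbol{G})^{-1}\boldsymbol{y}$. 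Invertibility of $c\boldsymbol{I}+\boldsymbol{G}$ is assured because $c>0$ and $\boldsymbol{G}$ is positive definite.
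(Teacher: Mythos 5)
Your proposal is correct and follows essentially the same route as the paper's own proof in Appendix~\ref{sec:app_variational}: Dirac-delta representation of the point evaluations, the stationarity (Euler--Lagrange) condition, rewriting the delta terms via the Green's function so that the whole equation sits under one application of $L$ whose empty null space forces the representer form, and finally the linear system $(c \boldsymbol{I} + \boldsymbol{G})\boldsymbol{r} = -c\,\boldsymbol{y}$ for the residuals. Your added remark that convexity of the objective upgrades stationarity from a necessary to a sufficient condition is a small strengthening that the paper leaves implicit.
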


See Appendix \ref{sec:app_variational} for a proof.

\begin{propositionappendix}
The following identity holds,
\begin{equation}
\frac{1}{K} \sum_k \big( f^*(\boldsymbol{x}_k) - y_k \big)^2 \,=\, \frac{1}{K} \, \sum_k (z_k \, \frac{c}{c+d_k} )^2 \,.
\end{equation}
\end{propositionappendix}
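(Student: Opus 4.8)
The plan is to push everything into the eigenbasis of $\boldsymbol{G}$, where the smoother becomes diagonal and the squared error separates into a sum over coordinates. First I would evaluate the closed form of Proposition~\ref{prop:f_star} at the training inputs. Since $\boldsymbol{g}_{\boldsymbol{x}_j}[k] = \frac{1}{K} g(\boldsymbol{x}_j,\boldsymbol{x}_k) = \boldsymbol{G}[j,k]$, the vector $\boldsymbol{g}_{\boldsymbol{x}_j}$ is precisely the $j$'th row of $\boldsymbol{G}$. Stacking $f^*(\boldsymbol{x}_1),\dots,f^*(\boldsymbol{x}_K)$ into $\boldsymbol{f}$ therefore collapses the representer-theorem solution into the plain smoother $\boldsymbol{f} = \boldsymbol{G}(c\boldsymbol{I}+\boldsymbol{G})^{-1}\boldsymbol{y}$.

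Second, I would diagonalize. Using $\boldsymbol{G} = \boldsymbol{V}^T\boldsymbol{D}\boldsymbol{V}$ with $\boldsymbol{V}$ orthogonal, I can write $c\boldsymbol{I}+\boldsymbol{G} = \boldsymbol{V}^T(c\boldsymbol{I}+\boldsymbol{D})\boldsymbol{V}$ (since $c\boldsymbol{I} = \boldsymbol{V}^T(c\boldsymbol{I})\boldsymbol{V}$), hence $(c\boldsymbol{I}+\boldsymbol{G})^{-1} = \boldsymbol{V}^T(c\boldsymbol{I}+\boldsymbol{D})^{-1}\boldsymbol{V}$. Substituting and cancelling the factor $\boldsymbol{V}\boldsymbol{V}^T=\boldsymbol{I}$ gives $\boldsymbol{f} = \boldsymbol{V}^T\boldsymbol{D}(c\boldsymbol{I}+\boldsymbol{D})^{-1}\boldsymbol{V}\boldsymbol{y}$.

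Third, I would rewrite the error as a norm and exploit orthogonality. The target quantity equals $\frac{1}{K}\|\boldsymbol{f}-\boldsymbol{y}\|^2$, and factoring out $\boldsymbol{V}^T$ and $\boldsymbol{V}$ yields $\boldsymbol{f}-\boldsymbol{y} = \boldsymbol{V}^T\bigl[\boldsymbol{D}(c\boldsymbol{I}+\boldsymbol{D})^{-1}-\boldsymbol{I}\bigr]\boldsymbol{V}\boldsymbol{y}$. Because $\boldsymbol{V}$ is orthogonal (norm-preserving) and $\boldsymbol{z}=\boldsymbol{V}\boldsymbol{y}$, this reduces to $\|\boldsymbol{f}-\boldsymbol{y}\|^2 = \bigl\|[\boldsymbol{D}(c\boldsymbol{I}+\boldsymbol{D})^{-1}-\boldsymbol{I}]\boldsymbol{z}\bigr\|^2$. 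The bracketed diagonal matrix has $k$'th entry $\frac{d_k}{c+d_k}-1 = -\frac{c}{c+d_k}$, so the $k$'th component of the vector inside the norm is $-\frac{c}{c+d_k}z_k$; squaring kills the sign, summing over $k$, and dividing by $K$ produces exactly $\frac{1}{K}\sum_k\bigl(z_k\,\frac{c}{c+d_k}\bigr)^2$.

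Every step here is routine linear algebra, so there is no genuine obstacle. The only point that needs a moment's care is the very first one: recognizing that evaluating $\boldsymbol{g}_{\boldsymbol{x}}$ at a training point reproduces a row of $\boldsymbol{G}$, which is what lets the full kernel solution reduce to the simple form $\boldsymbol{G}(c\boldsymbol{I}+\boldsymbol{G})^{-1}$. After that, the only bookkeeping is keeping the orthogonal factors $\boldsymbol{V},\boldsymbol{V}^T$ matched so that the norm genuinely reduces to the diagonal coordinates and the telescoping $\frac{d_k}{c+d_k}-1$ appears cleanly.
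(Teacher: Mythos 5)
Your proposal is correct and follows essentially the same route as the paper's own proof: evaluate the representer-theorem solution at the training points to obtain $\boldsymbol{G}(c\boldsymbol{I}+\boldsymbol{G})^{-1}\boldsymbol{y}$, diagonalize via $\boldsymbol{G}=\boldsymbol{V}^T\boldsymbol{D}\boldsymbol{V}$, exploit rotation invariance of the norm to pass to $\boldsymbol{z}=\boldsymbol{V}\boldsymbol{y}$, and read off the diagonal entries $\frac{d_k}{c+d_k}-1=-\frac{c}{c+d_k}$. The only cosmetic difference is that you spell out explicitly why $\boldsymbol{g}_{\boldsymbol{x}_j}$ is the $j$'th row of $\boldsymbol{G}$, a step the paper leaves implicit.
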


\begin{proof}

\begin{eqnarray}
& & \frac{1}{K} \, (f^*(\boldsymbol{x}_k)-y_k)^2 \\
&=& \frac{1}{K} \, \big( \boldsymbol{g}_{\boldsymbol{x}_k}^T (c \boldsymbol{I} + \boldsymbol{G})^{-1} \boldsymbol{y} -y_k\big)^2 \\
&=& \frac{1}{K} \, \big\| \boldsymbol{G} (c \boldsymbol{I} + \boldsymbol{G})^{-1} \boldsymbol{y} -\boldsymbol{y} \big\|^2 \\
&=& \frac{1}{K} \, \big\| \boldsymbol{V}^T \boldsymbol{D} (c \boldsymbol{I} + \boldsymbol{D})^{-1} \boldsymbol{V} \boldsymbol{y} -\boldsymbol{y} \big\|^2 \,,
\end{eqnarray}
which after exploiting rotation invariance property of $\|.\|$ and the fact that the matrix of eigenvectors $\boldsymbol{V}$ is a rotation matrix, can be expressed as,
\begin{eqnarray}
& & \frac{1}{K} \, (f^*(\boldsymbol{x}_k)-y_k)^2 \\
&=& \frac{1}{K} \, \big\| \boldsymbol{V}^T \boldsymbol{D} (c \boldsymbol{I} + \boldsymbol{D})^{-1} \boldsymbol{V} \boldsymbol{y} -\boldsymbol{y} \big\|^2 \\
&=& \frac{1}{K} \, \big\| \boldsymbol{V} \boldsymbol{V}^T \boldsymbol{D} (c \boldsymbol{I} + \boldsymbol{D})^{-1} \boldsymbol{V} \boldsymbol{y} - \boldsymbol{V} \boldsymbol{y} \big\|^2 \\
&=& \frac{1}{K} \, \big\| \boldsymbol{D} (c \boldsymbol{I} + \boldsymbol{D})^{-1} \boldsymbol{z} -\boldsymbol{z} \big\|^2 \\
&=& \frac{1}{K} \, \Big\| \big( \boldsymbol{D} (c \boldsymbol{I} + \boldsymbol{D})^{-1}  - \boldsymbol{I} \big) \boldsymbol{z} \Big\|^2 \\
&=& \frac{1}{K} \, \sum_k (\frac{d_k}{c+d_k} - 1)^2 z_k^2 \\
&=& \frac{1}{K} \, \sum_k (z_k \, \frac{c}{c+d_k} )^2 \,,
\end{eqnarray}

\qed

\end{proof}

\begin{propositionappendix}
For any $t \geq0$, if $\| \boldsymbol{z}_i\| > \sqrt{K \, \epsilon}$ for $i=0,\dots,t$, then,
\begin{equation}
\| \boldsymbol{z}_t\| \geq a^t(\kappa) \| \boldsymbol{z}_0\| - \sqrt{K \, \epsilon} \, b(\kappa) \frac{a^t(\kappa) - 1}{a(\kappa) - 1} \,,
\end{equation}
where,
\begin{eqnarray}
a(x) &\triangleq& \frac{(r_0 - 1)^2 + x (2 r_0 - 1) }{(r_0 -1 + x)^2} \\
b(x) &\triangleq& \frac{r_0^2 x}{(r_0 -1 + x)^2 } \\
r_0 &\triangleq& \frac{1}{\sqrt{K \, \epsilon}} \,\|\boldsymbol{z}_0\| \quad,\quad
\kappa \triangleq \frac{d_{\max}}{d_{\min}} \,.
\end{eqnarray}
\end{propositionappendix}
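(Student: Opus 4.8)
The plan is to collapse the matrix recurrence \eqref{eq:z_t_recur} into a scalar recurrence for $\|\boldsymbol{z}_t\|$, lower bound that scalar recurrence by an \emph{affine} one, and recognize the affine recurrence's closed form as exactly the claimed expression. Write $s \triangleq \sqrt{K\epsilon}$ and argue coordinatewise. Since $\boldsymbol{D}$ and the matrix in \eqref{eq:z_t_recur} are diagonal, the $k$-th coordinate is scaled by $\frac{d_k}{c_{t-1}+d_k}$ with $c_{t-1}=\frac{\alpha_t s}{\|\boldsymbol{z}_{t-1}\|-s}$, i.e. by $\frac{\|\boldsymbol{z}_{t-1}\|-s}{(\alpha_t/d_k)\,s+\|\boldsymbol{z}_{t-1}\|-s}$. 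This factor is decreasing in $\alpha_t/d_k$, so invoking $\alpha_t\le d_{\max}$ and $d_k\ge d_{\min}$ from \eqref{eq:range_alpha} bounds it below by the coordinate-independent quantity $\frac{\|\boldsymbol{z}_{t-1}\|-s}{\kappa s+\|\boldsymbol{z}_{t-1}\|-s}$. Because every coordinate is multiplied by at least this common nonnegative factor, passing to $\ell_2$ norms is lossless and gives $\|\boldsymbol{z}_t\|\ge \frac{(\|\boldsymbol{z}_{t-1}\|-s)\,\|\boldsymbol{z}_{t-1}\|}{\|\boldsymbol{z}_{t-1}\|-s+\kappa s}$.

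Next I normalize by setting $r_t\triangleq \|\boldsymbol{z}_t\|/s$, which turns the bound into $r_t\ge \rho(r_{t-1})$ with $\rho(r)\triangleq \frac{r(r-1)}{r+\kappa-1}$. The central observation is that after polynomial division $\rho(r)=r-\kappa+\frac{\kappa(\kappa-1)}{r+\kappa-1}$, whose second derivative $2\kappa(\kappa-1)/(r+\kappa-1)^3$ is nonnegative on $r>1-\kappa$; thus $\rho$ is convex there and lies above its tangent line at $r_0$. A direct computation shows that tangent line is precisely $r\mapsto a(\kappa)\,r-b(\kappa)$: the slope $\rho'(r_0)$ simplifies to the stated $a(\kappa)$, and the intercept check $a(\kappa)\,r_0-\rho(r_0)$ simplifies to the stated $b(\kappa)=\frac{r_0^2\kappa}{(r_0-1+\kappa)^2}$. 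Convexity then yields the global affine lower bound $\rho(r)\ge a(\kappa)\,r-b(\kappa)$ on the admissible range.

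Finally I compare against the affine recurrence $w_t\triangleq a(\kappa)\,w_{t-1}-b(\kappa)$ with $w_0\triangleq r_0$, whose closed form is $w_t=a^t(\kappa)\,r_0-b(\kappa)\frac{a^t(\kappa)-1}{a(\kappa)-1}$. Since $r_0>1$ (the non-collapse hypothesis $\|\boldsymbol{z}_0\|>s$) and $\rho'$ is increasing with $\rho'(1)=1/\kappa$, the slope obeys $a(\kappa)=\rho'(r_0)\ge 1/\kappa>0$. Proceeding by induction with base case $r_0=w_0$ and inductive hypothesis $r_{t-1}\ge w_{t-1}$, the chain $r_t\ge\rho(r_{t-1})\ge a(\kappa)\,r_{t-1}-b(\kappa)\ge a(\kappa)\,w_{t-1}-b(\kappa)=w_t$ closes the step, the last inequality using $a(\kappa)>0$. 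Multiplying $r_t\ge w_t$ through by $s$ and recalling $s\,r_0=\|\boldsymbol{z}_0\|$ produces exactly the claimed bound.

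I expect the main obstacle to be the two algebraic identities $\rho'(r_0)=a(\kappa)$ and $a(\kappa)\,r_0-\rho(r_0)=b(\kappa)$: these are routine but must be executed exactly, since the whole argument hinges on the tangent line coinciding with the stated coefficients. The only other points requiring care are confirming the common scaling factor is genuinely coordinate-free (so the step to norms loses nothing) and verifying $a(\kappa)>0$, which is what keeps the final inequality in the induction pointing the right way.
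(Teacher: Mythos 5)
Your proof is correct and follows essentially the same route as the paper's: reduce the diagonal recurrence to a scalar bound $r_t \ge \rho(r_{t-1})$ via the $\kappa$-bound on the entries, lower-bound the convex function $\rho$ by its tangent line at $r_0$ (whose slope and intercept are exactly $a(\kappa)$ and $b(\kappa)$), and unroll the resulting affine recurrence by induction using $a(\kappa)>0$. Your explicit verifications (the polynomial-division convexity argument and the bound $a(\kappa)\ge 1/\kappa$) are details the paper relegates to footnotes or leaves implicit, but the argument is the same.
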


\begin{proof}

We start from the identity we obtained in (\ref{eq:z_t_recur}). By diving both sides of it by $\sqrt{K\, \epsilon}$ we obtain,
\begin{equation}
\label{eq:z_t_recur_appendix}
\frac{1}{\sqrt{K\, \epsilon}} \, \boldsymbol{z}_t = \boldsymbol{D} (\frac{ \alpha_t \sqrt{K \, \epsilon}}{\| \boldsymbol{z}_{t-1} \| - \sqrt{K \, \epsilon}} \boldsymbol{I} + \boldsymbol{D})^{-1} \, \frac{1}{\sqrt{K\, \epsilon}} \,\boldsymbol{z}_{t-1} \,,
\end{equation}
where,
\begin{equation}
d_{\min} \leq \alpha_t \leq d_{\max} \,.
\end{equation}

Note that the matrix $\boldsymbol{D} (\frac{ \alpha_t \sqrt{K \, \epsilon}}{\| \boldsymbol{z}_{t-1} \| - \sqrt{K \, \epsilon}} \boldsymbol{I} + \boldsymbol{D})^{-1}$ in the above identitiy is {\em diagonal} and its $k$'th entry can be expressed as,
\begin{equation}
\big(\boldsymbol{D} (\frac{ \alpha_t \sqrt{K \, \epsilon}}{\| \boldsymbol{z}_{t-1} \| - \sqrt{K \, \epsilon}} \boldsymbol{I} + \boldsymbol{D})^{-1} \big) [k,k] = \frac{d_k}{\frac{ \alpha_t \sqrt{K \, \epsilon}}{\| \boldsymbol{z}_{t-1} \| - \sqrt{K \, \epsilon}} + d_k} = \frac{1}{\frac{ \frac{\alpha_t}{d_k} }{\frac{\| \boldsymbol{z}_{t-1} \|}{\sqrt{K \, \epsilon}} - 1} + 1} \,.
\end{equation}
Thus, as long as $\| \boldsymbol{z}_{t-1}\| > \sqrt{K \epsilon}$ we can get the following upper and lower bounds,
\begin{equation}
\frac{1}{\frac{ \frac{d_{\max}}{d_{\min}} }{\frac{\| \boldsymbol{z}_{t-1} \|}{\sqrt{K \, \epsilon}} - 1} + 1} \leq \big(\boldsymbol{D} (\frac{ \alpha_t \sqrt{K \, \epsilon}}{\| \boldsymbol{z}_{t-1} \| - \sqrt{K \, \epsilon}} \boldsymbol{I} + \boldsymbol{D})^{-1} \big) [k,k] \leq \frac{1}{\frac{ \frac{d_{\min}}{d_{\max}} }{\frac{\| \boldsymbol{z}_{t-1} \|}{\sqrt{K \, \epsilon}} - 1} + 1} \,.
\end{equation}
Putting the above fact beside recurrence relation of $\boldsymbol{z}_t$ in (\ref{eq:z_t_recur_appendix}), we can bound $\frac{1}{\sqrt{K \, \epsilon}}\|\boldsymbol{z}_t\|$ as,
\begin{equation}
\frac{1}{\frac{ \kappa }{r_{t-1} - 1} + 1} r_{t-1} \leq r_t \leq \frac{1}{\frac{ \frac{1}{\kappa} }{r_{t-1} - 1} + 1} r_{t-1} \,,
\end{equation}
where we used short hand notation,
\begin{eqnarray}
\kappa &\triangleq& \frac{d_{\max}}{d_{\min}} \\
r_t &\triangleq& \frac{1}{\sqrt{K \, \epsilon}} \| \boldsymbol{z}_t\| \,.
\end{eqnarray}
Note that $\kappa$ is the \emc{condition number} of the matrix $\boldsymbol{G}$ and by definition satisfies $\kappa \geq 1$. To further simplify the bounds, we use the inequality\footnote{This follows from concavity of $\frac{x}{\frac{ \frac{1}{\kappa} }{x - 1} + 1}$ in $x$ as long as $x - 1 \geq 0$ (can be verified by observing that the second derivative of the function is negative when $x - 1 \geq 0$ because $\kappa >1$ by definition). For any function $f(x)$ that is concave on the interval $[\underline{x}, \overline{x}]$, any line tangent to $f$ forms an {\em upper} bound on $f(x)$ over $[\underline{x}, \overline{x}]$. In particular, we use the tangent at the end point $\overline{x}$ to construct our bound. In our setting, this point which happens to be $r_0$. The latter is because $r_t$ is a decreasing sequence (see beginning of Section \ref{sec:guaranteed_iters}) and thus its largest values is at $t=0$.},
\begin{eqnarray}
& & \frac{1}{\frac{ \frac{1}{\kappa} }{r_{t-1} - 1} + 1} r_{t-1} \leq r_{t-1} \,  \frac{(r_0 - 1)^2 + \frac{1}{\kappa} (2 r_0 - 1) }{(r_0 -1 + \frac{1}{\kappa})^2} \,-\, \frac{r_0^2 \frac{1}{\kappa} }{ (r_0 -1 + \frac{1}{\kappa})^2} \,,
\end{eqnarray}
and\footnote{Similar to the earlier footnote, this follows from convexity of $\frac{x}{\frac{\kappa }{x - 1} + 1}$ in $x$ as long as $x - 1 \geq 0$ since $\kappa >1$ by definition. For any function $f(x)$ that is convex on the interval $[\underline{x}, \overline{x}]$, any line tangent to $f$ forms an {\em lower} bound on $f(x)$ over $[\underline{x}, \overline{x}]$. In particular, we use the tangent at the end point $\overline{x}$ to construct our bound, which as the earlier footnote, translate into $r_0$.},
\begin{eqnarray}
& & \frac{1}{\frac{ \kappa }{r_{t-1} - 1} + 1} r_{t-1} \geq r_{t-1} \, \frac{(r_0 - 1)^2 + \kappa (2 r_0 - 1) }{(r_0 -1 + \kappa)^2} \,-\, \frac{r_0^2 \kappa}{(r_0 -1 + \kappa)^2} \,.
\end{eqnarray}

For brevity, we introduce,
\begin{eqnarray}
a(x) &\triangleq& \frac{(r_0 - 1)^2 + x (2 r_0 - 1) }{(r_0 -1 + x)^2} \\
b(x) &\triangleq& \frac{r_0^2 x}{(r_0 -1 + x)^2 } \,.
\end{eqnarray}
Therefore, the bounds can be expressed more concisely as,
\begin{equation}
a(\kappa) \, r_{t-1} - b(\kappa) \quad\leq\quad r_t \quad\leq\quad a(\frac{1}{\kappa}) \, r_{t-1} - b(\frac{1}{\kappa}) \,.
\end{equation}
Now since both $r_{t-1} \triangleq \frac{1}{\sqrt{K \, \epsilon}}\|\boldsymbol{z}_{t-1}\|$ and $a(\kappa)$ or $a(\frac{1}{\kappa})$ are non-negative, we can solve the recurrence\footnote{More compactly, the problem can be stated as $\alpha^\dag r_{t-1} - b \leq r_t \leq \alpha r_{t-1} - b$, where $\alpha>0$ and $\alpha^\dag>0$. Let's focus on $r_t \leq \alpha r_{t-1} - b$, as the other case follows by similar argument. Start from the base case $r_1 \leq \alpha r_0 - b$. Since $\alpha>0$, we can multiply both sides by that and then add $-b$ to both sides: $\alpha r_1 -b \leq \alpha^2 r_0 - b( \alpha +1)$. On the other hand, looking at the recurrence $r_t \leq \alpha r_{t-1} - b$ at $t=2$ yields $r_2 \leq \alpha r_{1} - b$. Combining the two inequalities gives $r_2 \leq \alpha^2 r_0 - b( \alpha +1)$. By repeating this argument we obtain the general case $r_t \leq \alpha^t r_0 - b( \sum_{j=0}^{t-1} \alpha^j)$.} and obtain,
\begin{eqnarray}
\label{eq:r_t_bounds}
a^t(\kappa) r_0 - b(\kappa) \frac{a^t(\kappa) - 1}{a(\kappa) - 1} \quad\leq\quad r_t \quad\leq\quad a^t(\frac{1}{\kappa}) r_0 - b(\frac{1}{\kappa}) \frac{a^t(\frac{1}{\kappa}) - 1}{a(\frac{1}{\kappa}) - 1} \,.
\end{eqnarray}

\qed

\end{proof}

\begin{propositionappendix}
Starting from $\|\boldsymbol{y}_0\| > \sqrt{K \, \epsilon}$, meaningful (non-collapsing solution) self-distillation is possible at least for $\underline{t}$ rounds,
\begin{equation}
\underline{t} \triangleq \frac{\frac{\| \boldsymbol{y}_0\|}{\sqrt{K \, \epsilon}} -1}{\kappa} \,.
\end{equation}
\end{propositionappendix}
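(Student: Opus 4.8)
The plan is to reduce everything to the single scalar $r_t \triangleq \|\boldsymbol{z}_t\|/\sqrt{K\,\epsilon}$. Recall $\|\boldsymbol{z}_t\|=\|\boldsymbol{y}_t\|$, so the non-collapse condition at round $t$ reads simply $r_t > 1$. The goal is to show $r_t$ cannot fall to the collapse threshold $1$ during the first $\underline{t}$ rounds, and for this I only need a \emph{uniform} bound on how much $r_t$ can drop in one step, after which the count of rounds follows by telescoping.

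First I would isolate a per-round contraction bound from the recurrence (\ref{eq:z_t_recur}). That recurrence applies to $\boldsymbol{z}_{t-1}$ a diagonal matrix whose $k$-th entry equals $\left(\frac{\alpha_t/d_k}{r_{t-1}-1}+1\right)^{-1}$ with $\alpha_t/d_k\in[1/\kappa,\kappa]$ — precisely the quantity bounded inside the proof of Proposition~\ref{prop:lower_bound_ut}. Taking the smallest factor over all coordinates gives the scalar lower bound $r_t \ge \frac{r_{t-1}(r_{t-1}-1)}{r_{t-1}-1+\kappa}$, valid whenever $r_{t-1}>1$. A one-line rearrangement then converts this into a bound on the decrease, $r_{t-1}-r_t \le \frac{\kappa\,r_{t-1}}{r_{t-1}-1+\kappa}$, and since $\kappa\ge 1$ the fraction $\frac{r_{t-1}}{r_{t-1}-1+\kappa}$ is at most $1$, so each round lowers $r$ by at most $\kappa$.

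Telescoping this over $t$ rounds yields $r_t \ge r_0 - t\,\kappa$. Imposing that the right-hand side stay at the collapse threshold $1$ and solving for $t$ gives $t\le (r_0-1)/\kappa$, which is exactly $\underline{t}$ upon substituting $r_0=\|\boldsymbol{y}_0\|/\sqrt{K\,\epsilon}$. I expect the main obstacle to be not any single inequality but the logical ordering: the contraction bound was derived under the hypothesis $r_{t-1}>1$, yet that non-collapse property is exactly what I am trying to prove. I would close this apparent circularity by induction on $t$: assuming $r_i>1$ for all $i<t$ (with $t\le\underline{t}$) legitimizes applying the per-step bound at each of those rounds, and the telescoped inequality $r_t \ge r_0 - t\kappa \ge 1$ then completes the induction.

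As a sanity check on the hint in the text, one can instead plug $t=\underline{t}$ directly into the closed form of Proposition~\ref{prop:lower_bound_ut}; this also works, but it requires a Bernoulli-type estimate $a^t(\kappa)\ge 1-t\,(1-a(\kappa))$ together with the identity $b(\kappa)/(a(\kappa)-1)=-r_0^2/(\kappa-1)$, and it actually produces the \emph{stronger} threshold $\frac{(r_0-1)(r_0+\kappa-1)}{\kappa\,r_0}$, from which the stated $\underline{t}=(r_0-1)/\kappa$ follows only after the crude simplification $\frac{r_0+\kappa-1}{r_0}\ge 1$. The direct per-round argument above is therefore the cleaner route to the exact constant claimed.
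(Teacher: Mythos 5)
Your proof is correct, and it takes a genuinely different route from the paper's. The paper invokes the solved recurrence of Proposition~\ref{prop:lower_bound_ut}: it sets the closed-form lower bound $a^t(\kappa)\,r_0 - b(\kappa)\frac{a^t(\kappa)-1}{a(\kappa)-1}$ equal to $1$, solves for $t$ via logarithms, and then asserts as an observation (with no further argument) that the resulting log-ratio is at least $(r_0-1)/\kappa$. You never touch the solved recurrence: starting from the one-step inequality $r_t \ge \frac{r_{t-1}(r_{t-1}-1)}{r_{t-1}-1+\kappa}$ --- which you legitimately re-derive from (\ref{eq:z_t_recur}) and (\ref{eq:range_alpha}), and which is the same intermediate bound appearing inside the paper's proof of Proposition~\ref{prop:lower_bound_ut} --- you pass to the additive decrement bound $r_{t-1}-r_t \le \frac{\kappa\, r_{t-1}}{r_{t-1}-1+\kappa} \le \kappa$ (using $\kappa\ge 1$), telescope to $r_t \ge r_0 - t\kappa$, and close the circularity (the one-step bound presupposes $r_{t-1}>1$) by induction on $t$; the induction is sound because for every step you actually take you have $t<\underline{t}$, which yields the strict inequality $r_t>1$ needed to continue. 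What your route buys: it is more elementary (a linear decay bound, no logarithms, no solving of the recurrence) and fully self-contained --- indeed your ``sanity check,'' via Bernoulli's inequality and the identity $b(\kappa)/(a(\kappa)-1)=-r_0^2/(\kappa-1)$, amounts to a proof of the very log-ratio inequality that the paper leaves as an unproved ``Observe that,'' and it even exhibits the sharper threshold $\frac{(r_0-1)(r_0+\kappa-1)}{\kappa\, r_0}$. What the paper's route buys: it reuses Proposition~\ref{prop:lower_bound_ut} as a black box rather than re-opening the per-round dynamics. Both arguments share the same harmless boundary looseness at $t=\underline{t}$ exactly (each delivers $r_{\underline{t}}\ge 1$ rather than a strict inequality), so you lose nothing there.
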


\begin{proof}

Recall that the assumption $\| \boldsymbol{z}_t \| > \sqrt{K \, \epsilon}$ translates into $r_t > 1$. We now obtain a sufficient condition for $r_t > 1$ by requiring a lower bound on $r_t$ to be greater than one. For that purpose, we utilize the lower bound we established in (\ref{eq:r_t_bounds}),
\begin{equation}
\underline{r_t} \triangleq a^t(\kappa) r_0 - b(\kappa) \frac{a^t(\kappa) - 1}{a(\kappa) - 1} \,.
\end{equation}

Setting the above to value $1$ implies,
\begin{equation}
\underline{r_t} = 1 \quad\Rightarrow\quad t = \frac{\log\big( \frac{1 - a(\kappa) + b(\kappa)}{b(\kappa) + r_0 (1 - a(\kappa) )} \big)}{\log\big(a(\kappa)\big)} =  \frac{\log\big( \frac{1 + \frac{\kappa - 1}{r_0^2}}{ 1 + \frac{\kappa - 1}{r_0}} \big)}{\log\big(1 - \frac{ (\frac{\kappa-1}{r_0}+\frac{1}{r_0}) (\frac{\kappa-1}{r_0} )}{(1 + \frac{\kappa-1}{r_0} )^2}\big)} \,.
\end{equation}
Observe that,
\begin{equation}
\frac{\log\big( \frac{1 + \frac{\kappa - 1}{r_0^2}}{ 1 + \frac{\kappa - 1}{r_0}} \big)}{\log\big(1 - \frac{ (\frac{\kappa-1}{r_0}+\frac{1}{r_0}) (\frac{\kappa-1}{r_0} )}{(1 + \frac{\kappa-1}{r_0} )^2}\big)} \geq \frac{r_0-1}{\kappa} \,,
\end{equation}
Thus,
\begin{equation}
t \geq \frac{r_0-1}{\kappa} = \frac{\frac{\| \boldsymbol{z}_0\|}{\sqrt{K \, \epsilon}} -1}{\kappa} = \frac{\frac{\| \boldsymbol{z}_0\|}{\sqrt{K \, \epsilon}} -1}{\kappa} =  \frac{\frac{\| \boldsymbol{y}_0\|}{\sqrt{K \, \epsilon}} -1}{\kappa}\,.
\end{equation}

\qed

\end{proof}

\begin{theoremappendix}
Suppose $\|\boldsymbol{y}_0\| > \sqrt{K \, \epsilon}$ and $t \leq \frac{\| \boldsymbol{y}_0\|}{\kappa \, \sqrt{K \, \epsilon}} - \frac{1}{\kappa}$. Then for any pair of diagonals of $\boldsymbol{D}$, namely $d_j$ and $d_k$, with the condition that $d_k > d_j$, the following inequality holds.
\begin{equation}
\frac{\boldsymbol{B}_{t-1} [k,k]}{\boldsymbol{B}_{t-1} [j,j]} \geq \left( \frac{\frac{\| \boldsymbol{y}_0 \|}{\sqrt{K \, \epsilon}} - 1 +  \frac{d_{\min}}{d_j} }{\frac{\| \boldsymbol{y}_0 \|}{\sqrt{K \, \epsilon}} - 1 +  \frac{d_{\min}}{d_k} }\right)^{t} \,.
\end{equation}
\end{theoremappendix}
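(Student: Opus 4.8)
The plan is to exploit the fact that every matrix in the product defining $\boldsymbol{B}_{t-1}$ is diagonal, so the ratio of interest factors into a product of scalar terms, one per distillation round. First I would write, using $\boldsymbol{A}_i[k,k] = d_k/(c_i+d_k)$, the explicit form $\boldsymbol{B}_{t-1}[k,k] = \prod_{i=0}^{t-1} d_k/(c_i+d_k)$, so that
\begin{equation}
\frac{\boldsymbol{B}_{t-1}[k,k]}{\boldsymbol{B}_{t-1}[j,j]} = \prod_{i=0}^{t-1} \frac{d_k/(c_i+d_k)}{d_j/(c_i+d_j)} = \prod_{i=0}^{t-1} \frac{1 + c_i/d_j}{1 + c_i/d_k} \,.
\end{equation}
Treating each factor as a function $\varphi(c) \triangleq (1+c/d_j)/(1+c/d_k)$ of a single positive variable, a one-line computation gives $\varphi'(c) = (1/d_j - 1/d_k)/(1+c/d_k)^2 > 0$ since $d_k > d_j$; hence $\varphi$ is strictly increasing. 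Consequently a lower bound on each $c_i$ yields a lower bound on the whole product, and I will only need the lower endpoints of the multiplier bounds.

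The crux is to find a single lower bound on $c_i$ valid for every round $i=0,\dots,t-1$. From (\ref{eq:root_bounds_t}) we have $c_i \geq d_{\min}\sqrt{K\,\epsilon}/(\|\boldsymbol{z}_i\| - \sqrt{K\,\epsilon})$ at each round, but this depends on the unknown $\|\boldsymbol{z}_i\|$. Here I would invoke the monotonicity noted at the start of Section~\ref{sec:guaranteed_iters}: the sequence $\|\boldsymbol{z}_i\|$ is decreasing, so $\|\boldsymbol{z}_i\| \leq \|\boldsymbol{z}_0\|$. The hypothesis $t \leq \frac{\|\boldsymbol{y}_0\|}{\kappa\,\sqrt{K\,\epsilon}} - \frac{1}{\kappa}$ combined with Proposition~\ref{prop:under_t} guarantees $\|\boldsymbol{z}_i\| > \sqrt{K\,\epsilon}$ throughout, so all denominators are positive; since a smaller positive denominator only enlarges the fraction, $c_i \geq d_{\min}\sqrt{K\,\epsilon}/(\|\boldsymbol{z}_0\| - \sqrt{K\,\epsilon}) \triangleq \underline{c}$ uniformly in $i$.

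Substituting $\underline{c}$ into the increasing function $\varphi$ and writing $\rho \triangleq \|\boldsymbol{z}_0\|/\sqrt{K\,\epsilon} = \|\boldsymbol{y}_0\|/\sqrt{K\,\epsilon}$ so that $\|\boldsymbol{z}_0\| - \sqrt{K\,\epsilon} = \sqrt{K\,\epsilon}\,(\rho-1)$, I obtain $\underline{c}/d_j = d_{\min}/(d_j(\rho-1))$ and therefore
\begin{equation}
\varphi(\underline{c}) = \frac{1 + d_{\min}/(d_j(\rho-1))}{1 + d_{\min}/(d_k(\rho-1))} = \frac{\rho - 1 + d_{\min}/d_j}{\rho - 1 + d_{\min}/d_k}\,,
\end{equation}
where the last equality multiplies numerator and denominator by $(\rho-1)$. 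Since $\varphi(c_i) \geq \varphi(\underline{c})$ for each of the $t$ factors, the product is at least $\varphi(\underline{c})^{\,t}$, which is precisely the claimed bound. The only genuine obstacle is the second step, namely obtaining a lower bound on $c_i$ that is uniform across rounds despite $c_i$ having no closed form; it is resolved cleanly by the decreasing-norm property, with the hypothesis on $t$ serving only to keep every intermediate problem in the non-collapse regime so that the bounds (\ref{eq:root_bounds_t}) remain applicable.
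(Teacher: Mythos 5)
Your proposal is correct and follows essentially the same route as the paper's proof: factor the diagonal product $\boldsymbol{B}_{t-1}$ into per-round ratios $\frac{1+c_i/d_j}{1+c_i/d_k}$, use monotonicity in $c_i$ together with the lower bound from (\ref{eq:root_bounds_t}), and exploit the decreasing norm $\|\boldsymbol{z}_i\| \leq \|\boldsymbol{z}_0\|$ to obtain a uniform lower bound on $c_i$ before multiplying across the $t$ rounds. The only cosmetic difference is that the paper bounds each $\boldsymbol{A}_i[k,k]/\boldsymbol{A}_i[j,j]$ first and multiplies afterward, whereas you factor $\boldsymbol{B}_{t-1}$ first; the substance is identical.
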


\begin{proof}
We start with the definition of $\boldsymbol{A}_t$ from (\ref{eq:def_A}) and proceed as,
\begin{equation}
\frac{\boldsymbol{A}_t [k,k]}{\boldsymbol{A}_t [j,j]} = \frac{1+\frac{c_t}{d_j}}{1+\frac{c_t}{d_k}} \,.
\end{equation}

Since the derivative of the r.h.s. above w.r.t. $c_t$ is non-negative as long as $d_k \geq d_j$, it is non-decreasing in $c_t$. Therefore, we can get a lower bound on r.h.s. using a lower bound on $c_t$ (denoted by $\underline{c_t}$),
\begin{equation}
\label{eq:c_t_LB}
\frac{\boldsymbol{A}_t [k,k]}{\boldsymbol{A}_t [j,j]} \geq \frac{1+\frac{\underline{c_t}}{d_j}}{1+\frac{\underline{c_t}}{d_k}} \,.
\end{equation}
Also, since the assumption $t \leq \frac{\| \boldsymbol{y}_0\|}{\kappa \, \sqrt{K \, \epsilon}} - \frac{1}{\kappa}$ guarantees non-collapse conditions $c_t > 0$ and $\| \boldsymbol{z}_t\| > \sqrt{ K \, \epsilon}$, we can apply (\ref{eq:root_bounds_t}) and have the following lower bound on $c_t$
\begin{equation}
\label{eq:c_t_bound}
c_t \geq \frac{ d_{\min} \sqrt{K \, \epsilon}}{\| \boldsymbol{z}_t \| - \sqrt{K \, \epsilon}} \,.
\end{equation}
Since the r.h.s. (\ref{eq:c_t_bound}) is decreasing in $ \|\boldsymbol{z}_t\|$, the smallest value for the r.h.s. is attained by the largest value of $ \|\boldsymbol{z}_t\|$. However, as $ \|\boldsymbol{z}_t\|$ is decreasing in $t$ (see beginning of Section \ref{sec:guaranteed_iters}), its largest value is attained at $t=0$. Putting these together we obtain,
\begin{equation}
\label{eq:c_t_z_0}
c_t \geq \frac{ d_{\min} \sqrt{K \, \epsilon}}{\| \boldsymbol{z}_0 \| - \sqrt{K \, \epsilon}} \,.
\end{equation}
Using the r.h.s. of the above as $\underline{c_t}$ and applying it to (\ref{eq:c_t_LB}) yields,
\begin{equation}
\frac{\boldsymbol{A}_t [k,k]}{\boldsymbol{A}_t [j,j]} \geq \frac{\frac{\| \boldsymbol{z}_0 \|}{\sqrt{K \, \epsilon}} - 1 +  \frac{d_{\min}}{d_j} }{\frac{\| \boldsymbol{z}_0 \|}{\sqrt{K \, \epsilon}} - 1 +  \frac{d_{\min}}{d_k} } \,.
\end{equation}
Notice that both sides of the inequality are positive; $\boldsymbol{A}_t$ based on its definition in (\ref{eq:def_A}) and r.h.s. by the fact that $\|\boldsymbol{z}_0\| \geq \sqrt{K \, \epsilon}$. Therefore, we can instantiate the above inequality at each distillation step $i$, for $i=0,\dots,t-1$, and multiply them to obtain,
\begin{equation}
\Pi_{i=0}^{t-1} \frac{\boldsymbol{A}_i [k,k]}{\boldsymbol{A}_i [j,j]} \geq \big( \frac{\frac{\| \boldsymbol{z}_0 \|}{\sqrt{K \, \epsilon}} - 1 +  \frac{d_{\min}}{d_j} }{\frac{\| \boldsymbol{z}_0 \|}{\sqrt{K \, \epsilon}} - 1 +  \frac{d_{\min}}{d_k} }\big)^{t} \,.
\end{equation}
or equivalently,
\begin{equation}
\frac{\boldsymbol{B}_{t-1} [k,k]}{\boldsymbol{B}_{t-1} [j,j]} \geq \big( \frac{\frac{\| \boldsymbol{z}_0 \|}{\sqrt{K \, \epsilon}} - 1 +  \frac{d_{\min}}{d_j} }{\frac{\| \boldsymbol{z}_0 \|}{\sqrt{K \, \epsilon}} - 1 +  \frac{d_{\min}}{d_k} }\big)^{t} \,.
\end{equation}

\qed

\end{proof}

\begin{theoremappendix}
Suppose $\|\boldsymbol{y}_0\| > \sqrt{K \, \epsilon}$. Then the sparsity index $S_{\boldsymbol{B}_{\underline{t}-1}}$ (where $\underline{t} = \frac{\| \boldsymbol{y}_0\|}{\kappa \, \sqrt{K \, \epsilon}} - \frac{1}{\kappa}$ is number of guaranteed self-distillation steps before solution collapse) ``decreases'' in $\epsilon$, i.e. lower $\epsilon$ yields higher sparsity.

Furthermore at the limit $\epsilon \rightarrow 0$, the sparsity index has the form,
\begin{equation}
\lim_{\epsilon \rightarrow 0} S_{\boldsymbol{B}_{\underline{t}-1}} = e^{\frac{d_{\min}}{\kappa} \, \min_{k \in \{1,2,\dots,K-1\}}(\frac{1}{d_k} - \frac{1}{d_{k+1}})} \,.
\end{equation}
\end{theoremappendix}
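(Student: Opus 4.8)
The plan is to collapse the entire statement to a one-variable monotonicity question. First I would introduce the shorthands $r \triangleq \frac{\|\boldsymbol{y}_0\|}{\sqrt{K\,\epsilon}}$ and $s \triangleq r - 1$, observing that the non-collapse hypothesis $\|\boldsymbol{y}_0\| > \sqrt{K\,\epsilon}$ gives $s > 0$, and that $s$ is a strictly decreasing function of $\epsilon$. Writing $a_k \triangleq \frac{d_{\min}}{d_k}$ and $b_k \triangleq \frac{d_{\min}}{d_{k+1}}$ (so $a_k > b_k > 0$ because the $d$'s are increasing), and noting that the exponent in (\ref{eq:def_S_b_under_t}) is exactly $\underline{t} = \frac{r-1}{\kappa} = \frac{s}{\kappa}$, the sparsity index becomes $S = \min_{k} g_k(s)$ with $g_k(s) \triangleq \big(\frac{s+a_k}{s+b_k}\big)^{s/\kappa}$. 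Since the minimum of a finite family of functions that are each increasing in $s$ is itself increasing in $s$, and $s$ decreases in $\epsilon$, the claim that $S$ decreases in $\epsilon$ reduces to showing that each $g_k$ is increasing in $s$ on $(0,\infty)$.

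To prove that each $g_k$ is increasing I would pass to logarithms: $\log g_k(s) = \frac{s}{\kappa}\big(\log(s+a_k) - \log(s+b_k)\big)$, whose derivative has the sign of $\psi(s) \triangleq \log\frac{s+a_k}{s+b_k} + s\big(\frac{1}{s+a_k} - \frac{1}{s+b_k}\big)$. This is not obviously positive, because the logarithmic term is positive while the reciprocal difference is negative; resolving this tension is the main obstacle. I would settle it by a second differentiation: setting $u \triangleq s+a_k$ and $v \triangleq s+b_k$, a short computation yields $\psi'(s) = \frac{u-v}{u^2 v^2}\big(s(u+v) - 2uv\big)$, and expanding gives $s(u+v) - 2uv = -(a_k+b_k)s - 2a_k b_k < 0$. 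Hence $\psi$ is strictly decreasing on $(0,\infty)$; since $\psi(s) \to 0$ as $s \to \infty$, it follows that $\psi(s) > 0$ for every finite $s > 0$, so $\log g_k$, and therefore $g_k$, is strictly increasing. This establishes the monotonicity claim.

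For the limit, I observe that $\epsilon \to 0$ forces $s \to \infty$, so I would evaluate $\lim_{s\to\infty} g_k(s)$ for each fixed $k$. Writing $\log g_k(s) = \frac{s}{\kappa}\log\big(1 + \frac{a_k - b_k}{s+b_k}\big)$ and using $\log(1+x) = x + o(x)$, the exponent tends to $\frac{a_k - b_k}{\kappa} = \frac{d_{\min}}{\kappa}\big(\frac{1}{d_k} - \frac{1}{d_{k+1}}\big)$, so $g_k(s) \to \exp\!\big(\frac{d_{\min}}{\kappa}(\frac{1}{d_k} - \frac{1}{d_{k+1}})\big)$. Because the minimum runs over the finite index set $\{1,\dots,K-1\}$, the limit commutes with the minimum (an elementary uniform-convergence argument), and since $x \mapsto e^{cx}$ with $c = \frac{d_{\min}}{\kappa} > 0$ is increasing, the minimum of the limits equals $\exp\!\big(\frac{d_{\min}}{\kappa}\min_{k}(\frac{1}{d_k} - \frac{1}{d_{k+1}})\big)$, which is precisely the claimed value. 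Aside from the sign analysis of $\psi$ and this interchange of limit and finite minimum, the remaining steps are routine calculus.
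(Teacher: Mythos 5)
Your proof is correct, and its skeleton matches the paper's own: the same reparametrization by $m = \frac{\|\boldsymbol{y}_0\|}{\sqrt{K\,\epsilon}} - 1$ (your $s$), the same reduction to showing each term of the finite minimum is monotone in that variable, and the same term-by-term limit computation followed by exchanging the limit with the finite minimum and pulling the minimum into the exponent. Where you genuinely diverge is in the key positivity step, which you correctly identify as the main obstacle. The paper settles it by algebraically regrouping the logarithmic derivative as $e\big(1 + \tfrac{d_{\min}}{m\,d_k}\big) - e\big(1 + \tfrac{d_{\min}}{m\,d_{k+1}}\big)$ with $e(x) \triangleq \tfrac{1}{x} + \log(x)$, which is increasing on $(1,\infty)$ because $e'(x) = \tfrac{x-1}{x^2} > 0$ there; positivity is then immediate from $d_k < d_{k+1}$. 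You instead show that your $\psi$ (the same quantity up to the positive factor $1/\kappa$) is strictly decreasing via a second differentiation, and combine that with $\psi(s) \to 0$ as $s \to \infty$ to conclude $\psi > 0$ on $(0,\infty)$. Both are two-derivative arguments at heart, but they buy different things: the paper's regrouping is purely pointwise and gives positivity in one line once you see the decomposition, which requires a small leap of insight; your route is more mechanical and discoverable, at the cost of needing the boundary-behavior argument at infinity. Your computation of $\psi'(s) = \frac{u-v}{u^2 v^2}\big(s(u+v) - 2uv\big)$ with $s(u+v) - 2uv = -(a_k + b_k)s - 2a_k b_k < 0$ checks out, and your limit evaluation via $\log(1+x) = x + o(x)$ is equivalent to the paper's use of the identity $\lim f^g = e^{\lim (f-1)g}$.
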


\begin{proof}
We first show that the sparsity index is decreasing in $\epsilon$. We start from the definition of the sparsity index $S_{\boldsymbol{B}_{\underline{t}-1}}$ in (\ref{eq:def_S_b_under_t}) which we repeat below,
\begin{equation}
S_{\boldsymbol{B}_{\underline{t}-1}} = \min_{k \in \{1,2,\dots,K-1\}} \left( \frac{\frac{\| \boldsymbol{y}_0 \|}{\sqrt{K \, \epsilon}} - 1 +  \frac{d_{\min}}{d_k} }{\frac{\| \boldsymbol{y}_0 \|}{\sqrt{K \, \epsilon}} - 1 +  \frac{d_{\min}}{d_{k+1}} }\right)^{\frac{\| \boldsymbol{y}_0\|}{\kappa \, \sqrt{K \, \epsilon}} - \frac{1}{\kappa}} \,.
\end{equation}
For brevity, we define base and exponent as,
\begin{eqnarray}
b &\triangleq& \frac{m +  \frac{d_{\min}}{d_k} }{m +  \frac{d_{\min}}{d_{k+1}} } \\
p &\triangleq& \frac{m}{\kappa} \\
m &\triangleq& \frac{\| \boldsymbol{y}_0 \|}{\sqrt{K \, \epsilon}} -1 \,,
\end{eqnarray}
so that,
\begin{equation}
S_{\boldsymbol{B}_{\underline{t}-1}}(\epsilon) \, = \, b^p \,.
\end{equation}
The derivative is thus,
\begin{eqnarray}
& & \frac{d}{d \epsilon} S_{\boldsymbol{B}_{\underline{t}-1}} \\ &=& \frac{d \, S_{\boldsymbol{B}_{\underline{t}-1}}}{d m} \, \frac{d m} {d \epsilon} \\
&=& \Big( b^p \big( \frac{p \, b_m}{b} + p_m \, \log(b) \big) \Big)  \,\, \Big( \frac{d m} {d \epsilon} \Big)\\
&=& b^p \big( \frac{p \, b_m}{b} + p_m \, \log(b) \big) \,\, \big( - \frac{1} {2 \epsilon} (m+1)\big)\\
&=& b^p \big( \frac{p}{m +  \frac{d_{\min}}{d_k} } \, - \frac{p}{m + \frac{d_{\min}}{a_{k+1}}}\, + \frac{1}{\kappa} \, \log(b) \big)  \,\, \big( - \frac{1} {2 \epsilon} (m+1)\big) \\
&=& \frac{b^p}{\kappa} \, \big( \frac{m}{m +  \frac{d_{\min}}{d_k} } \, - \frac{m}{m + \frac{d_{\min}}{a_{k+1}}}\, +  \, \log(b) \big)  \,\, \big( - \frac{1} {2 \epsilon} (m+1)\big) \\
&=& \frac{b^p}{\kappa} \, \big( \frac{1}{1 +  \frac{d_{\min}}{m \, d_k} } \, - \frac{1}{1 + \frac{d_{\min}}{m \, a_{k+1}}}\, +  \, \log(b) \big)  \,\, \big( - \frac{1} {2 \epsilon} (m+1)\big) \\
&=& \frac{b^p}{\kappa} \, \big( \frac{1}{1 +  \frac{d_{\min}}{m \, d_k} } \, - \frac{1}{1 + \frac{d_{\min}}{m \, a_{k+1}}}\, +  \, \log(\frac{1 +  \frac{d_{\min}}{m \, d_k} }{1 +  \frac{d_{\min}}{m \, d_{k+1}} }) \big)  \,\, \big( - \frac{1} {2 \epsilon} (m+1)\big) \\
&=& \frac{b^p}{\kappa} \, \big( \frac{1}{1 +  \frac{d_{\min}}{m \, d_k} } + \log(1 +  \frac{d_{\min}}{m \, d_k} ) \, - \,\frac{1}{1 + \frac{d_{\min}}{m \, a_{k+1}}} - \log(1 +  \frac{d_{\min}}{m \, d_{k+1}} ) \big)  \,\, \big( - \frac{1} {2 \epsilon} (m+1)\big)  \,.
\end{eqnarray}
We now focus on the first parentheses. Define the function $e(x) \triangleq \frac{1}{x} + \log(x)$. Thus we can write the contents in the first parentheses more compactly,
\begin{eqnarray}
& & \frac{1}{1 +  \frac{d_{\min}}{m \, d_k} } + \log(1 +  \frac{d_{\min}}{m \, d_k} ) \, - \,\frac{1}{1 + \frac{d_{\min}}{m \, a_{k+1}}} - \log(1 +  \frac{d_{\min}}{m \, d_{k+1}} )\\
\label{eq:e_args}
&=& e(1 +  \frac{d_{\min}}{m \, d_k}) - e(1 +  \frac{d_{\min}}{m \, d_{k+1}}) \,.
\end{eqnarray}
However, $e^\prime(x)=\frac{x-1}{x^2}$, thus when $x>1$ the function $e^\prime(x)$ is positive. Consequently, when $x>1$ $e(x)$ is increasing. In fact, since both $\frac{d_{\min}}{m \, d_k}$ and $\frac{d_{\min}}{m \, d_k}$ are positive, the arguments of $e$ satsify the condition of being greater that $1$ and thus $e$ is increasing. On the other hand, since $d_{k+1} > d_k$ it follows that $1 + \frac{d_{\min}}{m \, d_k} > 1 +  \frac{d_{\min}}{m \, d_{k+1}}$, and thus by leveraging the fact that $e$ is increasing we obtain $e(1+ \frac{d_{\min}}{m \, d_k}) > e(1 +  \frac{d_{\min}}{m \, d_{k+1}})$. Finally by plugging the definition of $e$ we obtain,
\begin{equation}
\frac{1}{1 +  \frac{d_{\min}}{m \, d_k} } + \log(1 +  \frac{d_{\min}}{m \, d_k} ) \, > \,\frac{1}{1 + \frac{d_{\min}}{m \, a_{k+1}}} + \log(1 +  \frac{d_{\min}}{m \, d_{k+1}} ) \,.
\end{equation}
It is now easy to determine the sign of $\frac{d}{d \epsilon} S$ as shown below,
\begin{eqnarray}
& & \frac{d}{d \epsilon} S_{\boldsymbol{B}_{\underline{t}-1}}\\
&=& \underbrace{\frac{b^p}{\kappa}}_{\color{red}\textrm{positive}} \, \big(\underbrace{ \frac{1}{1 +  \frac{d_{\min}}{m \, d_k} } + \log(1 +  \frac{d_{\min}}{m \, d_k} ) \, - \,\frac{1}{1 + \frac{d_{\min}}{m \, a_{k+1}}} - \log(1 +  \frac{d_{\min}}{m \, d_{k+1}} ) }_{\color{red}\textrm{positive}} \big)  \,\, \big( \underbrace{ - \frac{1} {2 \epsilon} (m+1)}_{\color{red}\textrm{negative}} \big)  \,.
\end{eqnarray}
By showing that $\frac{d}{d \epsilon} S_{\boldsymbol{B}_{\underline{t}-1}} < 0$ we just proved $ S_{\boldsymbol{B}_{\underline{t}-1}}$ is decreasing in $\epsilon$.

We now focus on the limit case $\epsilon \rightarrow 0$. First note due to the identity $m=\frac{\| \boldsymbol{y}_0 \|}{\sqrt{K \, \epsilon}} -1$ we have the following identity,
\begin{eqnarray}
& & \lim_{\epsilon \rightarrow 0 } \min_{k \in \{1,2,\dots,K-1\}} \left( \frac{\frac{\| \boldsymbol{y}_0 \|}{\sqrt{K \, \epsilon}} - 1 +  \frac{d_{\min}}{d_k} }{\frac{\| \boldsymbol{y}_0 \|}{\sqrt{K \, \epsilon}} - 1 +  \frac{d_{\min}}{d_{k+1}} }\right)^{\frac{\| \boldsymbol{y}_0\|}{\kappa \, \sqrt{K \, \epsilon}} - \frac{1}{\kappa}} \\
&=& \lim_{m \rightarrow \infty} \min_{k \in \{1,2,\dots,K-1\}} \left( \frac{m +  \frac{d_{\min}}{d_k} }{m +  \frac{d_{\min}}{d_{k+1}} }\right)^{\frac{1}{\kappa} m} \,.
\end{eqnarray}
Further, since pointwise minimum of continuous functions is also a continuous function, we can move the limit inside the minimum,
\begin{eqnarray}
& & \lim_{m \rightarrow \infty} \min_{k \in \{1,2,\dots,K-1\}} \left( \frac{m +  \frac{d_{\min}}{d_k} }{m +  \frac{d_{\min}}{d_{k+1}} }\right)^{\frac{1}{\kappa} m} \\
&=& \min_{k \in \{1,2,\dots,K-1\}} \lim_{m \rightarrow \infty} \left( \frac{m + \frac{d_{\min}}{d_k} }{m +  \frac{d_{\min}}{d_{k+1}} }\right)^{\frac{1}{\kappa} m} \\
\label{eq:limit_exp}
&=& \min_{k \in \{1,2,\dots,K-1\}} e^{\frac{\frac{d_{\min}}{d_k} - \frac{d_{\min}}{d_{k+1}}}{\kappa}} \\
&=& \min_{k \in \{1,2,\dots,K-1\}} e^{\frac{d_{\min}}{\kappa} (\frac{1}{d_k} - \frac{1}{d_{k+1}})} \\
\label{eq:e_monotone}
&=& e^{\frac{d_{\min}}{\kappa} \, \min_{k \in \{1,2,\dots,K-1\}}(\frac{1}{d_k} - \frac{1}{d_{k+1}})} \,,
\end{eqnarray}
where in (\ref{eq:limit_exp}) we used the identity $\lim_{x \rightarrow \infty} {f(x)}^{g(x)} = e^{\lim_{x \rightarrow \infty} \big(f(x)-1\big)\,\big(g(x)\big) }$ and in (\ref{eq:e_monotone}) we used the fact that $e^{ \frac{d_{\min}}{\kappa} x}$ is monotonically increasing in $x$ (because $\frac{d_{\min}}{\kappa} > 0$).

\qed

\end{proof}

\pagebreak
\section{More on Experiments}
\label{sec:app-experiments}
\subsection{Setup Details}
We used Adam optimizer with learning rates of $0.001$ and $0.0001$ for CIFAR-10 and CIFAR-100, respectively. They are trained up to 64000 steps with batch size equal to 16 and 64 for CIFAR-10 and CIFAR-100, respectively. 
In all the experiments, we slightly regularize the training by weight decay regularization added to the fitting loss with its coefficient set to $0.0001$ and $0.00005$ for CIFAR-10 and CIFAR-100, respectively. 
Training and test is performed on the standard (50000 train-10000 test) split of the CIFAR dataset.
Most of the experiments are conducted using Resnet-50~\cite{He2015DeepRL} and CIFAR-10 and CIFAR-100 datasets~\cite{Krizhevsky2009LearningML}. However, we briefly validate our results on VGG-16~\cite{Simonyan15} too.

\subsection{$\ell_2$ Loss on Neural Network Predictions}
Figure \ref{fig:app-L2-pred-resnet50-CIFAR-10} shows the full results on CIFAR-10 and Resnet-50. The train and test accuracies have already been discussed in the main paper and are copied here to facilitate comparison.
However, in this subsection, we demonstrated the loss of the trained model at all steps with respect to the original ground truth data too. This may help establish an intuition on how self-distillation is regularizing the training on the original data.
Looking at the train loss we can see it first drops as the regularization is amplified and then increases while the model under-fits. This, again, suggests that the mechanism that self-distillation employs for regularization is different from early stopping. For CIFAR-100 the results in Figure~\ref{fig:app-L2-pred-resnet50-CIFAR-100} show a similar trend.

\begin{figure}[h]
    \centering
    \begin{tabular}{c c}
            \includegraphics[width=0.23\textwidth]{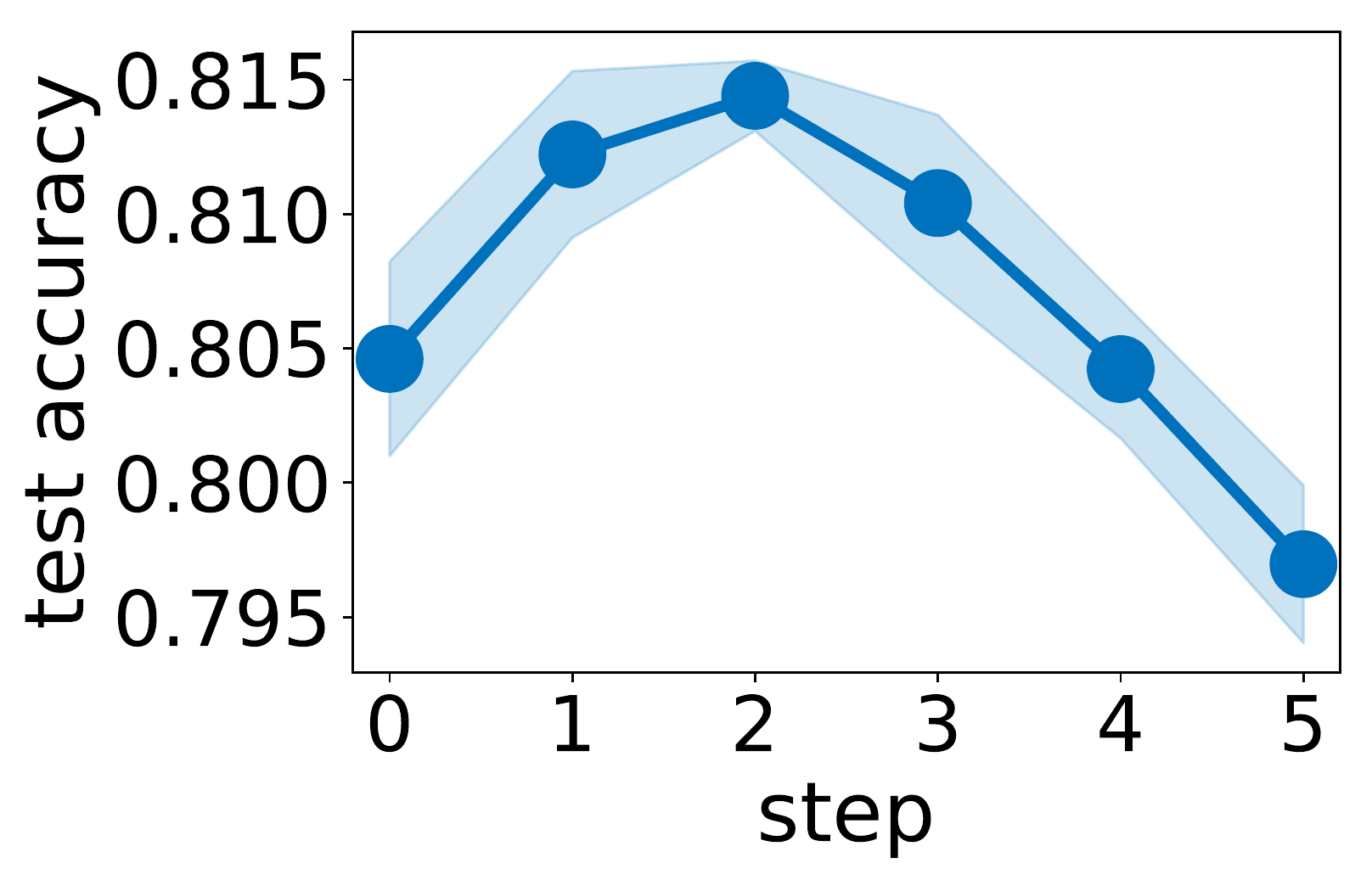} & 
            \includegraphics[width=0.23\textwidth]{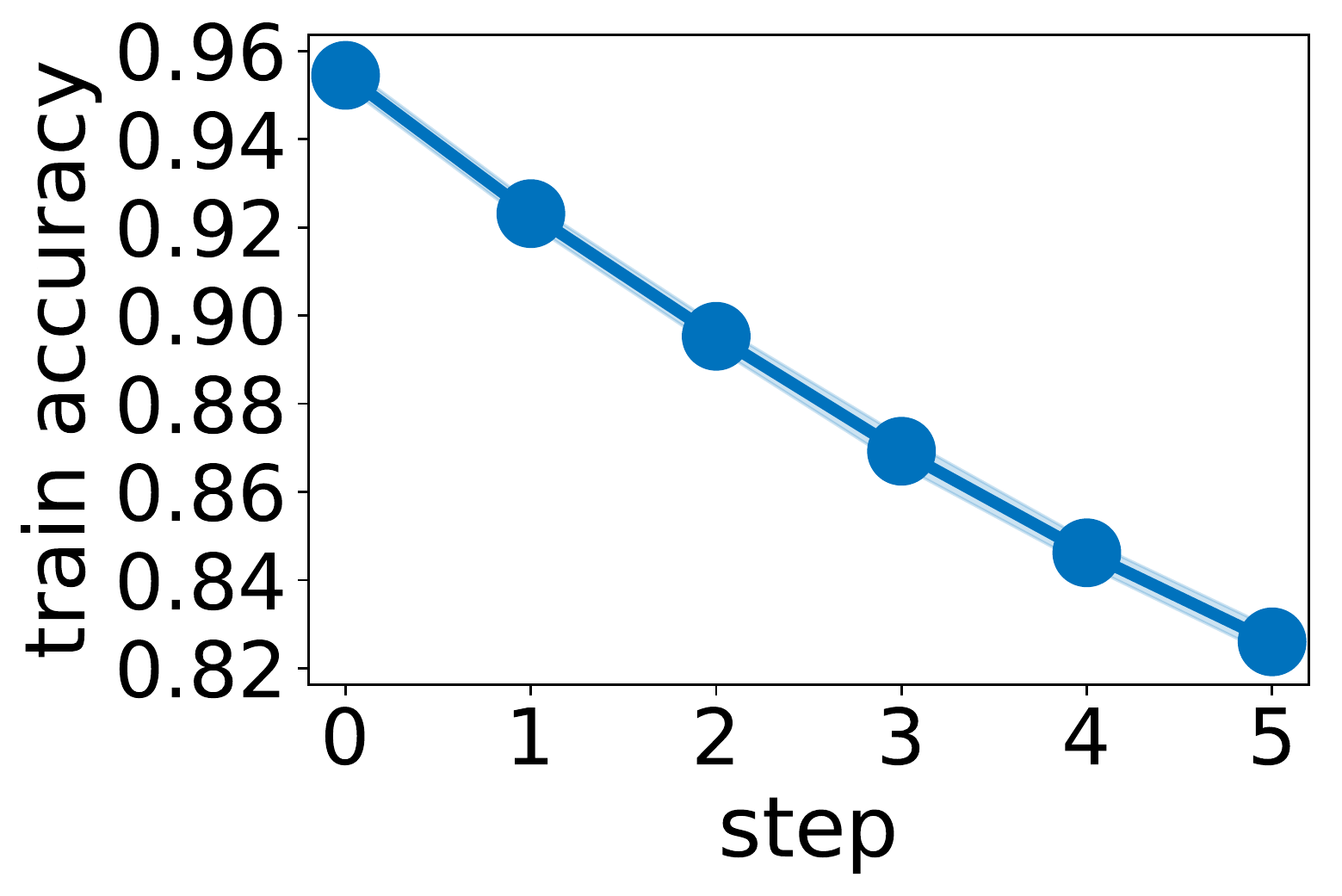} \\
            \includegraphics[width=0.23\textwidth]{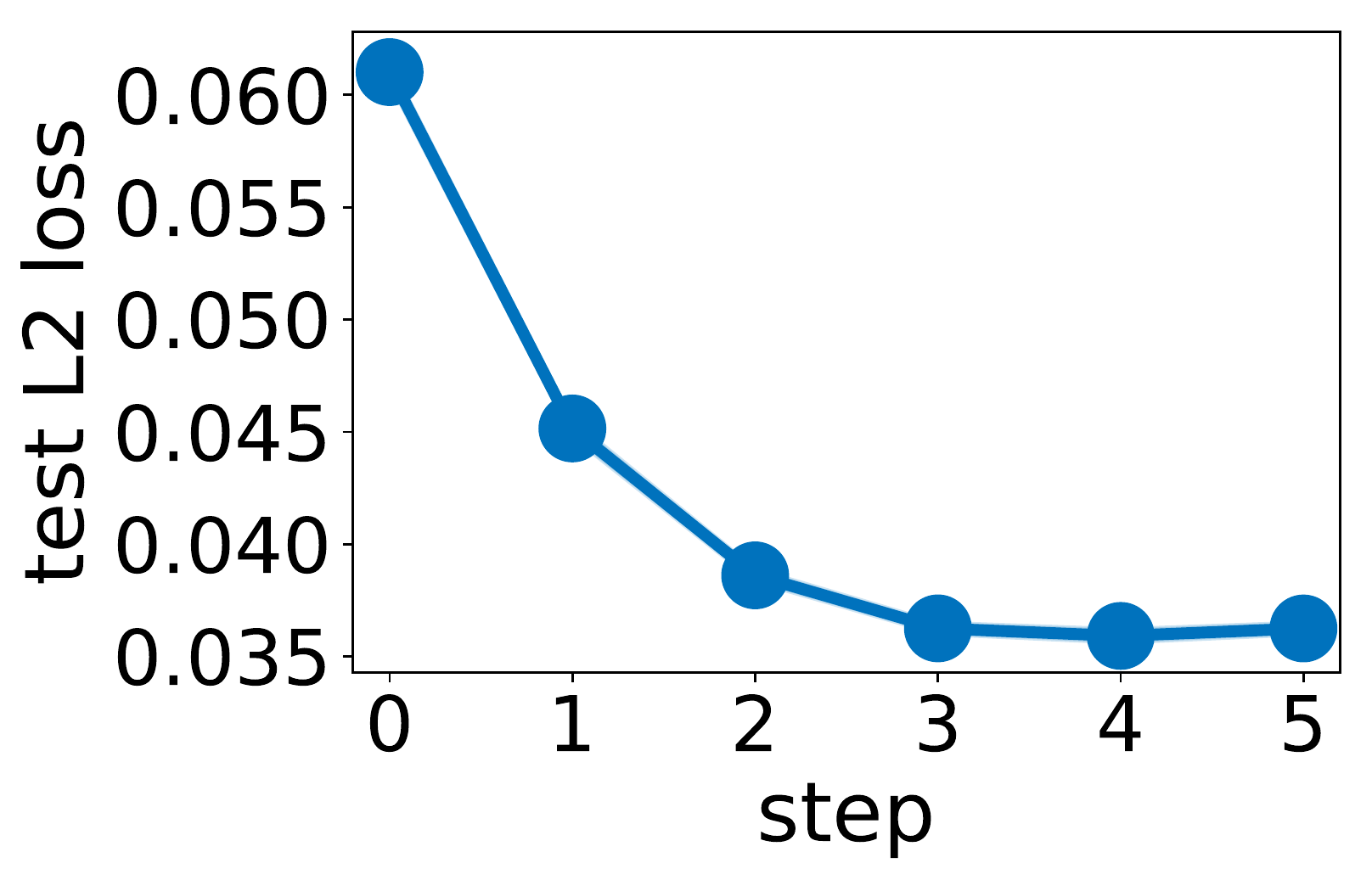} &
               \includegraphics[width=0.23\textwidth]{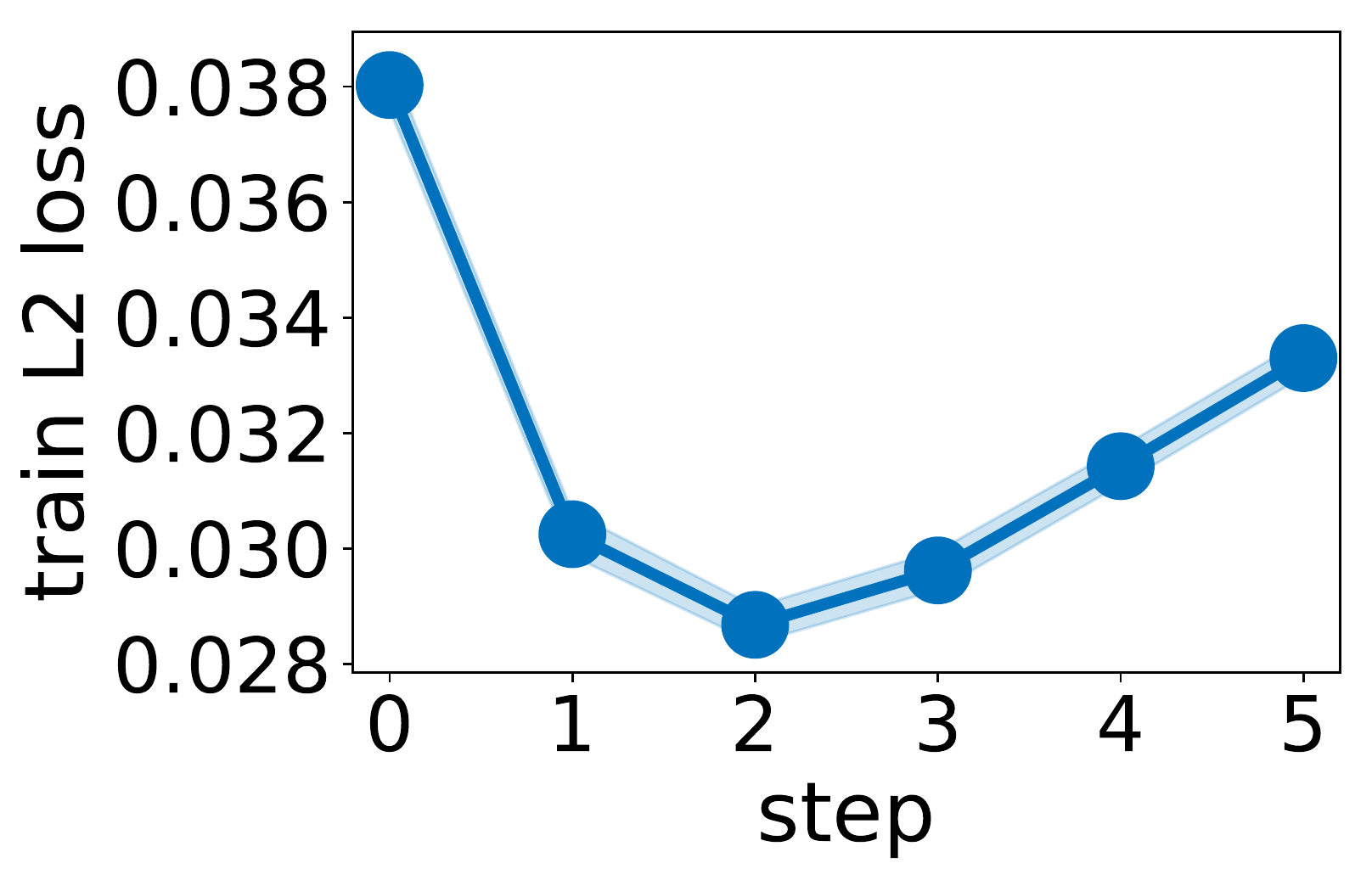}\\
    \end{tabular}
    \caption{Self-Distillation results with $\ell_2$ loss of neural network predictions for Resnet-50 and CIFAR-10 }
    \label{fig:app-L2-pred-resnet50-CIFAR-10}
\end{figure}

\begin{figure}[h]
    \centering
    \begin{tabular}{c c}
            \includegraphics[width=0.3\textwidth]{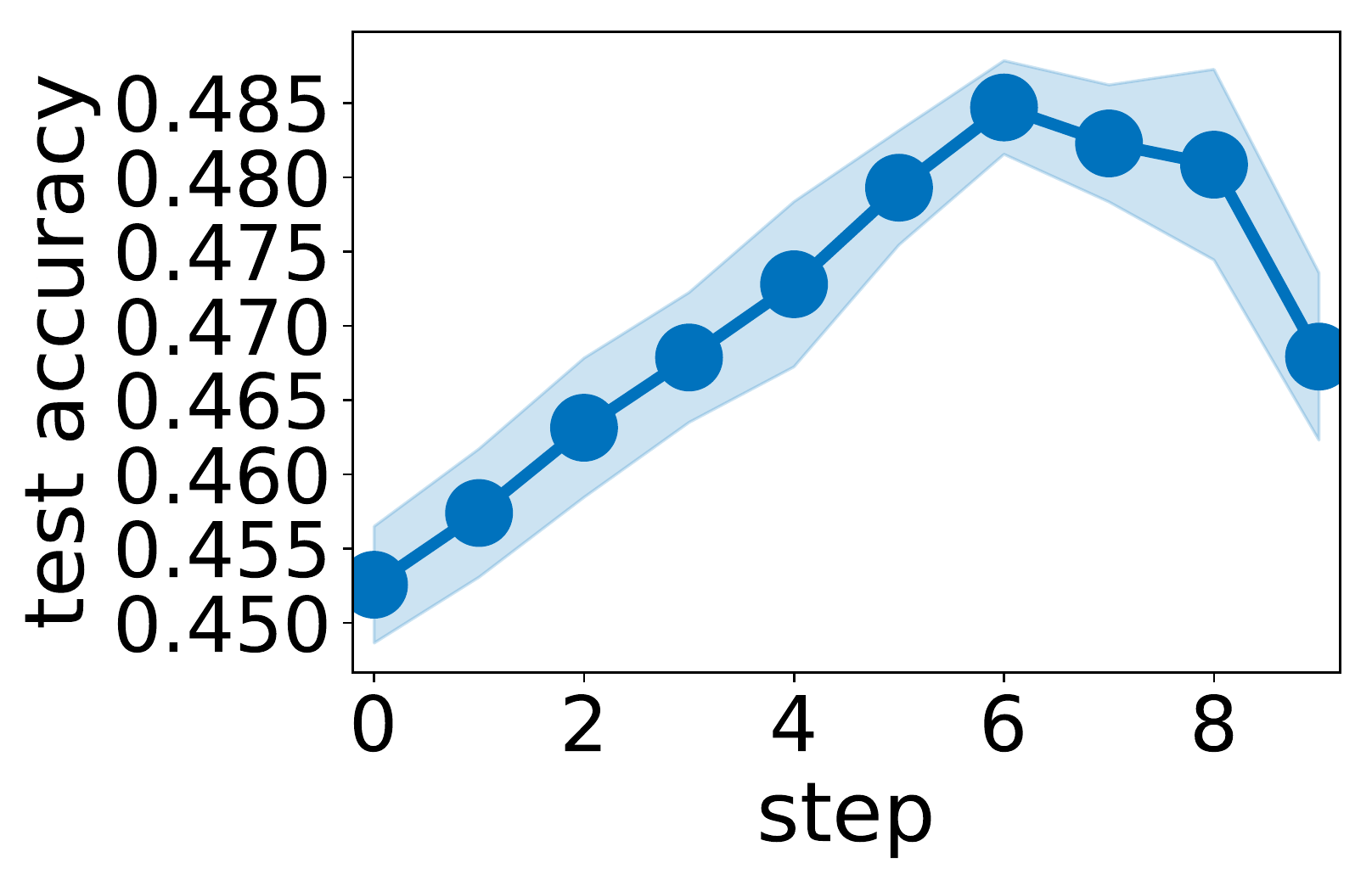} & 
            \includegraphics[width=0.3\textwidth]{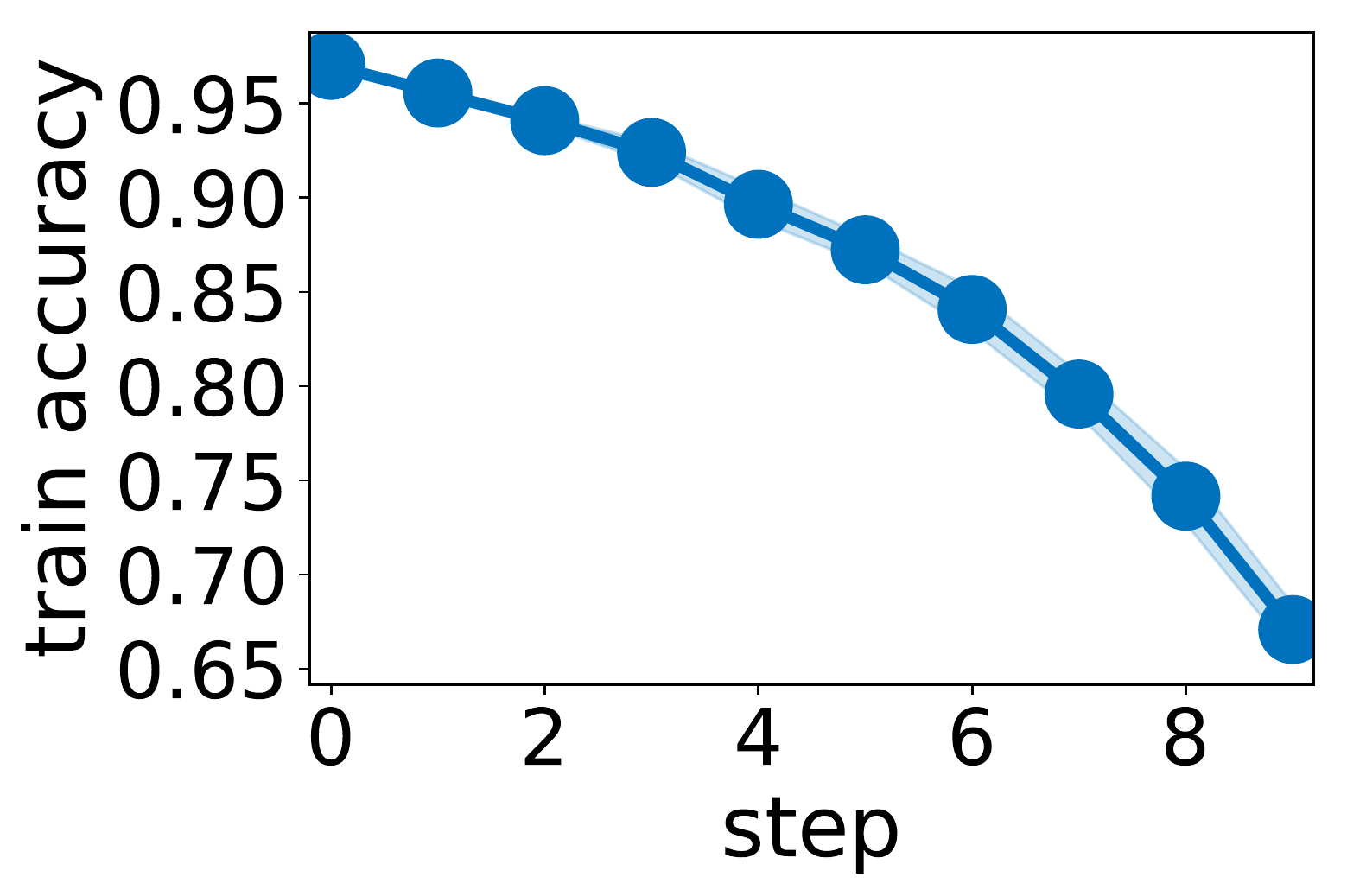}\\
            \includegraphics[width=0.3\textwidth]{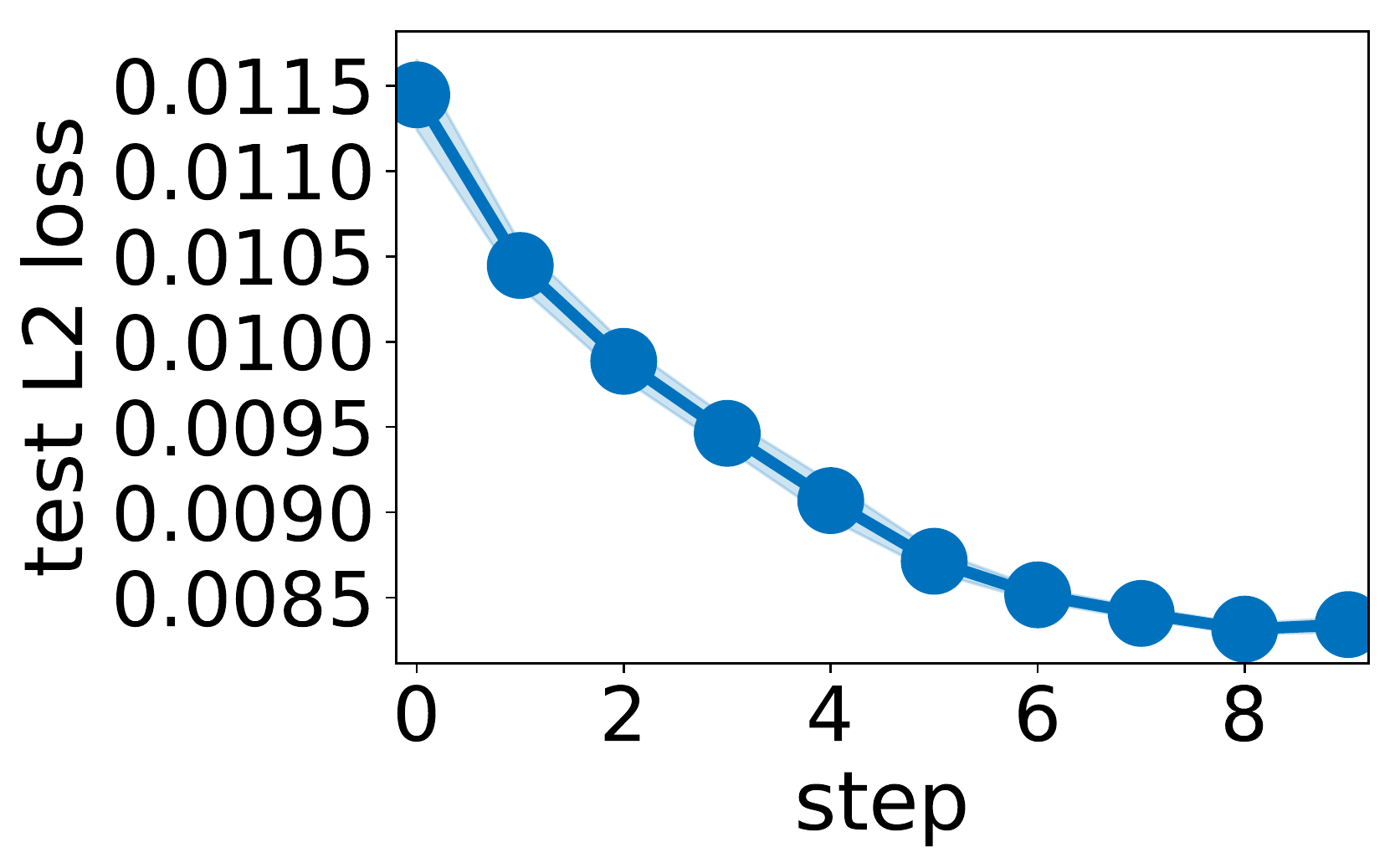}&
               \includegraphics[width=0.3\textwidth]{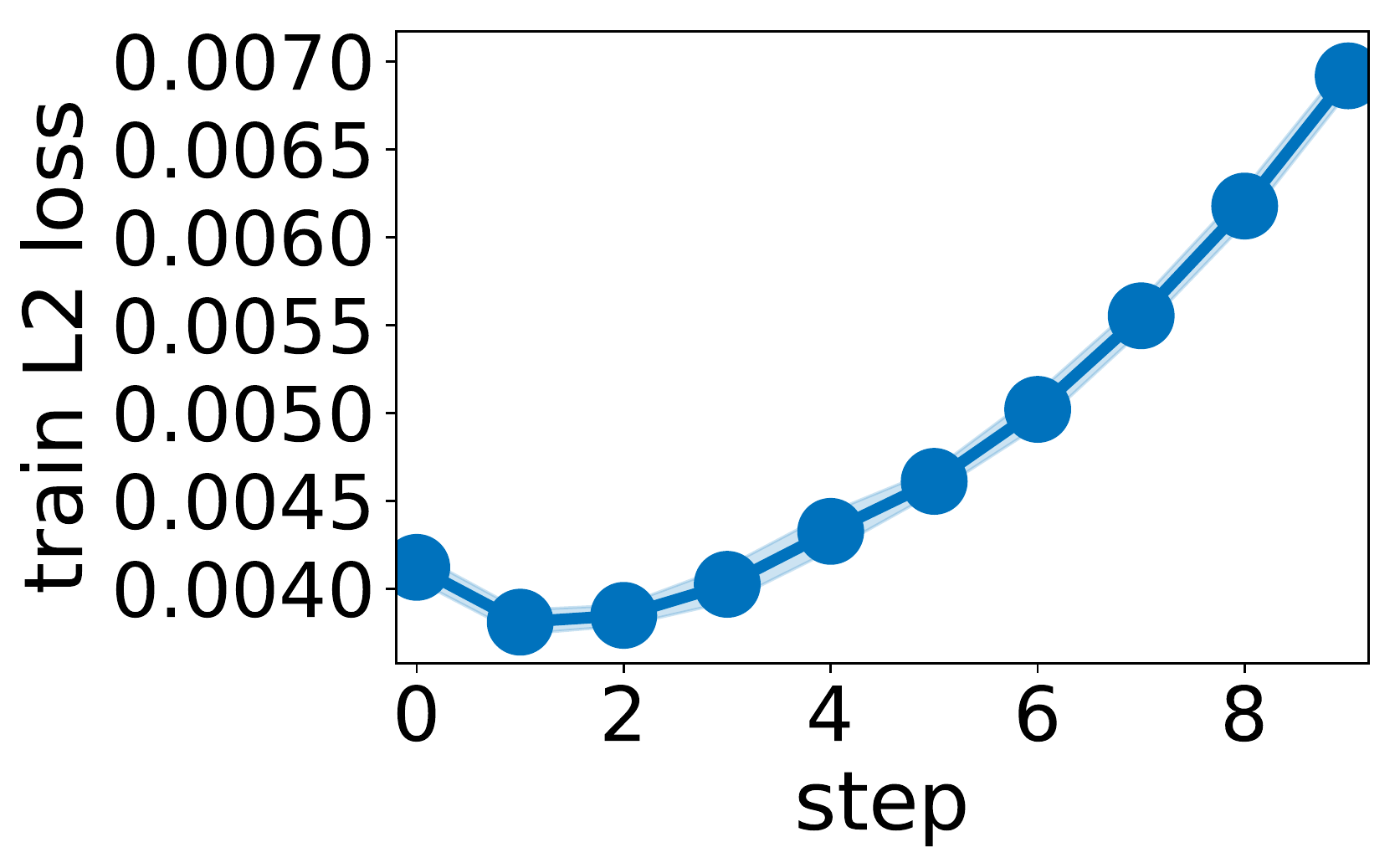}
    \end{tabular}
    \caption{Self-Distillation results with $\ell_2$ loss of neural network predictions for Resnet-50 and CIFAR-100 }
    \label{fig:app-L2-pred-resnet50-CIFAR-100}
\end{figure}

\subsection{Self-distillation on Hard Labels}
One might wonder how self-distillation would perform if we replace the neural network (soft) predictions with hard labels. In other words, the teacher's predictions are turned into one-hot-vector via \verb+argmax+ and they are treated like a dataset with augmented labels. Of course, since the model is already over-parameterized and trained close to interpolation regime only a small fraction of labels will change.
Figures~\ref{fig:app-ce-label-resnet50-CIFAR-10} and \ref{fig:app-ce-label-resnet50-CIFAR-100} show the results of self-distillation using cross-entropy loss on labels predicted by the teacher model. Surprisingly, self-distillation improves the performance here too. This observation may be related to learning under noisy dataset and calls for more future work on this interesting case.
\begin{figure}[ht]
    \centering
    \begin{tabular}{c c}
    \includegraphics[width=0.3\textwidth]{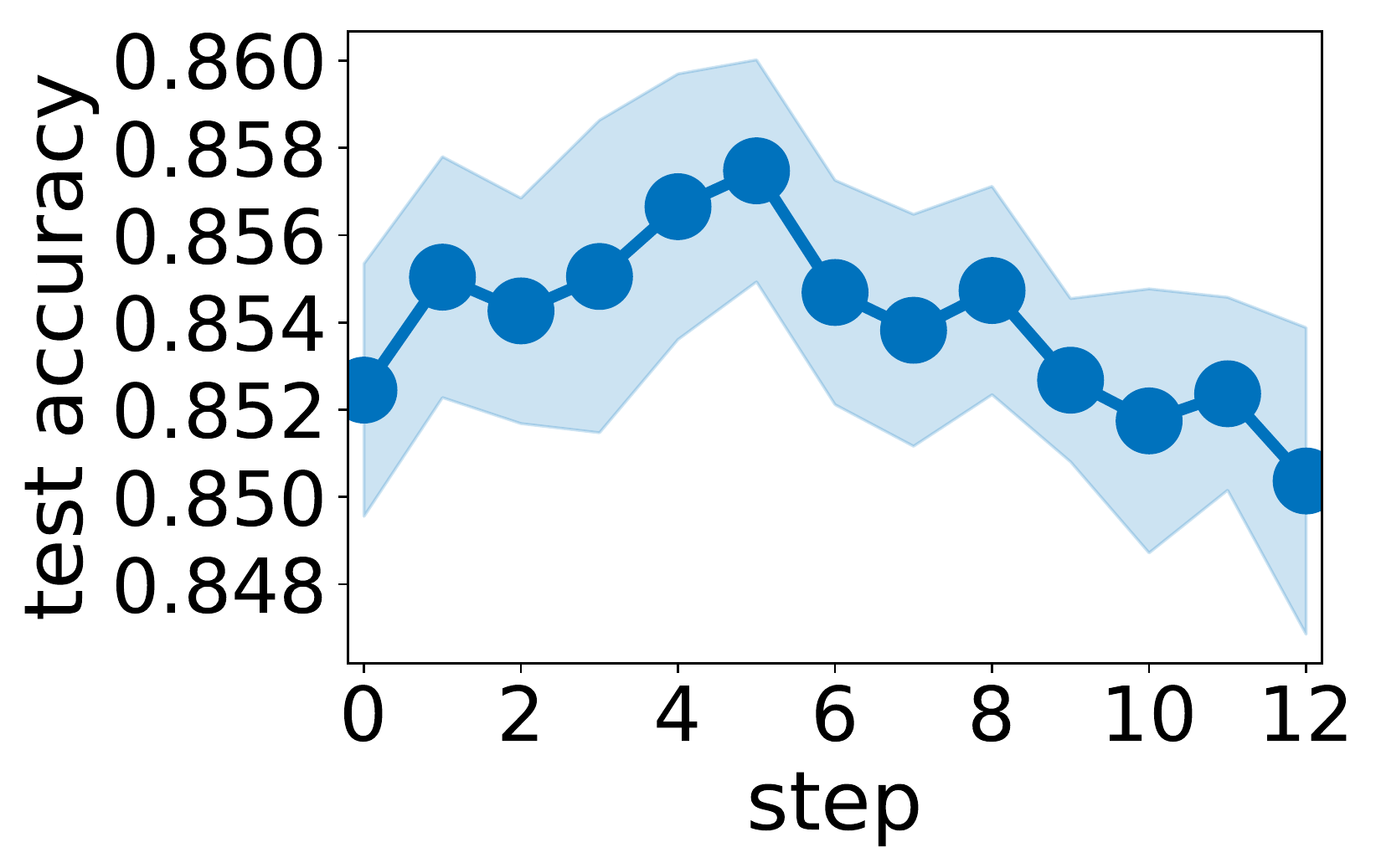} &
    \includegraphics[width=0.3\textwidth]{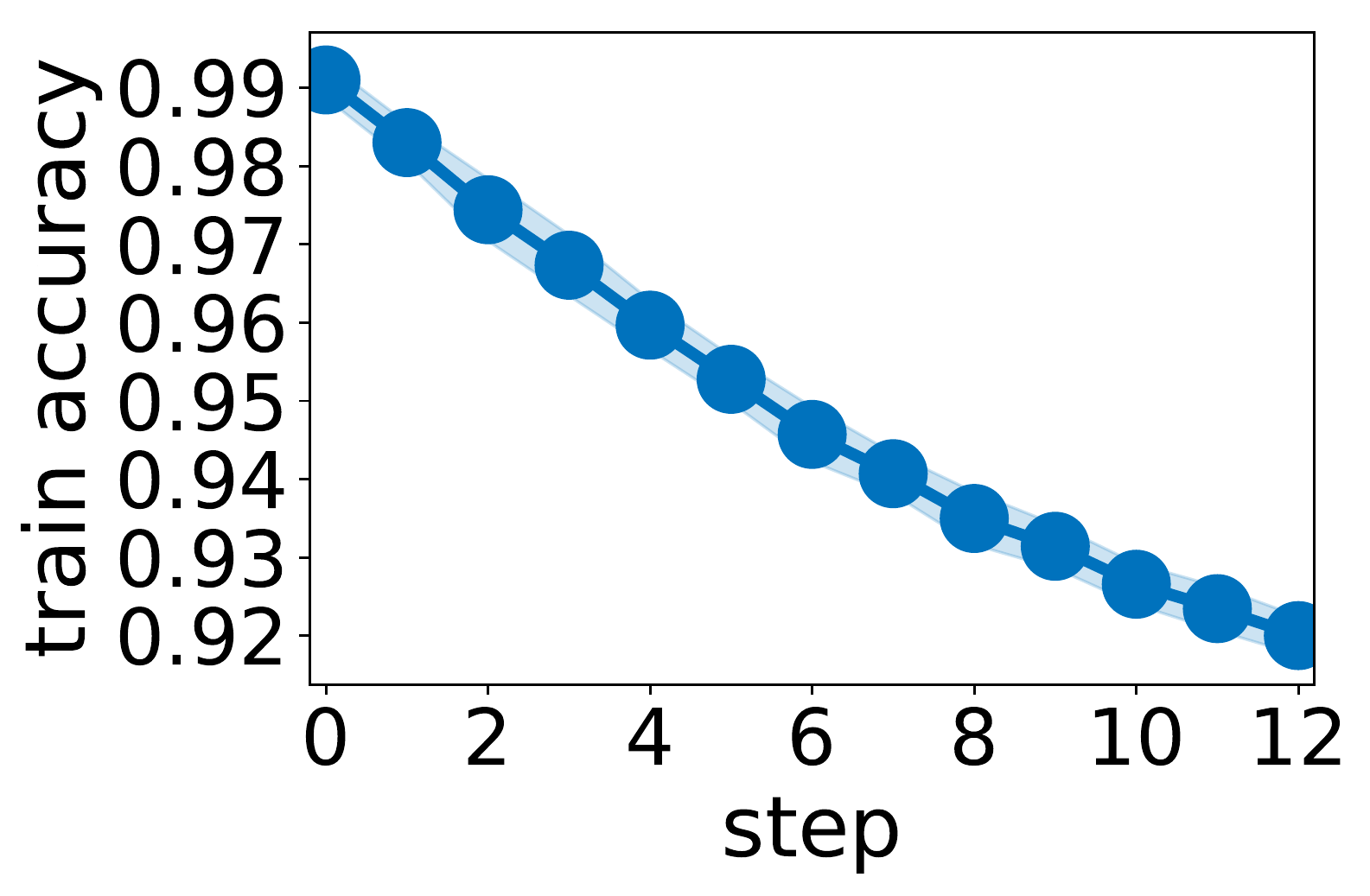} 
    \end{tabular}
    \caption{Self-Distillation results with cross-entropy loss on hard labels for Resnet-50 and CIFAR-10 }
    \label{fig:app-ce-label-resnet50-CIFAR-10}
\end{figure}

\begin{figure}[ht]
    \centering
    \begin{tabular}{c c}
    \includegraphics[width=0.3\textwidth]{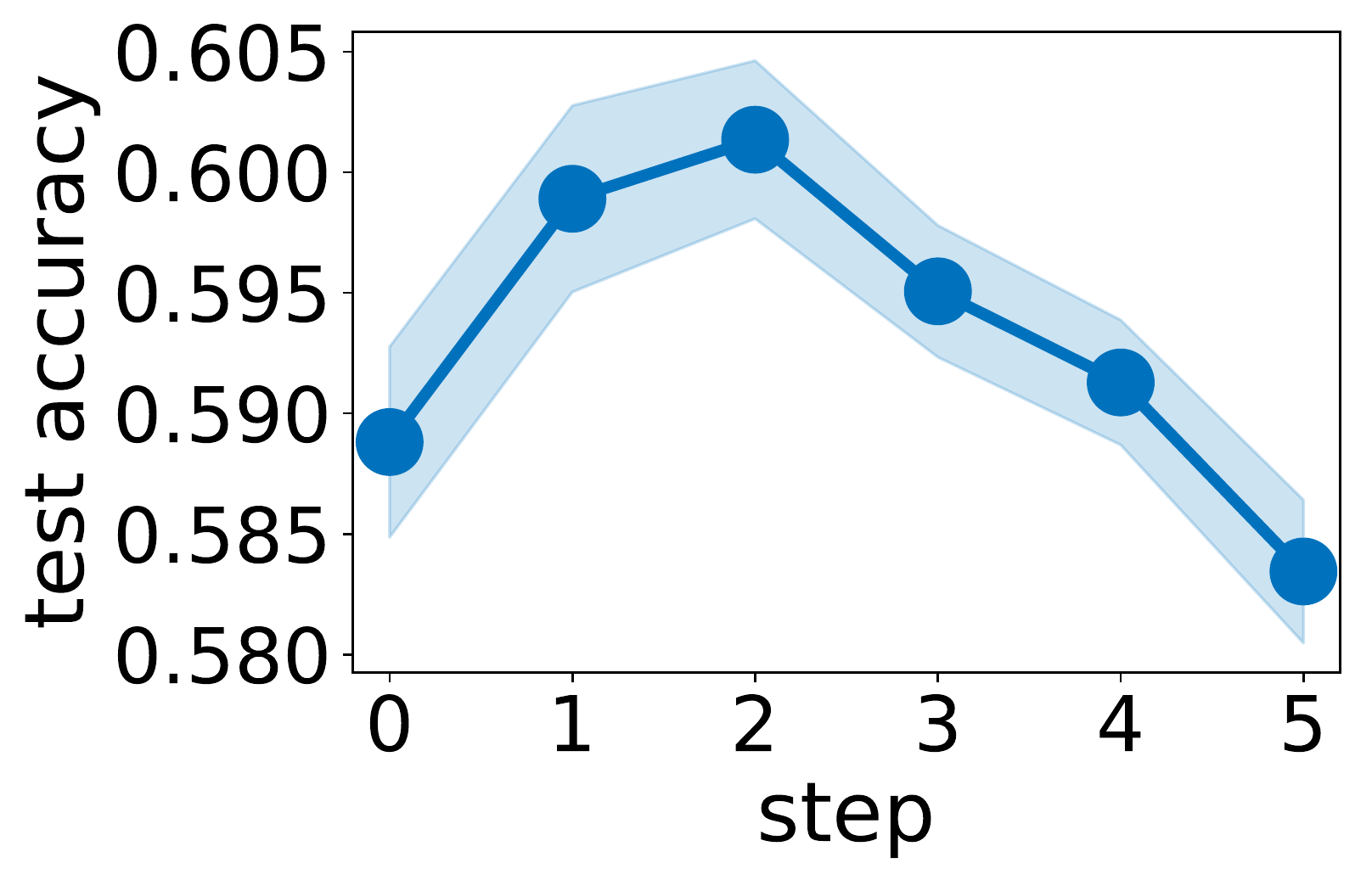} &
    \includegraphics[width=0.3\textwidth]{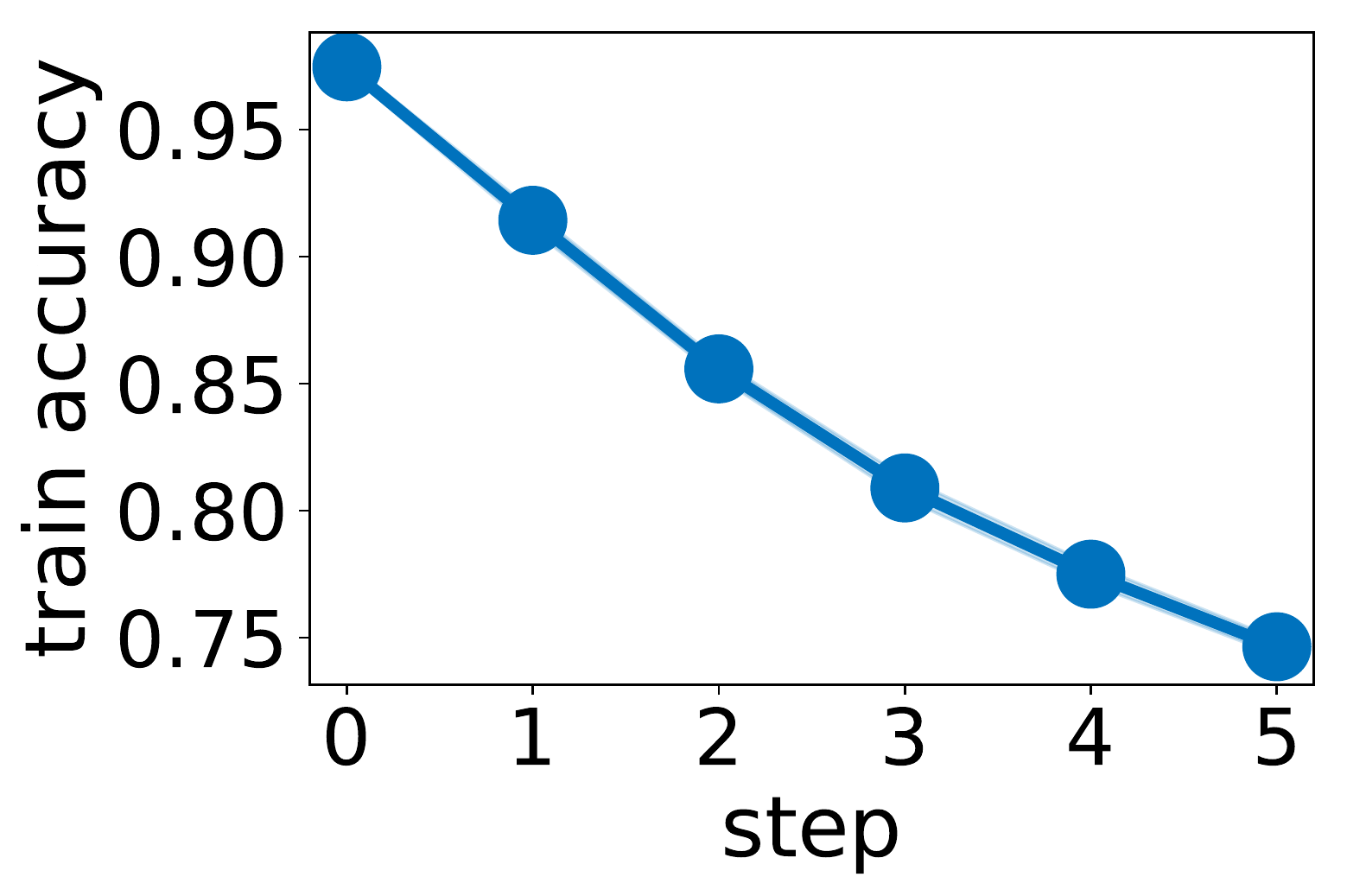} \\
    \end{tabular}
    \caption{Self-Distillation results with cross-entropy loss on hard labels for Resnet-50 and CIFAR-100 }
    \label{fig:app-ce-label-resnet50-CIFAR-100}
\end{figure}

\pagebreak
\section{Mathematica Code To Reproduce Illustrative Example}

\begin{lstlisting}[extendedchars=true,language=Mathematica]

x = (Table[i, {i, -5, 5}]/5 + 1)/2;
y = Sin[x*2*Pi] + 
  RandomVariate[NormalDistribution[0, 0.5], Length[x]]
ListPlot[y]

(* UNCOMMENT IF YOU WISH TO USE EXACT SAME RANDOM SAMPLES IN THE PAPER *)
(* y = {0.38476636465198066`, 
  1.2333967683416893`, 1.33232242218057`, 
  0.6920159488889518`, -0.29756145531871736`, -0.24189291901377769`, \
-0.7964485769175675`, -0.9616480167034174`, -0.49672509509916934`, \
-0.3469066003991437`, 0.5589512650600734`}; *)


(******** PLOT GREEN'S FUNCTION g0(X,T) FOR OPERATOR d^4/dx^4 ********)

g0 = 1/6*Max[{(T - X)^3, 0}] - 1/6*T*(1 - X)*(T^2 - 2*X + X^2);
ContourPlot[g0, {X, 0, 1}, {T, 0, 1}]
Plot3D[g0, {X, 0, 1}, {T, 0, 1}]

(***** COMPUTE g AND G *****)

G = Table[
   g0 /. X -> ((i/5 + 1)/2) /. T -> ((j/5 + 1)/2), {i, -5, 5}, {j, -5,
     5}];
g = Transpose[{Table[g0 /. T -> ((j/5 + 1)/2), {j, -5, 5}]}];

(***** PLOT GROUND-TRUTH FUNCTION (ORANGE) AND OVERFIT FUNCTION \
(BLUE) *****)
FNoReg = (Transpose[g].Inverse[
      G + 0.0000000001*IdentityMatrix[Length[x]]].Transpose[{y}])[[1, 
   1]];
pts = Table[{x[[i]], y[[i]]}, {i, 1, Length[x]}];
Show[{ListPlot[pts], Plot[{FNoReg, Sin[X*2*Pi]}, {X, 0, 1}]}]


(***** PARAMETERS *****)
MaxIter = 10;
eps = 0.045;


(***** SUBROUTINES *****)
Loss[G_, yin_, c_] := Module[
   {t = (G.Inverse[c*IdentityMatrix[Length[yin]] + G] - 
        IdentityMatrix[Length[x]]).yin},
   Total[Flatten[t]^2]/Length[yin]
   ];

FindRootsC[f_, c_] := Module[
   {Sol = Quiet[Solve[f == 0, c]], Sel},
   Sel = Select[
     c /. Sol, (Abs[Im[#]] < 0.00000001) && # > 0.00000001 &]
   ];


(***** MAIN *****)

(* Initialization *)
y0 = Transpose[{y}];
ycur = y0;
B = IdentityMatrix[Length[x]];
FunctionSequence = {};
ASequence = {};
BSequence = {};

(* Self-Distllation Loop *)
For[i = 1;, i < MaxIter, i++,
 Print["Iteration ", i];
 Print["Norm[y]=", Norm[ycur]];
 L = Loss[G, ycur, c];
 RootsC = FindRootsC[L - eps, c];
 Switch [Length[RootsC], 0, (Print["No Root"]; Break[];), 1, 
  Print["Found Unique Root c=", RootsC[[1]] ];];
 (* Now that root is unique *)
 RootC = RootsC[[1]];
 Print["Achieved Loss Value ", Loss[G, ycur, RootC]];
 U = G.Inverse[G + RootC*IdentityMatrix[Length[ycur]]];
 A = DiagonalMatrix[Eigenvalues[U]]; 
 f = (Transpose[g].Inverse[
      G + RootC*IdentityMatrix[Length[ycur]]].ycur)[[1, 1]];
 B = B.A;
 ycur = U.ycur;
 
 FunctionSequence = Append[FunctionSequence, f];
 ASequence = Append[ASequence, Diagonal[A]];
 BSequence = Append[BSequence, Diagonal[B]];
 ]

If[i == MaxIter, Print["Max Iterations Reached!"]]

Plot[FunctionSequence, {X, 0, 1}]
BarChart[ASequence, ChartStyle -> "DarkRainbow", AspectRatio -> 0.2, 
 ImageSize -> Full]
BarChart[BSequence, ChartStyle -> "DarkRainbow", AspectRatio -> 0.2, 
 ImageSize -> Full]


\end{lstlisting}
\pagebreak
\section{Python Implementation}
Implementing self-distillation is quite straight forward provided with merely a customized loss that replaces the ground-truth labels with teacher predictions. Here, we provide a Tensorflow implementation of the self-distillation loss function:

\definecolor{Code}{rgb}{0,0,0}
\definecolor{Decorators}{rgb}{0.5,0.5,0.5}
\definecolor{Numbers}{rgb}{0.5,0,0}
\definecolor{MatchingBrackets}{rgb}{0.25,0.5,0.5}
\definecolor{Keywords}{rgb}{0,0,1}
\definecolor{self}{rgb}{0,0,0}
\definecolor{Strings}{rgb}{0,0.63,0}
\definecolor{Comments}{rgb}{0,0.63,1}
\definecolor{Backquotes}{rgb}{0,0,0}
\definecolor{Classname}{rgb}{0,0,0}
\definecolor{FunctionName}{rgb}{0,0,0}
\definecolor{Operators}{rgb}{0,0,0}
\definecolor{Background}{rgb}{0.98,0.98,0.98}
\lstdefinelanguage{Python}{
numbers=left,
numberstyle=\footnotesize,
numbersep=1em,
xleftmargin=3em,
framextopmargin=2em,
framexbottommargin=3em,
showspaces=false,
showtabs=false,
showstringspaces=false,
frame=l,
tabsize=4,
basicstyle=\ttfamily\small\setstretch{1},
backgroundcolor=\color{Background},
commentstyle=\color{Comments}\slshape,
stringstyle=\color{Strings},
morecomment=[s][\color{Strings}]{"""}{"""},
morecomment=[s][\color{Strings}]{'''}{'''},
morekeywords={import,from,class,def,for,while,if,is,in,elif,else,not,and,or,print,break,continue,return,True,False,None,access,as,,del,except,exec,finally,global,import,lambda,pass,print,raise,try,assert},
keywordstyle={\color{Keywords}\bfseries},
morekeywords={[2]@invariant,pylab,numpy,np,scipy},
keywordstyle={[2]\color{Decorators}\slshape},
emph={self},
emphstyle={\color{self}\slshape},
}

\linespread{1.3}

\lstinputlisting[language=Python]{code/code_loss.py}
\bigskip

The following snippet also demonstrates how one can use the above loss function to train a neural network using self-distillation. 

\lstinputlisting[language=Python]{code/code_train.py}
\bigskip


\end{document}